\documentclass{article}

\usepackage{amsmath,amsfonts,stmaryrd,amsthm,amssymb,enumitem}
\usepackage{algorithm,booktabs,todonotes}

\usepackage{pifont}

\usepackage{tikz}
\usetikzlibrary{arrows}
\usetikzlibrary{calc}
\usetikzlibrary{shapes}
\usetikzlibrary{fit}
\usetikzlibrary{matrix} 
\usetikzlibrary{positioning}
\usetikzlibrary{decorations.pathreplacing}

\newtheorem{theorem}{Theorem}[section]
\newtheorem{lemma}[theorem]{Lemma}

\newtheorem{definition}[theorem]{Definition}

\renewcommand{\eqref}[1]{Eq.~(\ref{eq:#1})}
\newcommand{\figref}[1]{Figure~\ref{fig:#1}}
\newcommand{\tabref}[1]{Table~\ref{tab:#1}}        
\newcommand{\secref}[1]{Section~\ref{sec:#1}}
\newcommand{\thmref}[1]{Theorem~\ref{thm:#1}}
\newcommand{\lemref}[1]{Lemma~\ref{lem:#1}}

\newcommand{\appref}[1]{Appendix~\ref{app:#1}}

\newcommand{\myalgref}[1]{Alg.~\ref{alg:#1}}

\renewcommand{\P}{\mathbb{P}}
\newcommand{\E}{\mathbb{E}}

\newcommand{\reals}{\mathbb{R}}

\newcommand{\half}{{\frac12}}

\newcommand{\norm}[1]{\|#1\|}

\DeclareMathOperator*{\argmax}{argmax}

\newcommand{\st}{\text{ s.t. }}






\newcommand{\cL}{\mathcal{L}}

\newcommand{\cN}{\mathcal{N}}

\newcommand{\cX}{\mathcal{X}}


\usepackage{geometry}
\usepackage[utf8]{inputenc} 
\usepackage[T1]{fontenc}
\usepackage{lmodern}
\usepackage{hyperref}       
\usepackage{url}            
\usepackage{booktabs}       
\usepackage{amsfonts}       
\usepackage{nicefrac}       
\usepackage{microtype}      
\usepackage{tikz}
\usepackage{tikz-qtree}
\usepackage{bm}
\usepackage{natbib}
\usepackage{algorithmic}

\newcommand{\normone}[1]{\norm{#1}_1}
\newcommand{\bx}{\mathbf{x}}
\newcommand{\bw}{\mathbf{w}}
\newcommand{\bz}{\mathbf{z}}
\newcommand{\bZ}{\mathbf{Z}}
\newcommand{\bone}{\mathbf{1}}
\newcommand{\todolater}[1]{}
\newcommand{\spc}{\mathrm{SC}}
\newcommand{\algname}{\texttt{AWP}}
\newcommand{\pout}{P_o}
\newcommand{\dev}{\mathbb{D}}
\newcommand{\lfs}{\cL}

\newcommand{\papertitle}{Approximating a Target Distribution using Weight Queries}
\title{\papertitle}

\date{}

\author{
	Nadav Barak and Sivan Sabato \\
	Department of Computer Science\\
	Ben Gurion University\\
	Beer-Sheva, Israel \\
	\texttt{barakna@post.bgu.ac.il, sabatos@cs.bgu.ac.il }
}

\begin{document}
	\maketitle
\begin{abstract}
          We consider a novel challenge: approximating a distribution without the ability to randomly sample from that distribution. We study how such an approximation can be obtained using \emph{weight queries}. Given some data set of examples, a weight query presents one of the examples to an oracle, which returns the probability, according to the target distribution, of observing examples similar to the presented example. This oracle can represent, for instance, counting queries to a database of the target population, or an interface to a search engine which returns the number of results that match a given search. 
  
  We propose an interactive algorithm that iteratively selects data set examples and performs corresponding weight queries. The algorithm finds a reweighting of the data set that approximates the weights according to the target distribution, using a limited number of weight queries. We derive an approximation bound on the total variation distance between the reweighting found by the algorithm and the best achievable reweighting. Our algorithm takes inspiration from the UCB approach common in multi-armed bandits problems, and combines it with a new discrepancy estimator and a greedy iterative procedure. In addition to our theoretical guarantees, we demonstrate in experiments the advantages of the proposed algorithm over several baselines. A python implementation of the proposed
  algorithm and of all the experiments can be found at \url{https://github.com/Nadav-Barak/AWP}
\end{abstract}
\section{Introduction}\label{sec:intro}

A basic assumption
in learning and estimation tasks is the availability of a
random sample from the distribution of interest. However, in many cases, 
obtaining such a random sample is difficult or impossible. In this work, we
study a novel challenge: approximating a distribution without the ability to
randomly sample from it.  We consider a scenario in which the
only access to the distribution is via \emph{weight queries}. Given some data set of examples, a weight query presents one of these examples to an oracle, which returns the probability, according to the target distribution, of observing examples which are similar to the presented example. For instance, the available data set may list patients in a clinical trial, and the target distribution may represent the population of patients in a specific hospital, which is only accessible through certain database queries. In this case, the weight query for a specific data set example can be answered using a database counting query, which indicates how many records with the same demographic properties as the presented example exist in the database. A different example of a relevant oracle is that of a search engine for images or documents, which returns the number of objects in its database that are similar to the searched object. 

We study the possibility of using weight queries to find a reweighting of the input data set that approximates the target distribution. Importantly, we make no assumptions on the relationship between the data set and the target distribution. For instance, the data set could be sampled from a different distribution, or be collected via a non-random process. Reweighting the data set to match the true target weights would be easy if one simply queried the target weight of all the data set examples. In contrast, our goal in this work is to study whether a good approximated weighting can be found using a number of weight queries that is independent of the data set size. A data set reweighted to closely match a distribution is often used as a proxy to a random sample in learning and statistical estimation tasks, as done, for instance, in domain adaptation settings (e.g., \citealt{bickel2007discriminative,sugiyama2008direct,bickel2009discriminative}).

We consider a reweighting scheme in which the weights are fully defined by some  partition of the data set to a small number of subsets. Given the partition, the weights of examples in each subset of the partition are uniform, and are set so that the total weight of the subset is equal to its true target weight. 
An algorithm for finding a good reweighting should thus search for a partition such that within each of its subsets, the true example weights are as close to uniform as possible. For instance, in the hospital example, consider a case in which the hospital of interest has more older patients than their proportion in the clinical trial data set, but within each of the old and young populations, the makeup of the patients in the hospital is similar to that in the clinical trial. In this case, a partition based on patient ages will lead to an accurate reweighting of the data set. The goal of the algorithm is thus to find a partition which leads to an accurate reweighting, using a small number of weight queries. 

We show that identifying a good partition using only weight queries of individual examples is impossible, unless almost all examples are queried for their weight. We thus consider a setting in which the algorithm has a limited access to higher order queries, which return the total weight of a subset of the data set. These higher-order queries are limited to certain types of subsets, and the algorithm can only use a limited number of such queries. For instance, in the hospital example, higher-order queries may correspond to database counting queries for some  less specific demographic criteria. 

To represent the available higher order queries for a given problem, we assume that a hierarchical organization of the input data set is provided to the algorithm in the form of a tree, whose leaves are the data set examples. Each internal node in the tree represents the subset of leaves that are its descendants, and the only allowed higher-order queries are those that correspond to one of the internal nodes. 

Given such a tree, we consider only partitions that are represented by some
\emph{pruning} of the tree, where a pruning is a set of internal nodes such
that each leaf has exactly one ancestor in the set. A useful tree is one that
includes a small pruning that induces a partition of the data set into near-uniform subsets, as described above.

\textbf{Main results:} We give an algorithm that for any given tree, finds a near-optimal pruning of size $K$ using only $K-1$ higher order queries and $O(K^3/\epsilon^2)$ weight queries of individual examples, where $\epsilon$ measures the total variation distance of the obtained reweighting. The algorithm greedily splits internal nodes until reaching a pruning of the requested size. To decide which nodes to split, it iteratively makes weight queries of examples based on an Upper Confidence Bound (UCB) approach. Our UCB scheme employs a new estimator that we propose for the quality of an internal node. We show that this is necessary, since a naive estimator would require querying the weight of almost all data set examples. 
Our guarantees depend on a property of the input hierarchical tree that we term the \emph{split quality}, which essentially requires that local node splits of the input tree are not too harmful. We show that any algorithm that is based on iteratively splitting the tree and obtains a non-trivial approximation guarantee requires some assumption on the quality of the tree.

To supplement our theoretical analysis, we also implement the proposed algorithm and report several experiments, which demonstrate its advantage over several natural baselines. A python implementation of the proposed
algorithm and of all the experiments can be found at the following url: \url{https://github.com/Nadav-Barak/AWP}.

\subsection*{Related work}

In classical density estimation, the goal is to estimate the density function of a random variable given observed data \citep{silverman1986density}.
Commonly used methods are based on Parzen or Kernel estimators \citep{wand1994kernel,goldberger2005hierarchical}, expectation maximization (EM) algorithms \citep{mclachlan2007algorithm,figueiredo2002unsupervised} or variational estimation \citep{corduneanu2001variational,mcgrory2007variational}.
Some works have studied active variants of density estimation. In \citet{ghasemi2011active},  examples are selected for kernel density estimation. In \citet{kristan2010online}, density estimation in an online and interactive setting is studied. We are not aware of previous works that consider estimating a distribution using weight queries. 
The domain adaptation framework \citep{kifer2004detecting,ben2007analysis,blitzer2008learning,mansour2009domain,ben2010theory} assumes a target distribution with scarce or unlabeled random examples. In \citet{bickel2007discriminative,sugiyama2008direct,bickel2009discriminative}, reweighting the labeled source sample based on unlabeled target examples is studied. Trade-offs between source and target labels are studied in \citet{kpotufe18a}. 
In \citet{berlind2015active}, a labeled source sample guides target label requests. 

In this work, we assume that the input data set is organized in a hierarchical tree, which represents relevant structures in the data set. This type of input is common to many algorithms that require structure. For instance, such an input tree is used for active learning in \citet{dasgupta2008hierarchical}. In \citet{slivkins2011multi,bubeck2011x}, a hierarchical tree is used to organize different arms in a multi-armed-bandits problem, and in \citet{munos2011optimistic} such a structure is used to adaptively estimate the maximum of an unknown function. 
In \cite{pmlr-v119-cortes20a}, an iterative partition of the domain is used for active learning.

\section{Setting and Notations}
Denote by $\bone$ the all-1 vector; its size will always be clear from context. For an integer $n$, let $[n] := \{1,\ldots,n\}$. For a vector or a sequence $\bx= ( x(1),\ldots,x(n))$, let \mbox{$\normone{\bx} := \sum_{i \in [n]} |x(i)|$} be the $\ell_1$ norm of $\bx$. For a function $f:\cX \rightarrow \reals$ on a discrete domain $\cX$, denote \mbox{$\normone{f} := \sum_{x \in \cX} |f(x)|$}. \label{sec:settings} 

The input data set is some finite set $S \subseteq \cX$.
We assume that the examples in $S$ induce a partition on the domain $\cX$, where the part represented by $x \in S$ is the set of target examples that are similar to it (in some application-specific sense). The target weighting of $S$ is denoted $w^*:S\rightarrow [0,1]$, where $\normone{w^*} = 1$.  $w^*(x)$ is the probability mass, according to the unknown target distribution, of the examples in the domain that are represented by $x$.
For a set $V \subseteq S$, denote $w^*(V) := \sum_{x \in V}w^*(x)$.
The goal of the algorithm is to approximate the target weighting $w^*$ using weight queries, via a partition of $S$ into at most $K$ parts, where $K$ is provided as input to the algorithm.

A \emph{basic weight query} presents some $x \in S$ to the oracle and receives its weight $w^*(x)$ as an answer. To define the available higher-order queries, the input to the algorithm includes a binary hierarchical tree $T$ whose leaves are the elements in $S$. For an internal node $v$, denote by $\lfs_v \subseteq S$ the set of examples in the leaves descending from $v$.
A \emph{higher-order query} presents some internal node $v$ to the oracle, and receives its weight $w^*(\lfs_v)$ as an answer. We note that the algorithm that we propose below uses higher-order queries only for nodes of depth at most $K$. 

A reweighting algorithm attempts to approximate the target weighting by finding a small \emph{pruning} of the tree that induces a weighting on $S$ which is as similar as possible to $w^*$.
Formally, for a given tree, a pruning of the tree is a set of internal nodes such that each tree leaf is a descendant of exactly one of these nodes. Thus, a pruning of the input tree $T$ induces a partition on $S$. The weighting induced by the pruning is defined so that the weights assigned to all the leaves (examples) descending from the same pruning node $v$ are the same, and their total weight is equal to the true total weight $w^*(\lfs_v)$. 
Formally, let $N_v := |\lfs_v|$. For a pruning $P$ and an example $x \in S$, let $\textsf{A}(P,x)$ be the node in $P$ which is the ancestor of $x$.
The weighting $w_P:S \rightarrow \reals^+$ induced by the pruning $P$ is defined as $w_P(x) := w^*_{\textsf{A}(P,x)}/N_{\textsf{A}(P,x)}$.

The quality of a weighting $w_P$ is measured by the total variation distance between the distribution induced by $w_P$ and the one induced by $w^*$. This is equivalent to the $\ell_1$ norm between the weight functions (see, e.g., \citealp{wilmer2009markov}). Formally, the total variation distance between two weight functions $w_1,w_2:S\rightarrow \reals^+$ is
\begin{align*}
\mathrm{dist}(w_1,w_2) & :=  \max_{S' \subseteq S} |w_1(S') - w_2(S')| \\ & \equiv  \half\sum_{x \in S} |w_1(x) - w_2(x)| \equiv \half\normone{w_1 - w_2}.
\end{align*}

For a node $v$, define $\dev_v := \sum_{x \in \cL_v} |w^*_v/N_v - w^*(x)|.$ The \emph{discrepancy} of $w_P$ (with respect to $w^*$) is
\[
  \dev_P := 2\mathrm{dist}(w_P, w^*) = \normone{w_P - w^*} =  \sum_{v \in P}\dev_v.
\]
Intuitively, this measures the distance of the weights in each pruning node from uniform weights. 
  More generally, for any subset $G$ of a pruning, define $\dev_G := \sum_{v \in G} \dev_v$. We also call $\dev_G,\dev_v$ the discrepancy of $G$, $v$, respectively.
  The goal of the algorithm is thus to find a pruning $P$ with a low discrepancy $w_P$, using a small number of weight queries.

\section{Estimating the Discrepancy}\label{sec:estimator}
As stated above, the goal of the algorithm is to find a pruning with a low discrepancy. A necessary tool for such an algorithm is the ability to estimate the discrepancy $\dev_v$ of a given internal node using a small number of weight queries.
In this section, we discuss some challenges in estimating the discrepancy and present an estimator that overcomes them. 

First, it can be observed that the
discrepancy of a node cannot be reliably estimated from basic weight queries
alone, unless almost all leaf weights are queried. To see this, consider two cases: one in which all the leaves in $\lfs_v$ have an equally small weight, and one in which this holds for all but one leaf, which has a large weight. The discrepancy $\dev_v$ in the first case is zero, while it is large in the second case. However, it is impossible to distinguish between the cases using basic weight queries, unless they happen to include the heavy leaf. A detailed example of this issue is provided in \appref{estimator}.

To overcome this issue, the algorithm uses a higher order query to obtain the total weight of the internal node $w_v^*$, in addition to a random sample of basic weight queries of examples in $\lfs_v$. However, even then, a standard empirical estimator of the discrepancy, obtained by
aggregating \mbox{$|w^*_v/N_v - w^*(x)|$} over sampled examples, can have a 
large estimation error due to the wide range of possible values (see example in \appref{estimator}). We thus propose a different estimator, which circumvents this issue. 
The lemma below gives this estimator and proves a concentration bound for it. In this lemma, the leaf weights are represented by $\bw = (w_1,\ldots,w_n)$ and the discrepancy of the node is $\dev(\bw)$. In more general terms, this lemma gives an estimator for the uniformity of a set of values. 

\begin{lemma}\label{lem:bounds}
  Let $\bw = (w_1,\ldots,w_n)$ be a sequence of non-negative real values with a known $\normone{\bw}$. Let $W := \norm{\bw}/n$, and   
   $\dev(\bw) := \normone{\bw - W\cdot \mathbf{1}}$. Let $U$ be a uniform distribution over the indices in $[n]$, and suppose that $m$ i.i.d.~samples $\{I_i\}_{i \in [m]}$ are drawn from $U$. Denote $Z_i := w_{I_i}$ for $i \in [m]$, and $\bZ := (Z_1,\ldots,Z_m)$. Let the estimator for $\dev(\bw)$ be:
   \[
     \hat{\dev}(\bw) := \normone{\bw} +\frac{n}{m}( \normone{\bZ - W\cdot \bone} -
     \normone{\bZ}).
     \]
Then, with a  probability at least $1-\delta$, 
\begin{align*}
|\dev(\bw) - \hat{\dev}(\bw)| \le \normone{\bw}\sqrt{2\ln(2/\delta)/m} .
\end{align*}
\end{lemma}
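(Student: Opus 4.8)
The plan is to establish three things: (i) $\hat{\dev}(\bw)$ is an unbiased estimator of $\dev(\bw)$; (ii) the per-sample contribution to the estimator lies in a bounded interval of width only $2W$, even though the individual weights $w_j$ may be arbitrarily larger than $W$; and (iii) Hoeffding's inequality then yields the stated concentration. The only nontrivial idea is (ii); everything else is bookkeeping.

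First I would introduce the per-sample variable $Y_i := |Z_i - W| - Z_i$ for $i \in [m]$ (using $|Z_i| = Z_i$ since the weights are non-negative), so that $\hat{\dev}(\bw) = \normone{\bw} + \frac{n}{m}\sum_{i \in [m]} Y_i$. Taking expectations over the uniform draw of a single index, $\E[Y_i] = \frac1n\sum_{j\in[n]} |w_j - W| - \frac1n\sum_{j\in[n]} w_j = \frac1n\dev(\bw) - W$, and since $\normone{\bw} = nW$ we obtain $\E[\hat{\dev}(\bw)] = nW + n\bigl(\tfrac1n\dev(\bw) - W\bigr) = \dev(\bw)$. Consequently $\hat{\dev}(\bw) - \dev(\bw) = \frac{n}{m}\sum_{i\in[m]}(Y_i - \E[Y_i])$, and it remains to control this centered sum.

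The crux is bounding the range of $Y_i$. If $Z_i \ge W$ then $|Z_i - W| = Z_i - W$, so $Y_i = -W$; if $Z_i < W$ then $Y_i = (W - Z_i) - Z_i = W - 2Z_i \in (-W, W]$ because $0 \le Z_i < W$. In either case $Y_i \in [-W, W]$, an interval of length $2W$. This cancellation — the summand $-Z_i$ exactly killing the large part of $|Z_i - W|$ whenever $Z_i$ is large — is precisely the reason the estimator carries the extra $-\normone{\bZ}$ term, and it is the step I expect to be the main (conceptual) obstacle: a naive bound on $|Z_i - W|$ alone would only give a range controlled by $\max_j w_j$, which is exactly the failure mode of the standard empirical estimator discussed before the lemma.

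Finally, the $Y_i$ are i.i.d.\ and take values in an interval of length $2W$, so Hoeffding's inequality gives, with probability at least $1-\delta$, $\bigl|\frac1m\sum_{i\in[m]}(Y_i - \E[Y_i])\bigr| \le W\sqrt{2\ln(2/\delta)/m}$. Multiplying through by $n$ and using $nW = \normone{\bw}$ yields $|\hat{\dev}(\bw) - \dev(\bw)| = n\,\bigl|\frac1m\sum_{i\in[m]}(Y_i - \E[Y_i])\bigr| \le \normone{\bw}\sqrt{2\ln(2/\delta)/m}$, which is the claimed bound.
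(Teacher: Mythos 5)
Your proof is correct and follows essentially the same route as the paper's: you define the same auxiliary variable ($Y_i$ is the paper's $Z_i'$), perform the identical case analysis to show it lies in $[-W,W]$ so that the range is $2W$ rather than controlled by $\max_j w_j$, and then apply Hoeffding's inequality and rescale by $n = \normone{\bw}/W$. The only cosmetic difference is that you first state unbiasedness explicitly; the paper folds that computation directly into the final chain of equalities.
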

\begin{proof}
  Let $Z_i' = |Z_i - W| - Z_i$. If $Z_i \geq W$, we have $Z_i' = -W$. Otherwise,  $0 \leq Z_i < W$, and therefore $Z_i' = W - 2Z_i \in [-W, W]$.

  Combining the two cases, we get $\P[Z_i' \in [-W,W]] = 1$.
Thus, by Hoeffding's inequality, with a probability at least $1-\delta$, we have
\begin{align*}
  &|\E[Z'_1] - \frac{1}{m}\sum_{i\in [m]}Z'_i| \leq W\cdot \sqrt{2\ln(2/\delta)/m}, \text{ and }\\
  &\E[Z'_1] = \frac{1}{n}(\normone{\bw - W\cdot \mathbf{1}} - \normone{\bw}) = \frac{1}{n}(\dev(\bw)-\normone{\bw}).
\end{align*}

      In addition,
      \[
        \frac{1}{m}\sum_{i\in [m]}Z'_i = \frac{1}{m}(\normone{\bZ - W\cdot \bone}-\normone{\bZ}) = \frac{1}{n}(\hat{\dev}(\bw)-\normone{\bw}).
        \]
        Therefore, with a probability at least $1-\delta$,
        \begin{align*}
      &\big|\dev(\bw) - \normone{\bw} - (\hat{\dev}(\bw)-\normone{\bw})\big|
      \leq
      nW\sqrt{2\ln(2/\delta)/m}  \\
      &\quad = \normone{\bw}\sqrt{2\ln(2/\delta)/m},
        \end{align*}
    as claimed.
    \end{proof}

\section{Main result: the \algname\ algorithm}\label{sec:main}

We propose the \algname\ (Approximated Weights via Pruning) algorithm, listed in \myalgref{alg}. \algname\ uses
weight queries to find a pruning $P$, which induces a weight function $w_P$ as defined in \secref{settings}. \algname\ gets the following inputs: a binary tree $T$ whose leaves are the data set elements, the requested pruning size $K \geq 2$, a confidence parameter $\delta \in (0,1)$, and a
constant $\beta > 1$, which controls the trade-off between the number of weight queries requested by \algname\ 
 and the approximation factor that it obtains.
Finding a pruning with a small discrepancy while limiting the number of weight queries involves
several challenges, since the discrepancy of any given pruning is unknown in advance, and the number of possibilities is exponential in $K$.
\algname\ starts with the trivial pruning, which includes only the root node. It iteratively samples weight queries of leaves to estimate the discrepancy of nodes in the current pruning. \algname\ decides in a greedy manner when to \emph{split} a node in the current pruning, that is, to replace it in the pruning with its two child nodes. It stops after reaching a pruning of size $K$.  

We first provide the notation for \myalgref{alg}.
For a node $v$ in $T$, $M_v$ is the number of weight queries of examples in $\lfs_v$ requested so far by the algorithm for estimating $\dev_v$.  
 The sequence of $M_v$ weights returned by the oracle for these queries is denoted $\bz_v$. Note that although an example in $\lfs_u$ is also in $\lfs_v$ for any ancestor $v$ of $u$, weight queries used for estimating $\dev_v$ are not reused for estimating $\dev_u$, since this would bias the estimate.  \algname\ uses the estimator for $\dev_v$ provided in \lemref{bounds}. In \algname\ notation, the estimator is: 
\begin{equation}\label{eq:hatdev}
  \hat{\dev}_v := w^*_v +\frac{N_v}{M_v}( \normone{\bz_v - \frac{w^*_v}{N_v}\cdot \bone} - \normone{\bz_v}).
\end{equation}
\algname\ iteratively samples weight queries of examples for nodes in the current pruning, until it can identify a node which has a relatively large discrepancy. The iterative sampling procedure takes
inspiration from the upper-confidence-bound (UCB) approach, common in
best-arm-identification problems \citep{audibert2010best}; In our case, the goal is to find the best node up to a multiplicative factor. 
In each iteration, the node with the
maximal known upper bound on its discrepancy is selected, and the weight of a random example from its leaves is queried. 
To calculate the upper bound, we define
\begin{equation}\label{eq:deltav}
  \Delta_v := w^*_v \cdot \sqrt{2 \ln(2K \pi^2 M_v^2/ (3\delta))/M_v}.
\end{equation}
We show in \secref{analysis} that $|\dev_v - \hat{\dev}_v|\leq \Delta_v$ with
a high probability. Hence, the upper bound for $\dev_v$ is set to $\hat{\dev}_v + \Delta_v$. 
Whenever a node from the pruning is identified as having a large discrepancy in comparison with the other nodes, it is replaced by its child nodes, thus increasing the pruning size by one. Formally, \algname\ splits a node if with a high probability,
\mbox{$\forall v' \in P\setminus \{v\}, \dev_v \geq \dev_{v'}/\beta$}. 
The factor of $\beta$ trades off the optimality of the selected node in terms of its discrepancy with the number of queries needed to identify such a node and perform a split. In addition, it makes sure that a split can be performed even if all nodes have a similar discrepancy. The formal splitting criterion is defined via the following Boolean function:
\begin{equation}\label{eq:split}
  \spc(v,P) \!=\! \left\{\beta( \hat{\dev}_v - \Delta_v) \!\ge\! \max_{v' \in P\setminus\{v\}}\hat{\dev}_{v'} + \Delta_{v'}\right\}.
\end{equation}

To summarize, \algname\ iteratively selects a node using the UCB criterion, and queries the weight of a random leaf of that node. Whenever the splitting criterion holds, \algname\ splits a node that satisfies it. This is repeated until reaching a pruning of size $K$. In addition, when a node is added to the pruning, its weight is queried for use in \eqref{hatdev}. 

We now provide our guarantees for \algname. The properties of the tree $T$ affect the quality of the output weighting. First, the pruning found by \algname\ cannot be better than the best pruning of size $K$ in $T$. Thus, we guarantee an approximation factor relative to that pruning.
In addition, we require the tree to be sufficiently nice, in that a child node should have a somewhat lower discrepancy than its parent. Formally, we define the notion of \emph{split quality}. 
\newcommand{\vsample}{{v_s}}
\newcommand{\mycomment}[1]{$\vartriangleright$ \emph{#1}}
\begin{algorithm}[t]
  \caption{\algname: Approximated weighting of a data set via a low-discrepancy pruning}
  \label{alg:alg}
\begin{algorithmic}[1]
  \STATE {\bfseries Input:}{A binary tree $T$ with examples as leaves; Maximum pruning size $K \geq 2$; $\delta \in (0,1)$; $\beta > 1$.}
  \STATE {\bfseries Output:}{$\pout$: A pruning of $T$ with a small $\dev_{\pout}$}
  \STATE Initializations: $P \leftarrow \{\text{The root node of }T\}$;
  \STATE For all nodes $v$ in $T$: $\bz_v \leftarrow (), M_v \leftarrow 0$.
  \WHILE{ $|P| < K$}
  \STATE \mycomment{Select a node to sample from}
\STATE  Set $\vsample \leftarrow \argmax_{v \in P} (\hat{\dev}_v + \Delta_v)$. 
\STATE Draw uniformly random example $x$ from $\lfs_{\vsample}$
\STATE Query the true weight of $x$,  $w^*(x)$.
\STATE $M_\vsample \leftarrow M_\vsample + 1$, $\bz_\vsample \leftarrow \bz_\vsample \circ w^*(x)$.
\STATE \mycomment{Decide whether to split a node in the pruning:}
\WHILE {$\exists v \in P \st \spc(v,P)$ holds (\eqref{split})}\label{line:sc}
\STATE Let $v_R,v_L$ be children of such a $v$.
\STATE Set $P \leftarrow P \setminus \{v\} \cup \{v_R, v_L\}$.\label{line:split}
\STATE Query $w^*_{v_R}$; set $w^*_{v_L}  \leftarrow w^*_{v} - w^*_{v_R}$. \label{line:nodeq}
\ENDWHILE
\ENDWHILE
\STATE return $\pout := P$.

    \end{algorithmic}
  \end{algorithm}

\begin{definition}[Split quality]\label{def:split}
  Let $T$ be a hierarchical tree for $S$, and $q \in (0,1)$. $T$ has a \emph{split quality} $q$ if for any two nodes $v,u$ in $T$ where $u$ is a child of $v$, we have $\dev_u \leq q \dev_v$. 
\end{definition}
This definition is similar in nature to other tree quality notions, such as the taxonomy quality of \citet{slivkins2011multi}, though the latter restricts weights and not the discrepancy.  We note that Def.~\ref{def:split} could be relaxed, for instance by allowing different values of $q$ in different tree levels. Nonetheless, we prove in \appref{split} that greedily splitting the node with the largest discrepancy cannot achieve a reasonable approximation factor without some restriction on the input tree, and that this also holds for other types of greedy algorithms. 
It is an open problem whether this limitation applies to all greedy algorithms. Note also that even in trees with a split quality less than $1$, splitting a node might increase the total discrepancy; see \secref{analysis} for further discussion. Our main result is the following theorem.

\begin{theorem}\label{thm:main}
  Suppose that \algname\ gets the inputs $T$, $K \geq 2$, $\delta \in (0,1)$, $\beta > 1$ and let $\pout$ be its output. Let $Q$ be a pruning of $T$ of size $K$ with a minimal $\dev_Q$. With a probability at least $1-\delta$, we have:
  \begin{itemize}
  \item If $T$ has split quality $q$ for some $q < 1$, then  \[\dev_{\pout} \le 2\beta (\frac{\log(K)}{\log(1/q)}+1) \dev_Q.\]
    \item \algname\ requests $K-1$ weight queries of internal nodes.
  \item \algname\ requests $\tilde{O}( (1+\frac{1}{\beta-1})^2K^3\ln(1/\delta)/\dev_{\pout}^2)$ weight queries of examples.\footnote{The $\tilde{O}$ notation hides constants and logarithmic factors; These are explicit  in the proof of the theorem.}
  \end{itemize}
\end{theorem}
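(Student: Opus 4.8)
The plan is to condition on a single high-probability ``good event'' and then establish the three claims separately; the approximation bound is the substantive part, and the two counting bounds follow from it together with the structure of the UCB-style sampling.

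\textbf{The good event.} Let $\cE$ be the event that every higher-order query returns the exact node weight and that $|\dev_v - \hat{\dev}_v| \le \Delta_v$ holds simultaneously for every node $v$ that ever enters the pruning and every value of $M_v$ reached during the run. For a fixed pair $(v,M_v)$, applying \lemref{bounds} with $\normone{\bw}=w^*_v$ and confidence $\delta' := 3\delta/(K\pi^2 M_v^2)$ makes the bound's right-hand side exactly $\Delta_v$ (\eqref{deltav}), so it fails with probability at most $\delta'$; summing over $M_v\in\nats$ using $\sum_{m\ge1}m^{-2}=\pi^2/6$ gives $\delta/(2K)$ per node, and since a run of prunings of size at most $K$ involves at most $2K-1$ distinct nodes (the root, plus two new nodes per split), a union bound gives $\P[\cE^c]\le\delta$. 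All the arguments below are deterministic given $\cE$.

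\textbf{Greedy guarantee and the approximation bound.} On $\cE$, whenever $\spc(v,P)$ triggers a split we have $\beta\dev_v \ge \beta(\hat{\dev}_v-\Delta_v) \ge \max_{v'\in P\setminus\{v\}}(\hat{\dev}_{v'}+\Delta_{v'}) \ge \max_{v'\in P\setminus\{v\}}\dev_{v'}$, so greedy always splits a node whose discrepancy is within a factor $\beta$ of the maximum in the current pruning. Writing $P_i$ for the pruning of size $i$ and $m_i:=\max_{v\in P_i}\dev_v$, split quality ($\dev_{\text{child}}\le q\,\dev_{\text{parent}}$) gives $m_{i+1}\le m_i$, and since $\pout=P_K$ has $K$ nodes, $\dev_{\pout}\le K m_K\le K m_i$ for every $i$. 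The approximation bound comes from a combinatorial comparison of $\pout$ with an optimal size-$K$ pruning $Q$: I would split the nodes of $\pout$ into those contained in some node of $Q$ (``over-refined'') and those strictly containing at least two nodes of $Q$ (``under-refined''), observing by a type-I/type-II counting argument that every pruning of size below $K$ contains an under-refined node, so greedy always has a split available. Split quality makes the discrepancy contributed at depth at least $d$ below a node $u\in Q$ geometrically small in $d$ (roughly $q^d\dev_u$), while the greedy guarantee bounds how far greedy over-refines below any $u$: since each split is of a near-maximum-discrepancy node, and an under-refined node is always present to keep the running maximum $m_i$ from collapsing, greedy cannot descend more than $\sim\log_{1/q}K$ levels below $u$ before the residual discrepancy there is negligible, so the over-refined part contributes $O\!\big(\tfrac{\log K}{\log(1/q)}\dev_Q\big)$; the under-refined nodes are bounded symmetrically, using that such a node could only survive in $\pout$ because greedy instead kept splitting nodes of comparable or larger discrepancy. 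Combining the two estimates yields $\dev_{\pout}\le 2\beta\big(\tfrac{\log K}{\log(1/q)}+1\big)\dev_Q$. \emph{This charging argument is the main obstacle}: obtaining the clean constant $2\beta(\log_{1/q}K+1)$ — rather than one carrying a spurious extra factor of $K$ — is precisely what forces us to use that greedy attacks the maximum-discrepancy node, and \appref{split} shows that without some structural restriction on the tree no iterative-splitting scheme obtains a comparable guarantee.

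\textbf{The two counting bounds.} The internal-node count is immediate: $P$ grows from size $1$ to $K$ one node at a time, so exactly $K-1$ splits occur, each issuing one higher-order query at line~\ref{line:nodeq} (the root's weight equals $1$ and needs no query). For the sample count, on $\cE$ a split is forced as soon as every $v'\in P_i$ satisfies $\Delta_{v'}\le(\beta-1)m_i/(2(\beta+1))$: the exact maximizer $v^\star$ (with $\dev_{v^\star}=m_i$) then has $\beta(\hat{\dev}_{v^\star}-\Delta_{v^\star})\ge\beta(\dev_{v^\star}-2\Delta_{v^\star})\ge\dev_{v^\star}+2\max_{v'}\Delta_{v'}\ge\max_{v'\ne v^\star}(\hat{\dev}_{v'}+\Delta_{v'})$, so $\spc(v^\star,P_i)$ holds. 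Since $w^*_{v'}\le1$, inverting \eqref{deltav} (absorbing the $\ln M_{v'}$ term into a $\ln\ln$ factor) shows this is reached after $\tilde{O}\big((1+\tfrac{1}{\beta-1})^2 m_i^{-2}\ln(K/\delta)\big)$ samples of $v'$; using $m_i\ge\dev_{\pout}/K$ this is $\tilde{O}\big((1+\tfrac{1}{\beta-1})^2 K^2\dev_{\pout}^{-2}\ln(K/\delta)\big)$ per node, and since a node's sample count stops growing once the split that removes it occurs, summing over the at most $2K-1$ nodes ever present gives the claimed total $\tilde{O}\big((1+\tfrac{1}{\beta-1})^2 K^3\ln(1/\delta)/\dev_{\pout}^2\big)$. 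A routine point en route: the UCB index is $+\infty$ at $M_v=0$, so every node currently in the pruning is sampled and its $\Delta_v$ genuinely decays; and if $\dev_{\pout}=0$ both bounds are vacuous.
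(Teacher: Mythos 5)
Your good event, the $K{-}1$ higher-order query count, and the example-query bound all follow the paper's approach and are substantially correct. Two small remarks on the counting part: the paper bounds $M_{\vsample}$ at each iteration directly by contradiction on the UCB-selected node (using that the $\argmax\hat{\dev}$ node $u$ satisfies $\Delta_u\le\Delta_{\vsample}$), rather than your contrapositive ``all $\Delta_{v'}$ small forces a split,'' which is sound but leaves implicit that the UCB dynamics eventually drive all the confidence widths down; and the paper lower-bounds $\max_{v\in P}\dev_v\ge\dev_{\pout}/(2K)$ via \lemref{sons} rather than split-quality monotonicity of $m_i$, so the third bullet holds without the split-quality hypothesis, whereas your version relies on it.

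The genuine gap is the approximation factor, which you yourself flag as the main obstacle, and your sketch misattributes where the $\log_{1/q}K$ comes from. You locate it in the over-refined part (nodes of $\pout$ strictly below $Q$-nodes); in fact that part is the easy one, since \lemref{sons} gives $\dev_{P_d}\le 2\dev_{Q_a}\le 2\dev_Q$ with no logarithm. The logarithm arises in controlling the under-refined part $P_a$ (strict ancestors of $Q$-nodes). The paper's \lemref{devbound} introduces $r$, the smallest-discrepancy node ever split, uses the splitting criterion plus split quality to get $\dev_{P_a}\le\beta|P_a|\dev_r$, and then bounds $\dev_r$ by a dichotomy on $n:=|P_d'|$, where $P_d'$ is the set of nodes both of whose children lie in $P_d$: if $n\ge\alpha|P_a|$, pigeonhole plus \lemref{sons} give $\dev_r\le 2\dev_{Q_a}/n$; if $n<\alpha|P_a|$, a path-length count (Claim 4: $L\ge|P_a|-n$) places some node of $P_d'$ at least $|P_a|/n-1$ levels below its $Q_a$-ancestor, so split quality forces $\dev_r\le q^{|P_a|/n-1}\dev_{Q_a}$. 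Choosing $\alpha=\log(1/q)/(\log|P_a|+\log(1/q))$ balances the cases and gives $\theta\le 2/\alpha$, hence the stated constant. You would also need the non-emptiness reduction (Claim 1: if any of $P_a,P_d,Q_a,Q_d$ is empty they all are, whence $\pout=Q$). None of this appears in your write-up; the phrases ``greedy cannot descend more than $\sim\log_{1/q}K$ levels'' and ``bounded symmetrically'' gesture toward the right neighborhood but do not replace the counting argument that actually yields $2\beta(\log_{1/q}K+1)$.
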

Thus, an approximation factor with respect to the best achievable discrepancy is obtained, while keeping the number of higher-order weight queries minimal and bounding the number of weight queries of examples requested by the algorithm. The theorem is proved in the next section.

\section{Analysis}\label{sec:analysis} 
In this section, we prove the main result, \thmref{main}.
First, we prove the correctness of the definition of $\Delta_v$ given in \secref{main}, using the concentration bound given in \lemref{bounds} for the estimator $\hat{\dev}_v$. The proof is provided in \appref{prob}. 
\begin{lemma}\label{lem:prob}
  Fix inputs $T,K,\delta,\beta$ to $\algname$. Recall that $P$ is the pruning updated by \algname\ during its run. The following event holds with a probability at least $1 - \delta$ on the randomness of \algname:
  \begin{equation}\label{eq:e0}
  \begin{split}
  E_0 := \{ & \text{At all times during the run of \algname,} \\
  	 & \forall v \in P, |\dev_v - \hat{\dev}_v| \le \Delta_v \}.
  \end{split}
  \end{equation}
\end{lemma}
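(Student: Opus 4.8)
The plan is to lift the fixed-sample-size concentration bound of \lemref{bounds} to a statement that holds simultaneously for every node that is ever in the pruning and for every number of samples drawn for it. This is done by combining an anytime union bound over sample sizes with a union bound over the at most $2K$ nodes that ever enter the pruning, while being careful about the adaptivity of \algname\ (the number of samples per node, and even which nodes enter $P$, are random).

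First I would fix a convenient probability space: for every node $w$ of $T$, draw in advance an i.i.d.\ infinite sequence $\xi^w$ of examples drawn uniformly at random from $\lfs_w$, and have \algname\ read off the length-$M_w$ prefix of $\xi^w$ whenever it needs samples from $\lfs_w$. This faithfully realizes the draws of \algname\ — in particular, draws made for two different nodes are independent even when one is an ancestor of the other, matching the remark in \secref{main} that queries are not reused — and makes the whole run a deterministic function of $\{\xi^w\}_w$. Now fix a node $w$ and an integer $m\ge 1$ and apply \lemref{bounds} with $\bw$ equal to the vector of true leaf weights of $\lfs_w$ (so $\normone{\bw}=w^*_w$, which \algname\ knows since it queries $w^*_v$ for every $v$ it adds to $P$; moreover $W=w^*_w/N_w$ and $\dev(\bw)=\dev_w$), with the first $m$ entries of $\xi^w$ as the samples, and with confidence parameter $\delta_{w,m}:=3\delta/(K\pi^2 m^2)$. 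The estimator of \lemref{bounds} then coincides with $\hat\dev_w$ of \eqref{hatdev} at the moment $M_w=m$, and $w^*_w\sqrt{2\ln(2/\delta_{w,m})/m}=\Delta_w$ exactly as in \eqref{deltav}, so with probability at least $1-\delta_{w,m}$ we have $|\dev_w-\hat\dev_w|\le\Delta_w$ at that moment. Letting $\tilde B_w$ be the event that this fails for some $m\ge 1$, a union bound over $m$ gives $\P[\tilde B_w]\le\sum_{m\ge1}\delta_{w,m}=\tfrac{3\delta}{K\pi^2}\cdot\tfrac{\pi^2}{6}=\tfrac{\delta}{2K}$, and $\tilde B_w$ is a function of $\xi^w$ alone.

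Next I would control which nodes are relevant. Since \algname\ begins from the trivial pruning and performs at most $K-1$ splits before halting, each split replacing a node by its two children, the random set $\cV$ of nodes that are ever members of $P$ satisfies $|\cV|\le 2K-1$. Moreover \algname\ only ever queries leaves of nodes currently in $P$, so no entry of $\xi^v$ is consumed before $v$ enters $P$; hence the event $\{v\in\cV\}$ is a function of $\{\xi^w\}_{w\ne v}$ only, and is therefore independent of $\tilde B_v$. Consequently
\[
\P[\exists v\in\cV:\ \tilde B_v]\ \le\ \sum_{w}\P[w\in\cV]\,\P[\tilde B_w]\ \le\ \frac{\delta}{2K}\sum_w\P[w\in\cV]\ =\ \frac{\delta}{2K}\,\E|\cV|\ \le\ \frac{\delta}{2K}(2K-1)\ \le\ \delta.
\]
On the complement of $\{\exists v\in\cV:\tilde B_v\}$, at every point of the run and for every $v\in P$ we have $|\dev_v-\hat\dev_v|\le\Delta_v$ whenever $M_v\ge 1$, and the inequality holds vacuously when $M_v=0$ (under the usual convention $\hat\dev_v+\Delta_v=+\infty$ for $M_v=0$, so such a node is sampled first). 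This is precisely the event $E_0$, so $\P[E_0]\ge 1-\delta$.

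The step I expect to be the main obstacle is exactly the adaptivity: $M_v$ and even the identity of the nodes that end up in $P$ are random and statistically coupled with the observed weights, so a naive union bound is not directly justified. The resolution is the observation that a node's sample sequence is untouched until the node joins the pruning and the decision of whether and when it joins never inspects that sequence, together with the $\sum_m 1/m^2$ device that upgrades the fixed-$m$ guarantee of \lemref{bounds} to an anytime one at the price of the $M_v^2$ term inside $\Delta_v$.
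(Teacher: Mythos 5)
Your proof is correct and follows essentially the same route as the paper's: a $\sum_m 1/m^2$ anytime union bound lifts \lemref{bounds} to all sample sizes, a union bound over the at most $2K-1$ nodes ever entering $P$ controls the node set, and adaptivity is handled by observing that no sample from $\lfs_v$ is drawn before $v$ joins $P$. The only difference is presentational: the paper phrases the independence step via $\P[\neg D_v \mid v_i = v] = \P[\neg D_v]$, whereas you make it explicit by pre-drawing infinite i.i.d.\ sequences per node so that $\{v\in\cV\}$ is measurable with respect to $\{\xi^w\}_{w\neq v}$; this is a cleaner formalization of the same idea.
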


Next, we bound the increase in discrepancy that could be caused by a node split. Even in trees with a split quality less than $1$, a split could increase the discrepancy of the pruning. The next lemma bounds this increase, and shows that this bound is tight. The proof is provided in \appref{clowerbound}. 
\begin{lemma}\label{lem:sons}
  Let $r$ be the root of a hierarchical tree and let $P$ be a pruning of this tree. Then $\dev_P \leq 2\dev_r$. Moreover, for any $\epsilon > 0$, there exists a tree with a split quality $q < 1$ and a pruning $P$ of size $2$ such that $\dev_P \geq (2-\epsilon)\dev_r$.
\end{lemma}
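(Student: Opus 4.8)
\textbf{The upper bound $\dev_P\le 2\dev_r$.} Write $W_v := w^*_v/N_v$ for short, so that $\dev_P=\sum_{v\in P}\sum_{x\in\cL_v}|W_v-w^*(x)|$. The plan is to route each term through the root average $W_r=w^*_r/N_r$ via the triangle inequality: $|W_v-w^*(x)|\le |W_v-W_r|+|W_r-w^*(x)|$. Since $P$ is a pruning, the sets $\{\cL_v\}_{v\in P}$ partition $\cL_r$, so $\sum_{v\in P}\sum_{x\in\cL_v}|W_r-w^*(x)|=\sum_{x\in\cL_r}|W_r-w^*(x)|=\dev_r$. For the remaining contribution, I would use that $\sum_{x\in\cL_v}(w^*(x)-W_r)=w^*_v-N_vW_r$, hence $N_v|W_v-W_r|=\bigl|\sum_{x\in\cL_v}(w^*(x)-W_r)\bigr|\le\sum_{x\in\cL_v}|w^*(x)-W_r|$; summing over $v\in P$ and again using that the $\cL_v$ partition $\cL_r$ gives $\sum_{v\in P}N_v|W_v-W_r|\le\dev_r$. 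Combining the two estimates yields $\dev_P\le 2\dev_r$.

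\textbf{Tightness.} I would exhibit an explicit tree. Fix an integer $n\ge 2/\epsilon$ and a scale $p>0$. Let the root $r$ have two children $u,v$, where $\cL_u$ consists of $n-1$ leaves of weight $p$ together with one leaf of weight $0$, and $\cL_v$ consists of $n-1$ leaves of weight $p$ together with one leaf of weight $2p$; complete each side into a binary tree by attaching the atypical leaf directly to $u$ (resp.\ $v$) and grouping the $n-1$ equal-weight leaves under the sibling. Then $N_r=2n$ and $w^*_r=2np$, so $W_r=p$ and $\dev_r=p+p=2p$. A direct computation gives $W_u=(n-1)p/n$, $W_v=(n+1)p/n$, and $\dev_u=\dev_v=2(n-1)p/n$, so the pruning $P=\{u,v\}$ (the unique pruning of size $2$) satisfies $\dev_P=4(n-1)p/n=(2-2/n)\dev_r\ge(2-\epsilon)\dev_r$.

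\textbf{Split quality of the construction.} Every internal node strictly below $u$ (resp.\ $v$) spans only leaves of weight $p$ and therefore has discrepancy $0$, and the two atypical leaves are leaves, also with discrepancy $0$. Hence the only parent--child pairs whose parent has nonzero discrepancy are $(r,u)$ and $(r,v)$, for which $\dev_u/\dev_r=\dev_v/\dev_r=(n-1)/n$. Thus $T$ has split quality $q=(n-1)/n<1$, completing the lower-bound part. (Incidentally, the same example shows a split can strictly increase discrepancy: for $n\ge 3$, $\dev_P=(2-2/n)\dev_r>\dev_r$.)

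\textbf{Expected obstacle.} The upper bound is routine. The real work is the tightness example: a naive symmetric construction gives only $\dev_P=\dev_r$, because equality in the two inequalities above simultaneously demands that each pruning block lie entirely on one side of $W_r$ \emph{and} that the triangle-inequality steps be near-tight, and these requirements conflict unless the bulk of each block sits exactly at $W_r$ while all the discrepancy is carried by the two extreme leaves ($0$ and $2p$). Recognizing that this is the only viable shape—and that it simultaneously makes the split-quality verification trivial, since the bulk sub-trees are perfectly uniform—is the crux of the argument.
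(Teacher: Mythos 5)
Your proof is correct and takes essentially the same approach as the paper: the upper bound is proved by the same two-fold triangle inequality through the root average, and your tightness example (one zero-weight leaf on one side, one double-weight leaf on the other, bulk of uniform leaves filling out both halves) is the paper's construction up to reparametrization. The only cosmetic difference is that the paper normalizes the total weight to $1$ (i.e., sets $p = 1/(2n)$ in your notation), which you could add since the setting assumes $\normone{w^*}=1$.
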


We now prove the two main parts of \thmref{main}, starting with the approximation factor of \algname. In the proof of the following lemma, the proof of some claims is omitted. The full proof is provided in \appref{prooflem44}. 
 \begin{lemma}\label{lem:devbound}
   Fix inputs $T,K,\delta,\beta$ to $\algname$, and suppose that $T$ has a split quality $q\in (0,1)$. Let $\pout$ be the output of $\algname$. Let $E_0$ be the event defined in \eqref{e0}. In any run of \algname\ in which $E_0$ holds, for any pruning $Q$ of $T$ such that $|Q| = K$, we have $\dev_{\pout} \le 2\beta (\frac{\log(K)}{\log(1/q)}+1) \dev_Q$. 
       \end{lemma}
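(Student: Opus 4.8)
The plan is to combine the per-node concentration event $E_0$ with the structural split-quality assumption to track the discrepancy of the pruning maintained by \algname\ over the course of its run. Write $P_2, P_3, \ldots, P_K = \pout$ for the successive prunings of sizes $2, 3, \ldots, K$, and let $v_j$ be the node split to go from $P_j$ to $P_{j+1}$ (with $v_1 = r$, the root, split first). The key local estimate is that when \algname\ splits a node $v$ in a pruning $P$, the splitting criterion \eqref{split} together with $E_0$ forces $\dev_v \ge \dev_{v'}/\beta$ for every other $v' \in P$. Indeed, on $E_0$ we have $\hat{\dev}_v - \Delta_v \le \dev_v$ and $\hat{\dev}_{v'} + \Delta_{v'} \ge \dev_{v'}$, so $\spc(v,P)$ implies $\beta\dev_v \ge \beta(\hat{\dev}_v - \Delta_v) \ge \max_{v'\ne v}(\hat{\dev}_{v'}+\Delta_{v'}) \ge \max_{v'\ne v}\dev_{v'}$. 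In particular the split node is, up to a factor $\beta$, the maximal-discrepancy node in the current pruning, so $\dev_v \ge \dev_{P}/(\beta|P|)$.

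First I would use this to bound how far the discrepancy of the final pruning can be from that of the optimal pruning $Q$. Since $Q$ has size $K$ and $\pout$ has size $K$, and any pruning can be obtained from another by a sequence of splits and merges, the cleanest route is: bound $\dev_{\pout}$ in terms of $\dev_r$ using the split-quality chain, and separately lower-bound $\dev_Q$ in terms of $\dev_r$. For the first, note that each split replaces a node $v$ by its two children $v_L, v_R$, each with discrepancy at most $q\dev_v$ by split quality, so the contribution of the subtree rooted at $v$ after splitting is at most $2q\dev_v$; iterating the splits along the path from the root, the deepest split node in \algname's run has depth at most... — more precisely, I would argue that every node ever placed in \algname's pruning lies at depth at most $d$ where $d$ is controlled by the fact that we perform only $K-1$ splits and each split picks a near-maximal node. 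The guarantee $\dev_u \le q^{\mathrm{depth}(u)}\dev_r$ gives $\dev_{\pout} = \sum_{u \in \pout}\dev_u \le K\cdot q^{?}\dev_r$, which is too weak; so instead the right argument is the telescoping one: track $\dev_{P_j}$ directly, showing $\dev_{P_{j+1}} \le \dev_{P_j} - \dev_{v_j} + 2q\dev_{v_j} = \dev_{P_j} - (1-2q)\dev_{v_j}$ when $q < 1/2$, and handle general $q<1$ via \lemref{sons} applied locally (a single split at most doubles the contribution of that node, i.e. $\dev_{P_{j+1}} \le \dev_{P_j} + \dev_{v_j}$, and split quality gives the geometric decay down the tree).

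The heart of the argument — and the step I expect to be the main obstacle — is comparing $\dev_{\pout}$ with $\dev_Q$ rather than with $\dev_r$. I would proceed by considering the "meet" of $\pout$ and $Q$: for each node $v_j$ that \algname\ splits, either $v_j$ is at or above some node of $Q$, or it is strictly below. Using split quality, any node of $\pout$ that lies strictly below a node $u \in Q$ has discrepancy at most $q^{(\text{distance})}\dev_u$, so the part of $\dev_{\pout}$ coming from such nodes is geometrically dominated by $\dev_Q$; summing the geometric series $\sum_{i\ge 0} q^i = 1/(1-q)$ and accounting for the branching contributes the $\log(K)/\log(1/q)$ factor (the depth to which \algname\ can descend below $Q$ is at most $\log_{1/q}$ of the ratio of discrepancies, which is where $K$ enters). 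For nodes of $\pout$ at or above $Q$, I would use the near-optimality of each split: if \algname\ split a node $v$ that sits above several nodes of $Q$, then $\dev_v \ge \dev_{v'}/\beta$ for all current pruning nodes, and in particular $v$'s discrepancy is comparable to the maximum, which combined with $|Q|=K$ and the fact that \algname\ makes exactly $K-1$ splits lets me charge $\dev_v$ against $\dev_Q/K$ up to the factor $\beta$. Assembling the two contributions yields $\dev_{\pout} \le 2\beta\left(\frac{\log K}{\log(1/q)}+1\right)\dev_Q$. The delicate bookkeeping is making the "above $Q$ / below $Q$" case split precise and ensuring the constants and the single factor of $2$ (which comes from \lemref{sons}, i.e. a split can nearly double a node's contribution) combine correctly; I would organize it by induction on the number of splits, maintaining the invariant that $\dev_{P_j}$ is at most $2\beta(\frac{\log j}{\log(1/q)}+1)$ times the discrepancy of the best size-$j$ pruning.
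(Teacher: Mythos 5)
Your proposal correctly identifies several of the ingredients the paper actually uses: the decomposition of $\pout$ relative to $Q$ into ``at $Q$,'' ``strictly above $Q$'' and ``strictly below $Q$'' (the paper's $R$, $P_a$, $P_d$), the local consequence of $E_0$ and $\spc$ (a split node $v$ satisfies $\beta\dev_v \ge \dev_{v'}$ for every other $v'$ in the current pruning), and the use of \lemref{sons} to control the below-$Q$ part. But the assembly has a genuine gap, and the mechanism you propose for producing the factor $\frac{\log K}{\log(1/q)}$ is not where that factor actually comes from.

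Two concrete problems. First, you attribute the $\frac{\log K}{\log(1/q)}$ factor to the ``below $Q$'' part via a geometric series $\sum_{i\ge 0}q^i = \frac{1}{1-q}$, but this quantity has no $K$-dependence, and it does not match the stated bound. In fact the below-$Q$ part is handled with a clean factor of $2$: the nodes of $P_d$ below each $u\in Q_a$ form a pruning of the subtree at $u$, so \lemref{sons} gives $\dev_{P_d}\le 2\dev_{Q_a}$ directly. Second, and more importantly, you never control the ``above $Q$'' part $P_a$. Your sketch says the near-optimality of each split ``lets me charge $\dev_v$ against $\dev_Q/K$,'' but $\beta\dev_v \ge \dev_{v'}$ is a \emph{lower} bound on the split node's discrepancy; it gives no upper bound on the discrepancies of the un-split nodes sitting in $P_a$ in the final pruning. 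The missing idea is to let $r$ be the node of \emph{smallest} discrepancy that \algname\ ever splits; then on $E_0$ every node of the pruning at the time $r$ was split has discrepancy at most $\beta\dev_r$, split quality propagates this to all of $\pout$, and hence $\dev_{P_a}\le\beta|P_a|\dev_r$. One then must bound $\dev_r/\dev_{Q_a}$ — this is where $\frac{\log K}{\log(1/q)}$ actually enters, via a case split on $n=|P_d'|$ (nodes both of whose children are in $P_d$) versus $\alpha|P_a|$ with $\alpha=\frac{\log(1/q)}{\log|P_a|+\log(1/q)}$. When $n$ is large, \lemref{sons} applied to $P_d'$ forces some node (hence $r$) to have discrepancy $\le 2\dev_{Q_a}/n$; when $n$ is small, a combinatorial lemma ($L\ge|P_a|-n$ where $L=\sum_{v\in P_d'}l_v$) forces some node of $P_d'$ to sit deep below $Q_a$, and split quality then gives $\dev_r\le\dev_{Q_a}q^{|P_a|/n-1}$. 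Your proposal has neither the identification of $r$ nor the $n$-vs-$\alpha|P_a|$ dichotomy, and the inductive invariant you mention (``$\dev_{P_j}$ at most $2\beta(\frac{\log j}{\log(1/q)}+1)$ times the best size-$j$ pruning'') would be hard to maintain, since a single split can increase the pruning discrepancy and the best size-$j$ pruning is not nested in the best size-$(j+1)$ pruning.
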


\begin{proof}
	
  Let $Q$ be some pruning such that $|Q| = K$.  Partition $\pout$ into $R,P_a$ and $P_d$, where $R := \pout \cap Q$, $P_a \subseteq \pout$ is the set of strict ancestors of nodes in $Q$, and $P_d \subseteq \pout$ is the set of strict descendants of nodes in $Q$.  Let $Q_a \subseteq Q$  be the ancestors of the nodes in $P_d$ and let $Q_d\subseteq Q$ be the descendants of the nodes in $P_a$, so that $R,Q_d$ and $Q_a$ form a partition of $Q$. First, we prove that we may assume without loss of generality that $P_a,P_d,Q_a,Q_d$ sets are non-empty. 

  \textbf{Claim 1}: 
  If any of the sets $P_a,P_d,Q_a,Q_d$ is empty then the statement of the lemma holds.

  The proof of Claim 1 is deferred to \appref{prooflem44}.
  	 Assume henceforth that $P_a,P_d,Q_a,Q_d$ are non-empty.  Let $r$ be the node with the smallest discrepancy out of the nodes that were split by \algname\ during the entire run. Define $\theta := |P_a|\cdot \dev_r/\dev_{Q_a}$ if $\dev_{Q_a}>0$  and $\theta := 0$ otherwise.

  \textbf{Claim 2}: $\dev_{\pout} \leq \max(2,\beta \theta) \dev_Q.$

  \textbf{Proof of Claim 2}: 
	 We bound the discrepancies of $P_d$ and of $P_a$ separately.
	 For each node $u\in Q_a$, denote by $P(u)$ the descendants of $u$ in $P_d$. These form a pruning of the sub-tree rooted at $u$. In addition, the sets $\{P(u)\}_{u \in Q_a}$ form a partition of $P_d$. Thus, by the definition of discrepancy and \lemref{sons}, 
	 \begin{equation}\label{eq:pd}
		\dev_{P_d} = \sum_{u \in Q_a} \dev_{P(u)} \le \sum_{u \in Q_a} 2 \dev_u = 2 \dev_{Q_a}.
              \end{equation}
              Let $P$  be the pruning when \algname\ decided to split node $r$. By the definition of the splitting criterion $\spc$ (\eqref{split}), for all $v \in P\setminus \{r\}$, at that time it held that $\beta(\hat{\dev}_r - \Delta_r) \ge \hat{\dev}_{v} + \Delta_{v}$. Since $E_0$ holds, we have $\dev_r \geq \hat{\dev}_r - \Delta_r$ and $\hat{\dev}_{v} + \Delta_{v} \geq \dev_v$. Therefore,  $\forall v \in P\setminus \{r\}, \beta \dev_r \geq \dev_v$.

              Now, any node $v'\in \pout \setminus P$ is a descendant of some node $v \in P$. Since $T$ has split quality $q$ for $q<1$, we have $\dev_{v'} \leq \dev_v$. Therefore, for all $v' \in \pout$, $\dev_{v'} \leq \beta \dev_r$. In particular, $\dev_{P_a} \equiv \sum_{v\in P_a} \dev_v \leq \beta |P_a|\dev_r$. Since all nodes in $Q_a$ were split by \algname\ $\dev_{Q_a} = 0$ implies $\dev_r = 0$, therefore in all cases $\dev_{P_a} \leq \beta \theta \dev_{Q_a}$. Combining this with \eqref{pd}, we get that
    \begin{equation*}
    \begin{split}
    \dev_{\pout} & = \dev_{R} +\dev_{P_d}  + \dev_{P_a} \le \dev_R +  2 \dev_{Q_a}+ \beta \theta \dev_{Q_a} \\ & \leq \max(2,\beta \theta) \dev_Q,
    \end{split}
\end{equation*}
which completes the proof of Claim 2.

It follows from Claim 2 that to bound the approximation factor, it suffices to bound $\theta$. Let $P_d'$ be the set of nodes both of whose child nodes are in $P_d$ and denote
$n:=|P_d'|$. In addition, define
\[
  \alpha := \frac{\log(1/q)}{\log(|P_a|)+\log(1/q)} \leq 1.
\]
We now prove that $\theta \leq 2/\alpha$ by considering two complementary cases, $n \geq \alpha|P_a|$ and $n < \alpha|P_a|$. The following claim handles the first case. 

\textbf{Claim 3:} if $n \geq \alpha|P_a|$, then $\theta \leq 2/\alpha$.

\textbf{Proof of Claim 3}: 
Each node in $P_d'$
has an ancestor in $Q_a$, and no ancestor in $P_d'$. Therefore, $P_d'$ can be partitioned to subsets
according to their ancestor in $Q_a$, and each such subset is a part of some
pruning of that ancestor. Thus, by \lemref{sons}, 
$\dev_{P_d'} \leq 2 \dev_{Q_a}$. Hence, for some node $v \in P'_d$, $\dev_v \leq 2 \dev_{Q_a}/n$. It follows from the definition of $r$ that
$\dev_r \leq 2 \dev_{Q_a}/n$. Hence, $\theta \leq 2|P_a|/n$. Since $n \geq \alpha|P_a|$, we have $\theta \leq 2/\alpha$ as claimed.

We now prove this bound hold for the case $n < \alpha |P_a|$. For a node $v$ with an ancestor in $Q_a$, let $l_v$ be the path length from this ancestor to $v$, and define $L := \sum_{v \in P'_d} l_v$. We start with an auxiliary Claim 4.

\textbf{Claim 4}: $L \geq |P_a| - n$.

The proof of Claim 4 is deferred to \appref{prooflem44}. We use Claim 4 to prove the required upper bound on $\theta$ in Claim 5. 

        \textbf{Claim 5}: if $n < \alpha|P_a|$, then $\theta \leq 2/\alpha$.

\textbf{Proof of Claim 5}: 
It follows from Claim 4 that for some node $v \in P_d'$, $l_v \geq (|P_a| - n)/n = |P_a|/n-1 > 0$, where the last inequality follows since $n < \alpha |P_a| < |P_a|$. Letting $u \in Q_a$ be the ancestor of $v$ in $Q_a$, we have by the split quality  $q$ of $T$ that $\dev_v \leq \dev_u\cdot q^{\frac{|P_a|}{n}-1}$. Since $u \in Q_a$, we have $\dev_u \leq \dev_{Q_a}$. In addition, $\dev_r \leq \dev_v$ by the definition of $r$. Therefore, $\dev_r \leq \dev_{Q_a}\cdot q^{\frac{|P_a|}{n}-1}$. 
Since $n < |P_a|\alpha$ and $q<1$, from the definition of $\alpha,\theta$ we have

  $\theta \le |P_a|q^{\frac{|P_a|}{n}-1}\leq 1 \leq 2/\alpha.$

This proves Claim 5.

Claims 3 and 5 imply that in all cases, $\theta \leq 2/\alpha$. By Substituting $\alpha$, we have that
\[
  \theta \leq 2 (\frac{\log(|P_a|)}{\log(1/q)}+1) \leq 2 (\frac{\log(K)}{\log(1/q)}+1).
  \]
Placing this upper bound in the statement of Claim 2 concludes of the lemma. 
\end{proof}

Next, we prove an upper bound on the number of weight queries requested by \algname. 

\newcommand{\dmax}{\dev_{\mathrm{max}}}
\begin{lemma}\label{lem:queries}
  Let $\beta > 1$. Consider a run of \algname\ in which $E_0$ holds, and fix some iteration. Let $P$ be the current pruning, and for a node $v$ in $T$, denote \[
    \alpha_v := \frac{\beta}{\beta-1}\cdot \frac{w_v^*}{\max_{u\in P} \dev_{u}}.
  \] Then in this iteration, the node $\vsample$ selected by \algname\ satisfies $M_\vsample < 22\alpha^2_\vsample(\ln(\alpha^4_\vsample K/\delta)+10)$.

\end{lemma}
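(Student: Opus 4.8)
The idea is that \algname\ cannot have sampled the selected node $v_s$ too many times, because once its confidence radius $\Delta_{v_s}$ has shrunk below a constant fraction of $\dmax := \max_{u\in P}\dev_u$, the splitting criterion $\spc(v_s,P)$ must already be satisfied, contradicting the fact that we are still in the sampling phase (line~8 of \myalgref{alg}) rather than having just split. So the whole argument reduces to (i) a short deduction giving a lower bound on $\Delta_{v_s}$, followed by (ii) inverting the definition of $\Delta_{v_s}$ to get an upper bound on $M_{v_s}$.

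For step (i) I would record two facts about the algorithm's state at the moment $v_s$ is chosen. First, $v_s \in \argmax_{v\in P}(\hat\dev_v+\Delta_v)$. Second, $\spc(v,P)$ is false for every $v\in P$, in particular for $v_s$: the inner split loop terminated at the end of the previous iteration and $P$ has not changed since (the first iteration, where $P$ is just the root, is checked directly). On the event $E_0$ of \lemref{prob} we also have $|\dev_v-\hat\dev_v|\le\Delta_v$ for all $v\in P$. Let $u^\star\in P$ attain $\dev_{u^\star}=\dmax$. Then, using that $v_s$ maximizes the upper confidence bound and $E_0$, $\hat\dev_{v_s}+\Delta_{v_s}\ge \hat\dev_{u^\star}+\Delta_{u^\star}\ge\dev_{u^\star}=\dmax$. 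From the failure of $\spc(v_s,P)$ (\eqref{split}), $\beta(\hat\dev_{v_s}-\Delta_{v_s}) < \max_{v'\in P\setminus\{v_s\}}(\hat\dev_{v'}+\Delta_{v'}) \le \hat\dev_{v_s}+\Delta_{v_s}$, where the last inequality is again because $v_s$ is the overall maximizer. Rearranging (using $\beta>1$) gives $\hat\dev_{v_s} < \tfrac{\beta+1}{\beta-1}\Delta_{v_s}$, and substituting into the previous display yields $\dmax \le \hat\dev_{v_s}+\Delta_{v_s} < \tfrac{2\beta}{\beta-1}\Delta_{v_s}$, i.e.\ $\Delta_{v_s} > \tfrac{\beta-1}{2\beta}\dmax$.

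For step (ii) I would substitute $\Delta_{v_s} = w^*_{v_s}\sqrt{2\ln(2K\pi^2 M_{v_s}^2/(3\delta))/M_{v_s}}$ from \eqref{deltav}, divide the bound $\Delta_{v_s}>\tfrac{\beta-1}{2\beta}\dmax$ through by $\dmax$, and use $\tfrac{w^*_{v_s}}{\dmax} = \tfrac{\beta-1}{\beta}\alpha_{v_s}$ to cancel every $\beta$-dependent factor, arriving at $\alpha_{v_s}\sqrt{2\ln(2K\pi^2 M_{v_s}^2/(3\delta))/M_{v_s}} > \tfrac12$, equivalently
\[
  M_{v_s} < 8\alpha_{v_s}^2\,\ln\!\big(2K\pi^2 M_{v_s}^2/(3\delta)\big).
\]
It then remains to turn this self-referential inequality into the explicit one claimed. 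Writing it as $M < c\ln(aM^2)$ with $c = 8\alpha_{v_s}^2$ and $a = 2K\pi^2/(3\delta)$, I would apply the tangent-line bound $\ln x \le x/s + \ln s - 1$ with $s = 4c$ to absorb the $2c\ln M$ term into $M/2$, solve for $M$ to get $M < 2c\ln a + 4c\ln(4c)$, expand $\ln a$ and $\ln(4c)$, collect the numerical constants (using $2\pi^2/3<7$ and $\ln 32<3.5$), and use $64\alpha_{v_s}^2\ln\alpha_{v_s} = 16\alpha_{v_s}^2\ln\alpha_{v_s}^4$ together with $\ln(K/\delta)+\ln\alpha_{v_s}^4 = \ln(\alpha_{v_s}^4 K/\delta)$; this gives $M_{v_s} < 16\alpha_{v_s}^2\ln(\alpha_{v_s}^4 K/\delta) + 141\alpha_{v_s}^2 \le 22\alpha_{v_s}^2(\ln(\alpha_{v_s}^4 K/\delta)+10)$.

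The conceptual core is the two-line deduction $\Delta_{v_s} > \tfrac{\beta-1}{2\beta}\dmax$ in step (i); everything else is bookkeeping with the Hoeffding-type quantities already set up in \lemref{bounds} and \lemref{prob}. I expect the fiddliest part to be the constant-chasing that closes step (ii) — in particular checking the final numerical inequality in the regime where $\alpha_{v_s}$ is very small so that $\ln(\alpha_{v_s}^4K/\delta)$ is negative (there the inequality $M<c\ln(aM^2)$ forces $\ln a$ to be enormous, which is what saves it), and disposing of the degenerate cases $\dmax=0$ (then $\alpha_{v_s}=\infty$ and the bound is vacuous) and $M_{v_s}=0$ (a node freshly created by a split, handled trivially).
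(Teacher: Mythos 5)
Your proof is correct and reaches the paper's key intermediate inequality, $\Delta_{\vsample}>\tfrac{\beta-1}{2\beta}\dmax$, but your step~(i) is a genuine streamlining of the paper's argument. The paper introduces an auxiliary node $u\in\argmax_{v\in P}\hat{\dev}_v$ and argues by contradiction: assuming $\Delta_{\vsample}\le\theta:=\tfrac{\beta-1}{2\beta}\dmax$, it derives $\hat{\dev}_u\ge\hat{\dev}_{\vsample}\ge\dmax-\Delta_{\vsample}$ and $\Delta_u\le\Delta_{\vsample}\le\theta$, and shows through the resulting chain that $\spc(u,P)$ would hold, contradicting termination of the previous split loop. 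You instead use the failure of $\spc(\vsample,P)$ directly: combined with $\vsample$ being the UCB maximizer (so the maximum on the right of \eqref{split} is at most $\hat{\dev}_{\vsample}+\Delta_{\vsample}$), two lines of rearrangement give $\hat{\dev}_{\vsample}<\tfrac{\beta+1}{\beta-1}\Delta_{\vsample}$ and then $\dmax\le\hat{\dev}_{\vsample}+\Delta_{\vsample}<\tfrac{2\beta}{\beta-1}\Delta_{\vsample}$, with no auxiliary node and no contradiction. For step~(ii), you solve the self-referential inequality $M<c\ln(aM^2)$ with a tangent-line bound on $\ln$, whereas the paper rewrites $c\ln(aM^2)=\phi\ln(\mu M)$ with $\phi=2c$, $\mu=\sqrt{a}$ and invokes its standalone \lemref{mathineq} (if $p<\phi\ln(\mu p)$ then $p<e\phi\ln(e\mu\phi)$). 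Both are routine ways to close the recursion; the paper's route hides exactly the constant-chasing that you correctly flag as the fiddliest part of your plan (in particular the regime of very small $\alpha_{\vsample}$, which the paper's lemma absorbs uniformly, and which your plan handles by the observation that $M\ge1$ and small $\alpha_{\vsample}$ together force $\ln a$ to be large). The degenerate cases $M_{\vsample}=0$ and $\dmax=0$ are set aside at the outset in both proofs, exactly as you propose.
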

\begin{proof}
  Recall that the next weight query to be sampled by \algname\ is set to $\vsample \in \argmax_{v\in P} (\hat{\dev}_{v} + \Delta_{v})$. First, if some nodes $v \in P$ have $M_v = 0$, they will have $\Delta_v = \infty$, thus one of them will be set as $\vsample$, in which case the bound on $M_\vsample$ trivially holds. Hence, we assume below that for all $v \in P$, $M_v \geq 1$. Denote $\dmax := \max_{v\in P} \dev_{v}$.
   Let $u \in \argmax_{v\in P} \hat{\dev}_{v}$ be a node with a maximal estimated discrepancy.
	Since $E_0$ holds, for all $v \in P$ we have $\hat{\dev}_{v} + \Delta_{v} \ge \dev_v$. Thus, by definition of $\vsample$, $\hat{\dev}_{\vsample} + \Delta_{\vsample} \ge \dmax$ which implies $\hat{\dev}_u \geq \hat{\dev}_{\vsample} \geq \dmax - \Delta_{\vsample}$. Therefore, 
	\begin{align*}
	\beta (\hat{\dev}_{u} - \Delta_{u})  & = \hat{\dev}_{u} + (\beta -1)\hat{\dev}_{u} - \beta \Delta_{u} \\ & \ge \hat{\dev}_{u} + (\beta -1)(\dmax - \Delta_{\vsample}) -\beta \Delta_{u}.
	\end{align*}
	Denote $\theta := (\beta -1)\dmax/(2\beta)$, 
	and assume for contradiction that $\Delta_{\vsample} \le \theta$. By the definitions of $u$ and $\vsample$, we have $\Delta_{u} \leq \Delta_{\vsample} \leq \theta$. Thus, from the inequality above and the definitions of $\theta, u, \vsample$, 
	\begin{align}\label{eq:nodesplit}
          &\beta (\hat{\dev}_{u} - \Delta_{u})   \ge \hat{\dev}_{u} + (\beta - 1)(\dmax - \theta) - \beta\theta  \\
                                               &\quad =\hat{\dev}_{u} + \theta \ge \hat{\dev}_{\vsample} + \Delta_{\vsample} \geq \max_{z \in P\setminus \{u\}} (\hat{\dev}_{z} + \Delta_{z}), \notag
	\end{align}
        implying that $\spc(u,P)$ holds. But this means that the previous iteration should have split this node in the pruning, a contradiction.
        Therefore, $\Delta_{\vsample} > \theta$.
        Since by \eqref{deltav}, $\Delta_\vsample \equiv w^*_\vsample \sqrt{2 \ln( 2K \pi^2 M_\vsample^2 / (3 \delta))/M_\vsample}$, it follows that
        \[
          M_\vsample < \frac{2{w_\vsample^*}^2}{\theta^2}\cdot \ln(\frac{2K \pi^2 M_\vsample^2}{3 \delta}).
        \]
        Denoting $\phi = {4w_\vsample^*}^2/\theta^2$, $\mu = \sqrt{2K \pi^2 / (3 \delta)}$, this is equivalent to $M_\vsample < \phi \ln(\mu M_\vsample)$.  By \lemref{mathineq} in \appref{mathineq}, this implies $M_\vsample < e\phi \ln(e\mu\phi)$. Noting that $\phi = 16\alpha_v^2$ and bounding constants, this gives the required bound.
    \end{proof}
        
               Lastly, we combine the lemmas to prove the main theorem.
               
\begin{proof}[Proof of \thmref{main}]
  Consider a run in which $E_0$ holds. By \lemref{prob}, this occurs with a probability of $1-\delta$. If $T$ has a split quality $q < 1$, the approximation factor follows from  \lemref{devbound}.
  Next, note that \algname\ makes a higher-order query of a node only if it is the right child of a node that was split in line \ref{line:split} of \myalgref{alg}. Thus, it makes $K-1$ higher-order weight queries. This proves the first two claims.
  
  We now use \lemref{queries} to bound the total number of weight queries of examples under $E_0$. First, we lower bound $\max_{v\in P} \dev_v$ for any pruning $P$ during the run of \algname. Let $u \in \argmax_{v \in \pout} \dev_v$. By the definition of $u$, we have $\dev_u \ge \dev_{\pout} / K$. Now, at any iteration during the run of \algname, some ancestor $v$ of $u$ is in $P$. By \lemref{sons}, we have $\dev_v \geq \dev_u  / 2$. Therefore, $\max_{v\in P} \dev_v \geq \dev_{\pout}/(2K)$. 
  Thus, by \lemref{queries}, at any iteration of \algname, $\vsample$ is set to some node in $P$ that satisfies \mbox{$M_\vsample < 22\alpha^2_\vsample(\ln(\alpha^4_\vsample K/\delta)+10)$}, where \mbox{$\alpha_\vsample := \frac{\beta}{\beta-1}\cdot \frac{2K w_v^*}{\dev_{\pout}}$}. Hence, \algname\ takes at most \mbox{$T_v :=22\alpha^2_\vsample(\ln(\alpha^4_\vsample K/\delta)+10)+1$} samples for node $v$, and so the  the total number of example weight queries taken by \algname\ is at most $\sum_{v \in V} T_v$, where $V$ is the set of nodes that participate in $P$ at any time during the run. To bound this sum, note that for any pruning $P$ during the run of \algname, we have $\sum_{v\in P}w_v^* = 1$. Hence, $\sum_{v\in P}{w_v^*}^2 \leq 1$. Since there are $K$ different prunings during the run, we have $\sum_{v\in V}{w_v^*}^2 \leq K$. Substituting $\alpha_v$ by its definition and rearranging, we get that the total number of example weight queries by \algname\ is $\tilde{O}( (1+\frac{1}{\beta-1})^2K^3\ln(1/\delta)/\dev_{\pout}^2)$.
 \end{proof}

\section{Experiments}\label{sec:exps}
\newcommand{\weightalg}{\texttt{WEIGHT}}
\newcommand{\staticalg}{\texttt{UNIFORM}}
\newcommand{\empalg}{\texttt{EMPIRICAL}}

We report experiments that compare \algname\ to several natural baselines. The full results of the experiments described below, as well as results for additional experiments, are reported in  \appref{exps}. A python implementation of the proposed
algorithm and all experiments can be found at \url{https://github.com/Nadav-Barak/AWP}

The implementation of \algname\ includes two practical improvements: First, we use an empirical Bernstein concentration bound \citep{MaurerPo09} to reduce the size of $\Delta_v$ when possible; This does not affect the correctness of the analysis. See \appref{bernstein} for details.
Second, for all algorithms, we take into account the known weight values of single examples in the output weighting, as follows. For $v \in \pout$, let $S_v$ be the examples in $\lfs_v$ whose weight was queried. Given the output pruning $P$, we define the weighting $w_P'$ as follows. For $x \in S_v$, $w_P'(x) := w^*(x)$; for $x \in \lfs_v \setminus S_v$, we set $w'_P(x) := (w^*_v - \sum_{x \in S_v}w^*(x))/(N_v - |S_v|)$. In all the plots, we report the normalized output distance $\mathrm{dist}(w'_P,w^*) = \normone{w'_P - w^*}/2 \in [0,1]$, which is equal to half the discrepancy of $w_P'$.  
\begin{figure}[h]
  \begin{center}
    \includegraphics[width=0.45\textwidth]{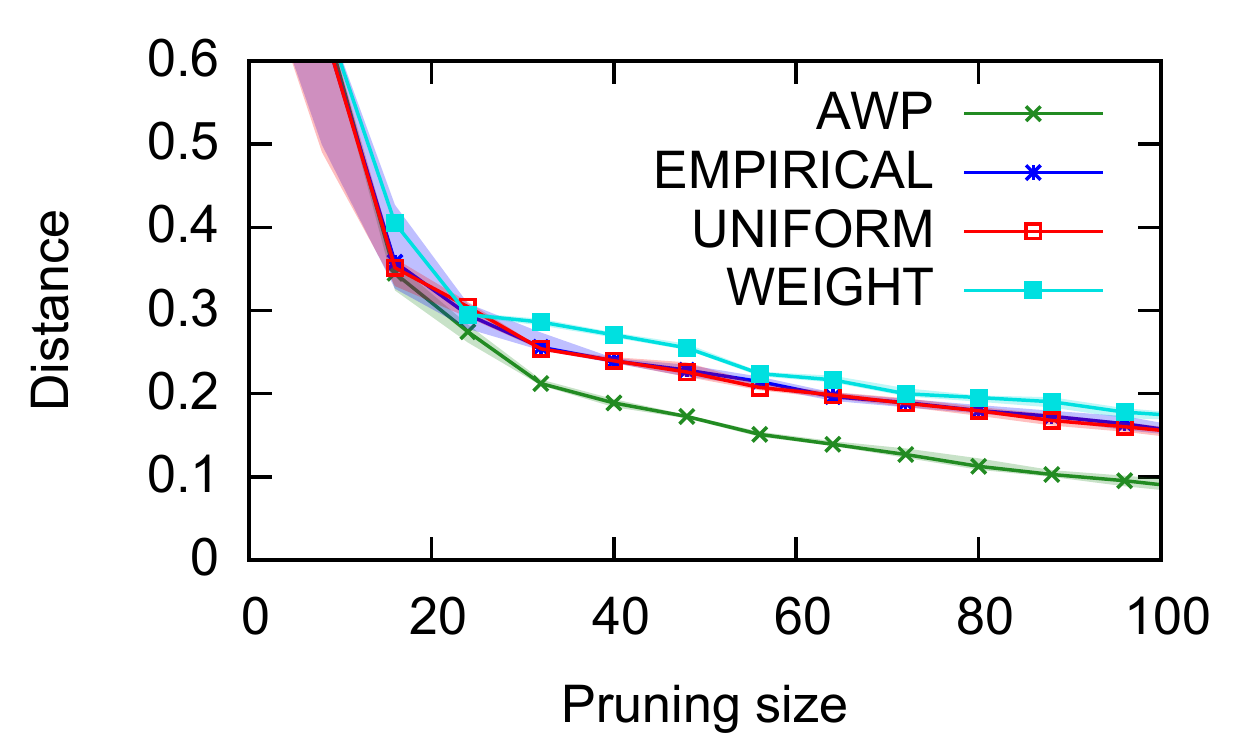}
    \includegraphics[width=0.45\textwidth]{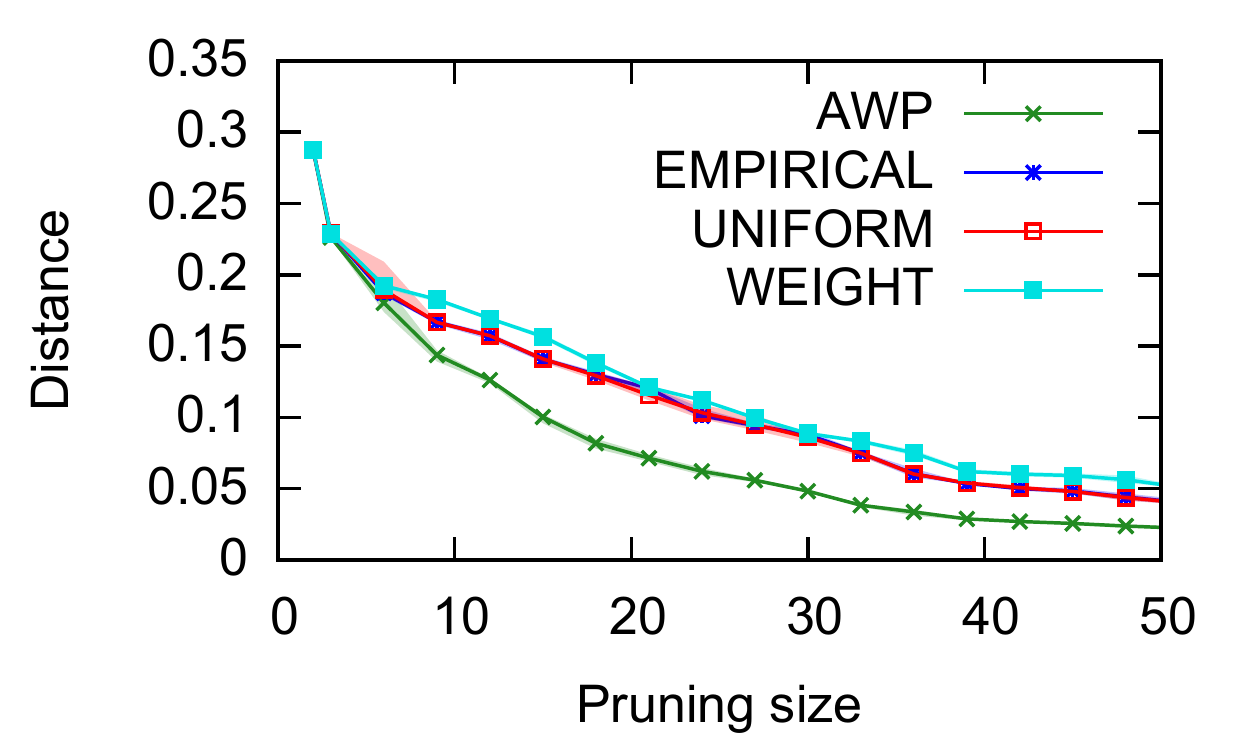}
    \\
    \includegraphics[width=0.45\textwidth]{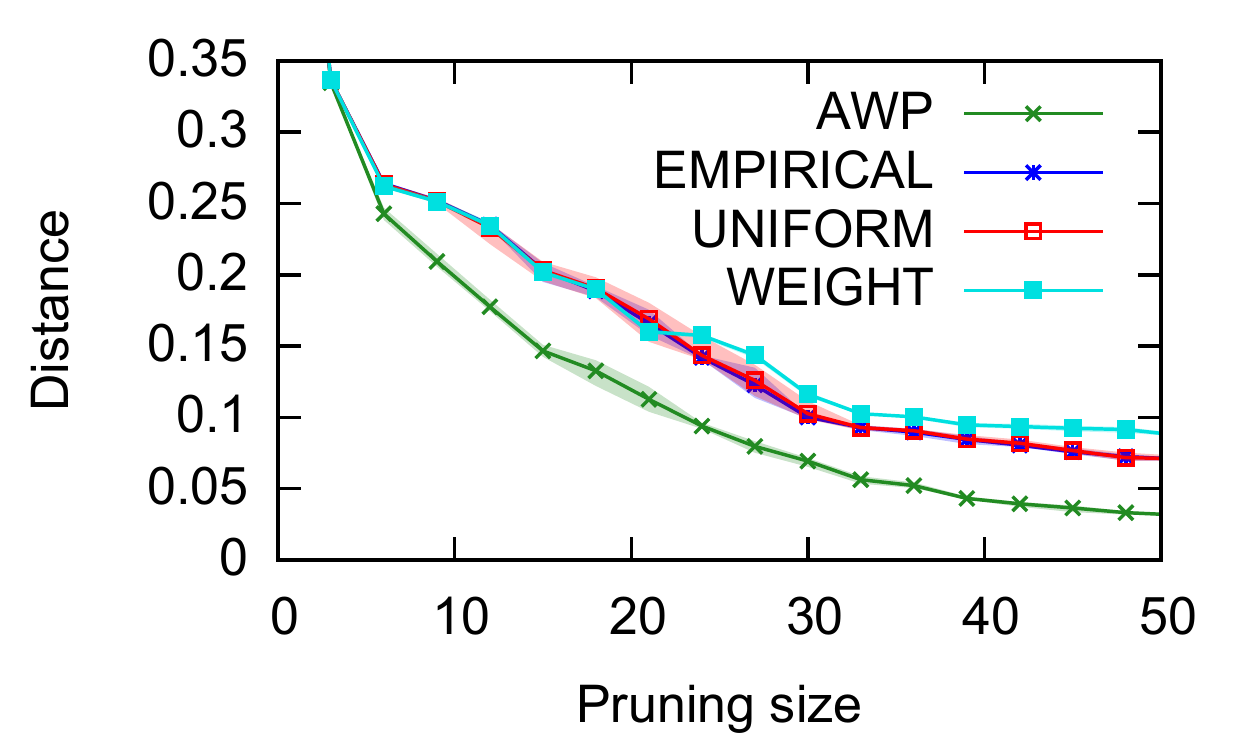} 
	\includegraphics[width=0.45\textwidth]{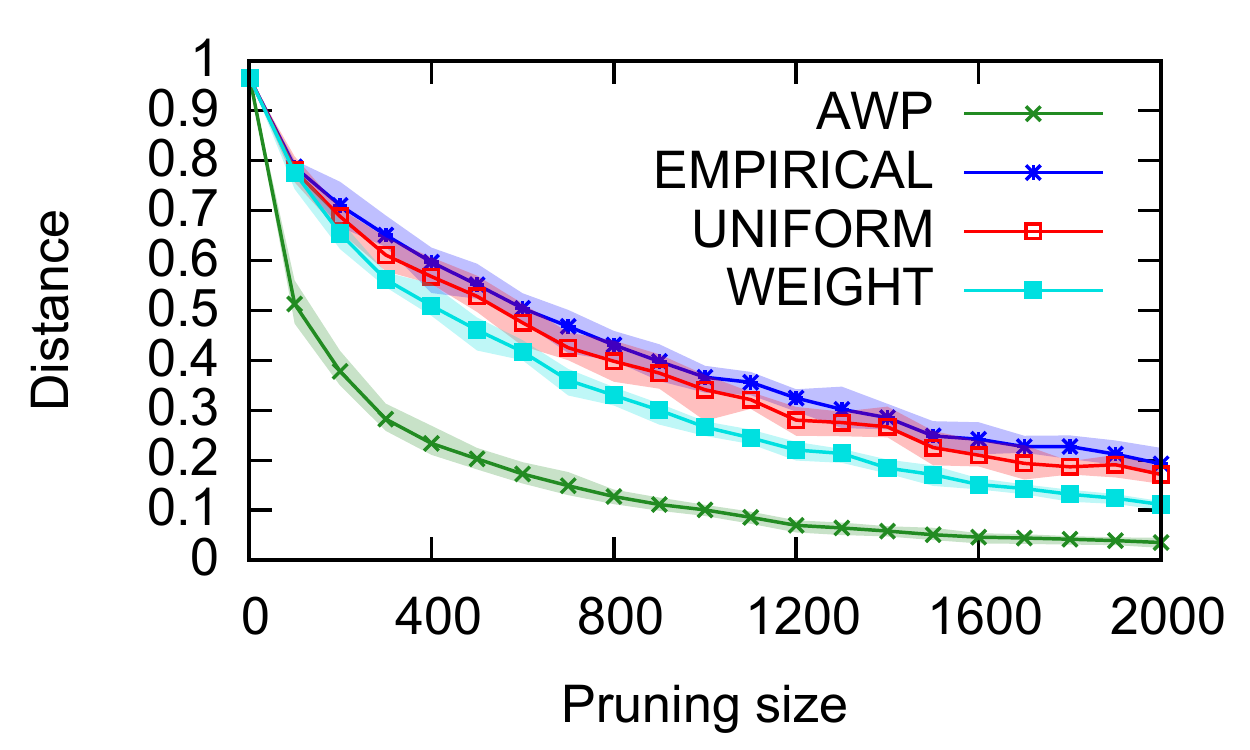}

      \end{center}
      \caption{Some of the experiment results. Full results are provided in \appref{exps}. Top to Bottom: Adult data set, bins assigned by marital status, \mbox{$N=4$}; MNIST: \mbox{$N=2$}, \mbox{$N=4$}; The  Caltech$\rightarrow$Office experiment.}
      \label{fig:exps}
    \end{figure}

To fairly compare \algname\ to the baselines, they were allowed the same number of higher-order weight queries and basic weight queries as requested by \algname\ for the same inputs. The baselines are non-adaptive, thus the basic weight queries were drawn uniformly from the data set at the start of their run. 
We tested the following baselines:
(1)~\weightalg: Iteratively split the node with the largest weight. Use basic weight queries only for the final calculation of $w_P'$. 
(2) \staticalg: Iteratively split the node $v$ with the largest $\hat{\dev}_v$, the estimator used by \algname, but without adaptive queries.
(3) \empalg: Same as \staticalg, except that instead of $\hat{\dev}_v$, it uses the naive empirical estimator, $\frac{n}{|S_v|}\sum_{x \in S_v}|w^*_v(x)/N_v-w^*(x)|$ (See \secref{estimator}).

We ran several types of experiments, all with inputs $\delta = 0.05$ and
$\beta = 4$. Note that $\beta$ defines a trade-off between the number of queries and the quality of the solution. Therefore, its value represents user preferences and not a hyper-parameter to be optimized. For each experiment, we report the average normalized output distance over $10$ runs, as a function of the pruning size. Error bars, displayed as shaded regions,
show the maximal and minimal normalized distances obtained in these runs.
For each input data set, we define an input hierarchical tree $T$ and test several target distributions, which determine the true weight function $w^*$.

In the first set of experiments, the input data set was Adult \citep{Dua:2019}, which contains $\sim 45,\!000$ population census records (after excluding those with missing values). We created the hierarchical tree via a top-down procedure, in which each node was split by one of the data set attributes, using a balanced splitting criterion. 
Similarly to the hospital example given in \secref{intro}, the higher-order queries (internal nodes) in this tree require finding the proportion of the population with a specific set of demographic characteristics, which could be obtained via a database counting query. For instance, a higher-order query could correspond to the set of single government employees aged 45 or higher. 
To generate various target distributions, we partitioned the data set into ordered bins, where in each experiment the partition was based on the value of a different attribute. The target weights were set so that each example in a given bin was $N$ times heavier than each example in the next bin. We ran this experiment with values $N=2,4$. Some plots are given in \figref{exps}. See \appref{exps} for full details and results.

The rest of the experiments were done on visual data sets, with a tree that was generated synthetically from the input data set using Ward's method \citep{mullner2011modern}, as implemented in the
\texttt{scipy} python package. 

First, we tested the MNIST \citep{MNIST} training set, which contains $60,\!000$ images of hand-written digits, classified into $10$ classes (digits). The target distributions were generated by allocating the examples into ordered bins based on the class labels, and allocating the weights as in the Adult data set experiment above, again with $N = 2,4$. We tested three random bin orders.
See \figref{exps} for some of the results. See \appref{exps} for full details and results.

Lastly, we ran experiments using an input data set and target distribution based on data set pairs commonly studied in domain adaptation settings
(e.g., \citealt{gong2012geodesic,hoffman2012discovering,ding2015deep}). The input data set was Caltech256 \citep{griffin2007caltech}, with $29,\!780$ images of various objects, classified into 256 classes (not including the singular ``clutter'' class).
In each experiment, the target distribution was determined by a different data set as follows: The target weight $w^*$ of each image in the Caltech256 data set was set to the fraction of images from the target data set that have this image as their nearest neighbor. We used two target data sets: (1) The Office data set \citep{saenko2010adapting}, of which we used the $10$ classes that also exist in Caltech256 ($1410$ images); (2) The Bing data set \citep{BergamoTorresani10,Bing}, which includes $300$ images in each Caltech256 class. For Bing, we also
ran experiments where images from a single super-class from the taxonomy in \citet{griffin2007caltech} were used as the target data set.
See \figref{exps} for some of the results, and \appref{exps} for full results.

In all experiments, except for a single configuration, \algname\ performed better than the
other algorithms. 
In addition, \staticalg\ and \empalg\ behave similarly in
most experiments, with  \staticalg\ sometimes being slightly better. This shows that our new estimator is empirically adequate, on top of its crucial advantage in getting a small $\Delta_v$. We note that in our experiments, the split quality $q$ was usually close to one. This shows that \algname\ can be successful even in cases not covered by \thmref{main}. 
We did find that the average splitting values were usually lower, see \appref{qval}.

\section{Conclusions}

In this work, we studied a novel problem of approximating a distribution via weight queries, using a pruning of a hierarchical tree. We showed, both theoretically and experimentally, that such an approximation can be obtained using an efficient interactive algorithm which iteratively constructs a pruning. In future work, we plan to study the effectiveness of our algorithm under more relaxed assumptions, and to generalize the input structure beyond a hierarchical tree.

\clearpage

\bibliographystyle{plainnat}

\bibliography{discrepancy_estimation}

\clearpage
\onecolumn

\appendix

\section{Limitations of greedy algorithms}\label{app:split}
In this section we prove two lemmas which point to limitations of certain types of greedy algorithms for finding a pruning with a low discrepancy. 
The first lemma shows that without a restriction on the split quality of the input tree, the greedy algorithm which splits the node with the maximal discrepancy, as well as a general class of greedy algorithms, could obtain poor approximation factors. 
\begin{lemma}\label{lem:approxlower}
  Consider a greedy algorithm which creates a pruning by starting with the singleton pruning that includes the root node, and iteratively splitting the node with the largest discrepancy in the current pruning. Then, for any even pruning size $K \geq 2$, there exists an input tree such that the approximation factor of the greedy algorithm is at least $K/4$.
  
  Moreover, the same holds for any greedy algorithm which selects the next node to split based only on the discrepancy of each node in the current pruning and breaks ties arbitrarily.

  In both cases, the input tree that obtains this approximation factor does not have a split quality $q < 1$, but does satisfy the following property (equivalent to having a split quality of $q=1$): For any two nodes $v,u$ in $T$ such that $u$ is a child of $v$, $\dev_u \leq \dev_v$. 
\end{lemma}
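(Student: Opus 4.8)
The plan is to exhibit, for each even $K$, one explicit input tree $T$ on which the greedy rule provably performs badly, together with an easily‑described size‑$K$ pruning $Q$ that witnesses a small optimum. The tree will be built from two parts hanging under the root: a ``decoy'' gadget $D$ whose root carries the (tied‑)largest discrepancy in the tree, and a ``good'' part $A$ which admits a pruning of size $O(1)$ whose discrepancy is tiny. Because the greedy rule always splits a node of maximal current discrepancy, it will first enter $D$; the gadget $D$ will be shaped so that, after every split inside it, the node of maximal discrepancy is again inside $D$, so that greedy is forced to spend all $K-1$ of its splits inside $D$ and never touches $A$. A caterpillar realization of $D$ is the natural candidate: a path $s_0,s_1,\dots$ in which each $s_j$ has one path‑child and one off‑path child $u_j$, the discrepancies $\dev_{s_j}$ chosen nonincreasing but always strictly above every $\dev_{u_i}$ (so greedy marches straight down the path and never goes back to split a tooth $u_j$), and each tooth $u_j$ internally uniform below a single split (so it is cheap to resolve). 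Greedy then ends with all teeth $u_0,\dots,u_{K-2}$ unresolved in its pruning, while the optimum covers/resolves them using the structure greedy was distracted from.

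The key steps I would carry out, in order, are: (1) fix the combinatorial shape of $T$ ($D$ caterpillar‑like plus a cheap $A$); (2) choose a discrepancy profile $(\dev_{s_j})_j,(\dev_{u_j})_j$ and a profile for $A$ so that the greedy trajectory is exactly ``split $s_0,s_1,\dots,s_{K-2}$'' — this requires proving, by induction over the iterations, that at each step the maximal‑discrepancy pruning node is the current path node; (3) compute $\dev_{\pout}=\dev_{s_{K-1}}+\sum_{i\le K-2}\dev_{u_i}$ and lower bound it; (4) exhibit $Q$ of size $K$ (resolving the shed teeth and descending far enough, or covering a subtree by a single low‑discrepancy node) and upper bound $\dev_Q$; (5) conclude $\dev_{\pout}/\dev_Q\ge K/4$; (6) verify that an assignment of leaf weights (with, say, only a ``heavy'' and a ``near‑zero'' level) realizing the chosen discrepancy profile exists, and that split quality is $1$ but not any $q<1$ — the latter is exactly the equal‑discrepancy segment of the gadget that makes the degradation possible and explains why \thmref{main} does not cover this tree.

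The main obstacle is step (2)–(4) done \emph{simultaneously}: the discrepancy values along the gadget are tightly constrained, because by \lemref{sons} any pruning of the subtree rooted at a gadget node $v$ has discrepancy at most $2\dev_v$, which caps the total discrepancy of the teeth lying below $v$, while at the same time $\dev_{\pout}\le 2\dev_r$. Making the greedy output $\Theta(1)$ while keeping $\dev_Q=\Theta(1/K)$ under these caps — i.e. choosing the rate at which $\dev_{s_j}$ decays and where the equal‑discrepancy segment sits — is the delicate part of the argument; once a valid profile is found, the two bounds and the weight realization are routine. Finally, for the ``moreover'' part (any rule that chooses the next split as a function only of the multiset of current discrepancies), I would either perturb all discrepancies to be distinct so that the sorted order alone still forces the same bad split sequence, or place two symmetric copies of the gadget so that a discrepancy‑only rule cannot tell which copy to resolve and a counting argument shows at least half the split budget is wasted; the perturbed/duplicated tree again has split quality exactly $1$.
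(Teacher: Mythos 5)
Your core construction idea matches the paper's. You propose a caterpillar gadget where a path $s_0,s_1,\dots$ with slowly decaying discrepancies always out-competes the teeth $u_j$, so the max-discrepancy greedy marches down the path and never resolves a tooth; the paper's $T^a=H(k)$ is exactly such a caterpillar (in two segments: the $H$-segment has teeth $T_1$ of discrepancy $w$ below path nodes of discrepancy $(i+1)w$, and the $G_2$-segment has discrepancy-$0$ leaf teeth below path nodes of constant discrepancy $2w$). Your accounting $\dev_{\pout}=\dev_{s_{K-1}}+\sum_j\dev_{u_j}$ and the description of $Q$ as ``resolve the shed teeth and descend far enough'' are exactly what the paper does: $Q$ spends two leaves per $H$-tooth plus two slots at the top of the $G_2$-segment, yielding $\dev_Q=2w$ against $\dev_{\pout}=(k+1)w$. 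You also correctly name the binding constraint from \lemref{sons}. One small misdirection: the separate ``good'' part $A$ with an $O(1)$ pruning is unnecessary and in fact would not help, since the teeth carry nonzero discrepancy that \emph{any} competitive $Q$ must also resolve; the paper has no $A$, and $Q$ lives entirely inside the caterpillar.

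The genuine gap is in the ``moreover'' part. The claim there is about greedy rules that see \emph{only} the multiset of current discrepancies and break ties arbitrarily, and the paper's proof relies essentially on \emph{creating} ties: $T^b$ hangs $G_1(2k)$ (a chain whose every internal node has discrepancy exactly $w$) next to $T_{k-1}$ (whose $T_1$-roots also have discrepancy exactly $w$), so at every step some $G_1$-root ties with the $T_1$-roots and adversarial tie-breaking can keep the algorithm burning its budget inside $G_1$, never collapsing a $T_1$. Your proposed perturbation to make discrepancies distinct destroys exactly these ties, and once they are gone ``the sorted order forces the same bad split sequence'' is false for an arbitrary discrepancy-based rule (which need not pick the maximum). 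Your alternative, two symmetric copies of the gadget, only forces roughly half the budget to be wasted, which yields a constant factor, not $K/4$. So the first part of your plan is on target, but the moreover part would need to be redone in the paper's tie-based style (or some other construction that keeps many equal-discrepancy decoys alive throughout the run).
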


\begin{figure}[h]
  \begin{center}
    \begin{tikzpicture}[scale = 0.8]
      \Tree [.{$T_1\, (\dev =w)$} {$0$} {$w$} ];
    \end{tikzpicture}
\begin{tikzpicture}[scale = 0.8]
  \Tree [.{$T_2\,(\dev = 2w)$} {$T_1$} {$T_1$} ]
\end{tikzpicture}
\begin{tikzpicture}[scale = 0.8]
  \Tree [.{$T_3\,(\dev = 3w)$} {$T_1$} [.{$T_2$} {$T_1$} {$T_1$} ] ]
\end{tikzpicture}\\

\vspace{2em}
    \begin{tikzpicture}[scale = 0.8]
\Tree [.{$G_j(4)\,(\dev = jw)$} {$w/2$ (root: $u_4$)} [.{$G_j(3)\,(\dev = jw)$} {$w/2$ (root: $u_3$)} [.{$G_j(2)\,(\dev = jw)$} {$w/2$ (root: $u_2$)} {$G_j(1) = T_{j}\,(\dev = jw)$ (root: $u_1$)} ] ] ]
\end{tikzpicture}

\vspace{2em}

\begin{tikzpicture}[scale = 0.8]
      \Tree [.{$T^a = H(3)\,(\dev = 4w)$} {$T_1$ (root: $v_3$)} [.{$H(2)\,(\dev = 3w)$} {$T_1$ (root: $v_2$)} {$H(1)\equiv G_2(4)\,(\dev = 2w)$} ] ] ]
\end{tikzpicture}
\end{center}

\caption{Illustrating the trees defined in the proof of \lemref{approxlower} for $k = 3$.}
\label{fig:trees}
\end{figure}

\begin{proof}
	  We define several trees; see illustrations in \figref{trees}. Let $w >
  0$. Its value will be defined below.  All the trees defined below have an average leaf weight of $w/2$. Therefore, when recursively combining them to a larger tree, the average weight remains the same, and so the discrepancy of any internal node (except for nodes with leaf children) is the total discrepancy of its two child nodes.
  
  Let $T_1$ be a tree of depth $1$, which has a root node with two child leaves with weights $0$ and $w$. The root of $T_1$ has a discrepancy of $w$. For $i \geq 2$, let $T_i$ be a tree of depth $i$ such that one child node of the root is $T_1$ and the other is $T_{i-1}$. Note that $T_i$ has a discrepancy of $iw$. 

  .
  For positive integers $i$ and $j$, define the tree $G_j(i)$ recursively, such that $G_j(1) := T_j$ (denote its root node $u_1$), and $G_j(i)$ has a root node with two children: a leaf with weight $w/2$ (denote it $u_i$) and $G_j(i-1)$.  It is easy to verify that the
  discrepancy of $G_j(i)$ for all $i \geq 1$ is $jw$.

Let $k = K/2$. For the first part of the lemma, define $H(i)$ recursively. $H(1) = G_2(k+1)$,  and $H(i)$ has a root with the children $T_1$ (denote its root $v_i$) and $H(i-1)$.  The discrepancy of $H(i)$ is thus $(i+1)w$. 
  Now, consider the greedy algorithm that iteratively splits the node with the largest discrepancy in the pruning, and suppose that it is run with the input tree $T^a := H(k)$ and a pruning size $K = 2k$. Set $w$ so that the total weight of $T^a$ is equal to $1$. 
  Due to the discrepancy values, the greedy algorithm splits the root nodes of $H(k), H(k-1),\ldots,H(1)=G_2(k+1)$ and then of $G_2(k+1), G_2(k),\ldots,G_2(2)$. The resulting pruning is $v_2,\ldots,v_k,u_1,\ldots,u_{k+1}$, with a total discrepancy of $(k-1)w + 2w = (k+1)w$.
  In contrast, consider the pruning of size $K$ which includes the two leaf children of each of $v_2,\ldots,v_k$ and the children of the root of $G_2(k+1)$ (the sibling of $v_2$). This pruning has a discrepancy of $2w$. Thus, the approximation factor obtained by the greedy algorithm in this example is $(k+1)/2 \geq K/4$.

  For the second part of the lemma, consider the tree $T^b$ which has a root with the child nodes $G_1(2k)$ (which has a discrepancy of $w$) and $T_{k-1}$ (which has a discrepancy of $(k-1)w$). Define $w$ so that the total weight of $T^b$ is equal to $1$. Suppose that $T^b$ and pruning size $K$ are provided as input to some greedy algorithm that splits according to discrepancy values of nodes in the current pruning, and breaks ties arbitrarily. In the first round, the root node must be split. Thereafter, the current pruning always includes some pruning of $G_1(2k)$ (possibly the singlton pruning which includes just the root of this sub-tree). There is only one pruning of $G_1(2k)$ of size $i \leq 2k$, and it is composed of $i-1$ leaves of weight $w/2$ and the root of $G_1(2k-i+1)$. Therefore, at all times in the algorithm, the pruning includes some node with a discrepancy $w$ which is the root of $G_1(i)$ for some $i \leq 2k$. It follows that the said greedy algorithm might never split any of the nodes which are the root of some $T_1$ under $T_{k-1}$, since these nodes also have a discrepancy of $w$. It also can never split $G_1(1)= T_1$, since this would require a pruning of size larger than $K = 2k$. As a result, such an algorithm might obtain a final pruning with a discrepancy of $kw$. In contrast, the pruning which includes all the child leaves of the sub-trees $T_1$ in $T_{k-1}$ and two child nodes of $G_1(2k)$ has a discrepancy of $w$. This gives an approximation factor of $k = K/2 \geq K/4$. 
\end{proof}

The next lemma shows that a different greedy approach, which selects the node to split by the maximal improvement in discrepancy, also fails. In fact, it obtains an unbounded approximation factor, even for a split quality as low as $1/2$. 
\begin{lemma}\label{lem:lookahead}
Consider a greedy algorithm that in each iteration splits the node $v$ in the current pruning that maximizes $\dev_v - (\dev_{v_R} + \dev_{v_L})$, where $v_R$ and $v_L$ are the child nodes of $v$. For any pruning size $K \geq 5$ and any value $N \geq 2$,  there exists an input tree such that the approximation factor of this algorithm is larger than $N$. For any $\epsilon > 0$, there exists such a tree with a split quality $q \leq \half + \epsilon$. 
\end{lemma}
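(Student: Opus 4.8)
I will exhibit, for the given $K\ge 5$ and $N\ge 2$ and any $\epsilon>0$, a tree $T$ on which the lookahead-greedy algorithm is lured into wasting its whole split budget on a region of negligible discrepancy while the optimal pruning of size $K$ annihilates a region of large discrepancy that greedy never enters. Concretely, the root of $T$ has two children: a \emph{decoy chain} $v$ with $\dev_v=w/N$ (a tiny number), which at every stage offers exactly one node whose split strictly decreases the discrepancy, but only by a minuscule amount; and a \emph{trap} $R$ with $\dev_R=2w$, built so that \emph{every} internal split inside $R$ has improvement exactly $0$. Since greedy always maximizes the immediate improvement $\dev_v-(\dev_{v_R}+\dev_{v_L})$, and $R$ never offers a positive improvement, greedy marches down the decoy chain and never touches $R$; the optimum, free of this myopia, instead spends its budget splitting $R$ down to where its discrepancy vanishes.

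The basic gadget is the \emph{jump} $J(c)$: a node whose two children are subtrees that are internally constant in weight, hence have discrepancy $0$, but whose average leaf weights differ, scaled so that $\dev_{J(c)}=c$. Splitting $J(c)$ itself has improvement $c$, whereas a split of any node lying strictly above a $J$-root has improvement $0$; in particular $R$'s root, whose two children are copies of $J(w)$, has improvement $2w-w-w=0$, and likewise every ancestor of a $J$. The decoy $v$ is a path $u_0\to u_1\to\cdots\to u_{K-2}$ in which, for $i\le K-3$, $u_i$ has children $u_{i+1}$ with $\dev_{u_{i+1}}=\tfrac12\dev_{u_i}$ and $D_i$, where $D_i$ is a balanced pair of jumps with $\dev_{D_i}=\tfrac25\dev_{u_i}$, and the terminal $u_{K-2}$ is a single jump. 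Then splitting $u_i$ has improvement $\dev_{u_i}(1-\tfrac25-\tfrac12)=\tfrac1{10}\dev_{u_i}>0$, while splitting $D_i$ (or $R$, or anything inside either) has improvement $0$. Set $\dev_{u_0}:=w/N$, so $\dev_{u_i}=w/(N2^i)$, and fix the leaf weights near the root so that $\dev_T=4w$. Just as in the $T_1/G_j/H$ constructions in the proof of \lemref{approxlower}, one checks that all of these prescribed discrepancies are jointly realizable by non-negative leaf weights, and that after normalizing the tree to total weight $1$ every discrepancy ratio, hence the split quality, every greedy choice, and the approximation factor, is unchanged.

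With this tree, both runs can be read off directly. Greedy must first split the root; afterwards its current pruning always consists of the detached dead-ends $D_0,\dots,D_{j-1}$ and the trap $R$ --- all with split improvement $0$ --- together with the current chain tip $u_j$, whose split improvement is $\tfrac1{10}\dev_{u_j}>0$; so greedy splits $u_j$, and after its $K-1$ splits the pruning is $\{D_0,\dots,D_{K-3},u_{K-2},R\}$, whose discrepancy is at least $\dev_R=2w$. On the other hand, the pruning that keeps $v$ whole and splits $R$ down to the four discrepancy-$0$ children of its two jump gadgets --- spending any leftover budget inside those discrepancy-$0$ subtrees --- has size exactly $K$, which is where $K\ge5$ is used ($R$ contributes a four-node frontier and $v$ one more node), and its discrepancy is $\dev_v+0=w/N$. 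Hence the optimal discrepancy is at most $w/N$, so the approximation factor of greedy is at least $2w/(w/N)=2N>N$. Finally, every parent/child discrepancy ratio occurring in $T$ is $0$, $\tfrac25$, or $\tfrac12$, so $T$ has split quality $\tfrac12\le\tfrac12+\epsilon$; and if realizing the ratio $\tfrac12$ exactly is inconvenient, replacing the factor $\tfrac12$ along the chain by $\tfrac12-\epsilon'$ pushes the split quality strictly below $\tfrac12$ while only increasing the improvements.

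The step I expect to require genuine work is the realizability: producing explicit non-negative leaf weights that realize a node with a prescribed discrepancy $d$ whose two children have prescribed discrepancies summing to a prescribed value strictly below $d$ (so the improvement is a prescribed positive number), consistently throughout the recursion and at the root. The tool that makes this routine --- and which I would take straight from the gadget arithmetic in \lemref{approxlower} --- is that a subtree's discrepancy is invariant under adding a common constant to all its leaf weights and scales linearly under scaling them, so once the two child subtrees are built each still carries a free shift and a free scale; pushing the two children's averages apart inflates the parent's discrepancy continuously over the required range, and scaling then pins down the exact value, with the same two handles at the root used to place $\dev_T=4w$.
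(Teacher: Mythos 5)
Your construction is conceptually the same as the paper's: the root has two subtrees, one (your trap $R$, the paper's $v_1$) whose every split along the greedy's path has improvement exactly zero, and one (your decoy chain $u_0\to\cdots\to u_{K-2}$, the paper's $J(k)\to J(k-1)\to\cdots$) whose current tip always offers a small strictly positive improvement, so the lookahead-greedy burns its entire budget on the chain and never touches the high-discrepancy trap. The bookkeeping (greedy ends with $\dev\ge\dev_R$, the optimum splits the trap down to zero-discrepancy leaves, the case $K\ge 5$ giving exactly the frontier size needed) matches the paper's, and your split-quality argument is a bit cleaner: you get $q=\tfrac12$ exactly, whereas the paper needs $k\to\infty$ to push $q$ down to $\tfrac12+\epsilon$.

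The one place where your write-up is not a proof is exactly where you flag it: realizability. You assert the prescribed discrepancies "$\dev_{u_{i+1}}=\tfrac12\dev_{u_i}$, $\dev_{D_i}=\tfrac25\dev_{u_i}$, $\dev_R=2w$, improvement of $D_i$ and $R$ exactly $0$" are jointly achievable with non-negative leaf weights, citing the $T_1/G_j/H$ gadget arithmetic of \lemref{approxlower}. But that gadget arithmetic rests on every subtree having the \emph{same} average leaf weight, which makes discrepancies additive; your construction explicitly requires the children of each $u_i$ to have \emph{different} averages from $u_i$ (else the sum of child discrepancies would equal $\dev_{u_i}$, not $\tfrac9{10}\dev_{u_i}$), so that trick cannot be invoked verbatim. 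Your shift-and-scale observation is the right tool and, together with a continuity/IVT argument (the parent discrepancy is a continuous convex function of the shift of one child, growing without bound, and equal to the sum of child discrepancies when averages coincide), it does discharge the obligation — but that calculation needs to actually appear. The paper avoids this entirely by writing down explicit leaf weights ($0$, $w$, $w'=w/3^k$) and computing $\alpha_i$ in closed form; you trade that explicitness for an abstract, slightly stronger ($2N$ vs.\ $N$) and somewhat cleaner statement, at the cost of an unfinished realizability step.
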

\begin{proof}
  We define a hierarchical tree; see illustration in \figref{lookahead}. Let $w >
  0$. Its value will be defined below. Let $k \geq K-2$.   The input tree $T$ has two child nodes. The left child node, denoted $v_1$, has two children, $v_2$ and $v_3$. Each of these child nodes has two leaf children, one with weight $0$ and one with weight $Nw/2$. Thus, $\dev_{v_2} = \dev_{v_3} = Nw/2$, and $\dev_{v_1} = Nw$.

  The left child node is defined recursively as follows.   
  For an integer $m$, let $F(m)$ be some complete binary tree with $m$ leaves, each of weight $w' := w/3^k$. By definition, the discrepancy of the root of $F(m)$, for any integer $m$, is zero.
  We define $J(i)$ for $i \geq 0$ recursively, as follows. Let $J(0)$ be a tree such that its left child node is $F(1)$ and its right child node is the root of some complete binary tree with $3^k$ leaves of weight zero. Let $J(i)$ be a tree such that its left child is $F(2\cdot 3^{i-1})$ and its right child is $J(i-1)$. Note that $J(0)$  has $1=3^0$ leaf of weight $w'$, and by induction, $J(i)$ has $2\cdot 3^{i-1}+3^{i-1} = 3^i$ such leaves. In addition, $J(i)$ has $3^k$ leaves of weight $0$. Thus, the root of $J(i)$ has an average weight of $\dfrac{3^iw'}{3^i + 3^k} = \dfrac{w}{3^k + 3^{2k-i}}$, and a discrepancy of
 \[
   \alpha_i := 3^k \cdot \dfrac{w}{3^k + 3^{2k-i}} + 3^i (w' - \dfrac{w}{3^k + 3^{2k-i}}) = \dfrac{w}{1 + 3^{k-i}} + \frac{w}{3^{k-i}} - \dfrac{w}{3^{k-i} + 3^{2k-2i}}
 =    \dfrac{2w}{1 + 3^{k-i}}.
\]
The last equality follows by setting $a = 3^{k-i}$ and $b = 3^{2k - 2i}$ so that $\dfrac{w}{3^{k-i}} - \dfrac{w}{3^{k-i} + 3^{2k-2i}} = w(\frac{1}{a}-\frac{1}{a+b})$, and noting that 
\[
  \frac{1}{a} - \frac{1}{a+b} = \dfrac{b}{a(a+b)} = \dfrac{1}{a^2/b + a} = \frac{1}{1+3^{k-i}}.
  \]
	  Now, consider running the given algorithm on the input tree $T$ with pruning size $K$.
          Splitting $v_1$ into $v_2$ and $v_3$ does not reduce the total discrepancy. On the other hand, for any $i$, splitting the root of $J(i)$  reduces the total discrepancy, since it replaces a discrepancy of $\alpha_i$ with a discrepancy of zero (for $F(2\cdot 3^{i-1})$) plus a discrepancy of $\alpha_{i-1} < \alpha_i$ (for $J(i-1)$).  Therefore, the defined greedy algorithm will never split $v_1$, and will obtain a final pruning with a discrepancy of at least $\dev_{v_1} = Nw$. On the other hand, any pruning which includes the leaves under $v_2,v_3$ will have a discrepancy of at most that of $J(k)$, which is equal to $w$. Therefore, the approximation factor of the greedy algorithm is at least $N$.

         To complete the proof, we show that for a large $k$, the split quality of the tree is close to $1/2$. First, it is easy to see that the split quality of the tree rooted at $v_1$ is $1/2$. For the tree $J(k)$, observe that the discrepancy of $J(i)$ is $\alpha_i$ and the discrepancy of its child nodes is $0$ and $\alpha_{i-1}$. Since $\alpha_{i-1}/\alpha_i \leq 1/2$, $J(i)$ also has a split quality $\leq 1/2$, for all $i$. We have left to bound the ratio between the discrepancy of the root node and each of its child nodes. The left child node of the root has $4$ leaves of total weight $Nw$, and the right child node has $2\cdot 3^k$ leaves of total weight $3^k \cdot w' = w$. Therefore, the average weight of the root node is $\bar{w} := \dfrac{(N+1)w}{4 + 2\cdot 3^k}$. The discrepancy of the root node is thus
        \[
          2|Nw/2 - \bar{w}| + 2|\bar{w}| + 3^k|\bar{w} - w'| + 3^k|\bar{w}| = (N-1)w + 2\cdot 3^k\bar{w} = (N-1)w + \frac{(N+1)w}{2\cdot 3^{-k} + 1}.
          \]
          For $k \rightarrow \infty$, this approaches $2Nw$ from below. Thus, for any $\epsilon$, there is a sufficiently large $k$ such that the discrepancy of the root is at least $Nw/(\half + \epsilon)$. Since the discrepancy of each child node is $\leq Nw$, this gives a split quality of at most $\half + \epsilon$. 
      \end{proof}
      \begin{figure}[h]

	\begin{center}
		
		\begin{tikzpicture}[scale = 0.8]
                  \Tree [.{$T \,(\dev \geq 2Nw)$} [.{$v_1\,(\dev = Nw)$} [.{$v_2\,(\dev = Nw/2)$} {$0$} {$Nw/2$} ] [.{$v_3\,(\dev = Nw/2)$} {$0$} {$Nw/2$} ] ] {$J(3) \,(\dev = w)$} ];
                \end{tikzpicture}
                
                \vspace{2em}
                
                \begin{tikzpicture}[scale = 0.8]
                  \tikzset{level distance = 5em}
                \Tree [.{$J(3) \,(\dev = \dfrac{2w}{1+3^0} = w)$} {$F(2\cdot 3^2)$} [.{$J(2) \,(\dev = \dfrac{2w}{1+3^1} = w/2)$} {$F(2 \cdot 3^1)$}  [.{$J(1) \,(\dev = \dfrac{2w}{1+3^2} = w/5)$} {$F(2\cdot 3^0)$} [.{$J(0) \,(\dev = \dfrac{2w}{1+3^3} = w/14)$} {$F(1)$} {A tree with $3^k$ examples of weight $0$} ] ] ] ]
		\end{tikzpicture}
	\end{center}

	\caption{Illustrating the trees defined in the proof of \lemref{lookahead} for $k = 3$.}
	\label{fig:lookahead}
\end{figure}
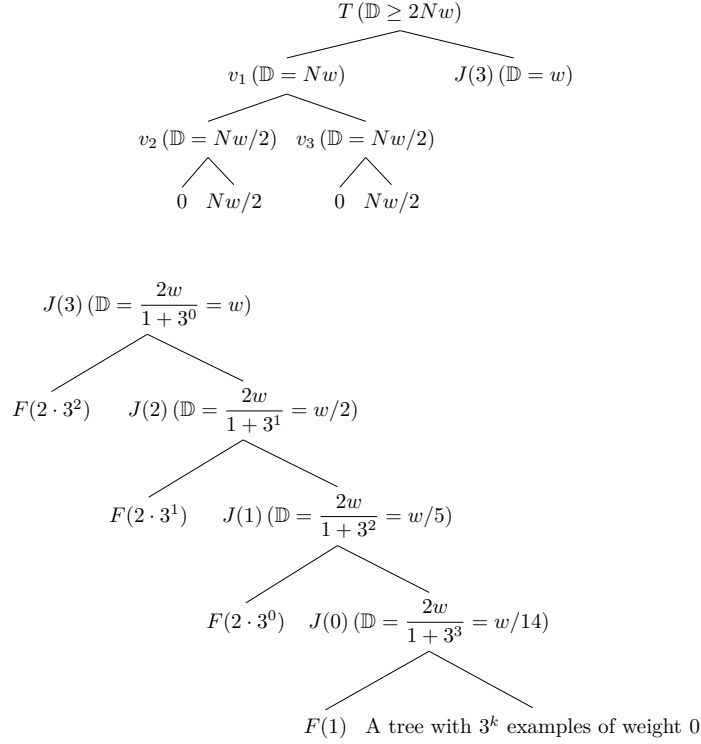

\section{Limitations of other discrepancy estimators}\label{app:estimator}
First, we show that the discrepancy cannot be reliably estimated from weight queries of examples alone, unless almost all of the weights are sampled. 
To see this, consider a node $v$ with $n+1$ descendant leaves (examples), all with the same weight $w = 1/(n+n^2)$, except for one special example with true weight either $n^2 w$ (first case) or $w$ (second case). In the first case, the average weight of the examples is $nw$, and $\dev_v = n\cdot|n w - w| + |nw - n^2w| = 2(n^2 - n)w = 2 -4/(n+1)$. In the second case, $\dev_v = 0$. However, in a random sample of size $\leq n/2$, the probability that the special example is not observed is $(1-1/(n+1))^{n/2} \geq \half$. When this example is not sampled, it is impossible to distinguish between the two cases unless additional information is available. This induces a large estimation error in this scenario.

Second, we show that even when $w^*_v$ is known, a naive empirical estimator
of the discrepancy can have a large estimation error. Recall that the
discrepancy of a node $v$ is defined as
\mbox{$\dev_v := \sum_{x \in \lfs_v} |w^*_v/N_v - w^*(x)|$}. Denote
$n := |\lfs_v|$. Given a sample $S_v$ of randomly selected examples in
$\lfs_v$ whose weight has been observed, the naive empirical estimator for the
discrepancy is $\frac{n}{|S_v|}\sum_{x \in S_v} |w^*_v/N_v - w^*(x)|$.  Now,
consider a case where $n-1$ examples from $\lfs_v$ have weight $0$, and 
a single example has weight $1$. We have
$\dev_v = (n-1) \cdot | 1/n -0| + |1/n - 1| = 2 - 2 /n$. However, if the heavy
example is not sampled, the naive empirical estimate is equal to
$1$. Similarly to the example above, if the sample size is of size $\leq n/2$,
there is a probability of more than half that the heavy example is not
observed, leading to an estimation error which is close to $1$.

\section{Proof of \lemref{prob}}\label{app:prob}
\begin{proof}[Proof of \lemref{prob}]
  Fix a node $v$ in $T$, and consider the value of $\hat{\dev}_v$ after drawing $M_v$ random samples from $\lfs_v$. To apply \lemref{bounds}, set $\bw$ to be the sequence of weights of the examples in $\lfs_v$. Then $\dev_v = \dev(\bw)$ and $\hat{\dev}(\bw) = \hat{\dev}_v$. For an integer $M$, let $\delta(M) := \frac{3 \delta}{K \pi^2 M^2}$. By \lemref{bounds}, with a probability at least $1-\delta(M_v)$,
  $|\dev_v - \hat{\dev}_v| \leq w_v^* \sqrt{2\ln(2/\delta(M_v))/M_v} \equiv \Delta_v$.
  We have $\sum_{n=1}^\infty \delta(n) = \frac{3\delta}{K\pi^2}\sum_{n = 1}^\infty \frac{1}{n^2} = \delta/(2K)$.
  Thus, for any fixed node $v$, with a probability at least $1-\delta/(2K)$, after any number of samples $M_v$,  $|\dev_v - \hat{\dev}_v| \leq \Delta_v$. Denote this event $D_v$. 

  Now, the pruning $P$ starts as a singleton containing the root node. Subsequently, in each update of $P$, one node is removed and its two children are added. Thus, in total $2K-1$ nodes are ever added to $P$  (including the root node).
  Let $v_i$ be the $i$'th node added to $P$, and let $V = \{v_1,\ldots,v_{2K-1}\}$. We have,
  \[
    \P[E_0] \geq \P[\forall v \in V, D_v] = 1-\P[\exists v \in V, \neg D_v].
    \]
    Now, letting $\cN$ be the nodes in $T$,
    \[
      \P[\exists v \in V, \neg D_v] \leq \sum_{i=1}^{2K-1} \sum_{v \in \cN}\P[v_i =v]\P[\neg D_v \mid v_i = v].
      \]
      Now, $\P[\neg D_v \mid v_i = v] = \P[\neg D_v]$, since the estimate $\hat{\dev}_v$ uses samples that are drawn after setting $v_i = v$. Therefore, $\P[\neg D_v \mid v_i = v]\leq \delta/(2K)$. Since $\sum_{v \in \cN}\P[v_i =v] = 1$, it follows that $\P[\exists v \in V, \neg D_v]\leq \delta$. Therefore,
      We have $\P[E_0] \geq 1-\delta$, as claimed.
      
\end{proof}

\section{Proof of \lemref{sons}}\label{app:clowerbound}

\begin{proof}[Proof of \lemref{sons}]
  To prove the first part, denote the nodes in $P$ by $v_1,\ldots,v_n$ and let $v_0 := r$ be the root node. For $i \in \{0,\ldots,n\}$, let $\bw_i$ be a sequence of length $N_{v_i}$ of the weights $w^*(x)$ of all the leaves $x \in \lfs_{v_i}$. Let $\bar{w}^*_i := \frac{w^*_{v_i}}{N_{v_i}}$.  Then 
	\begin{align*}
          \dev_{v_i} & = \normone{\bw_i -   \bar{w}^*_i \cdot \bone}
                       = \normone{\bw_i -   \bar{w}^*_0 \cdot \bone +  \bar{w}^*_0\cdot \bone -  \bar{w}^*_i\cdot \bone}\\
                       &\le \normone{\bw_i -   \bar{w}^*_0 \cdot \bone} +  \normone{\bar{w}^*_0\cdot \bone -  \bar{w}^*_i\cdot \bone} = \normone{\bw_i - \bar{w}^*_0 \cdot \bone} + N_{v_i} | \bar{w}^*_i - \bar{w}^*_0|.
	\end{align*}
        Now, observe that 
        \[
        N_{v_i}| \bar{w}_i^* -  \bar{w}^*_0| = | w_i^*  - N_{v_i} \bar{w}^*_0 | = | \sum_{x \in \lfs_{v_i}} (w^*(x) - \bar{w}^*_0) | \le \sum_{x \in \lfs_{v_i}} |w^*(x) - \bar{w}^*_0 | = \normone{\bw_i - \mathbf{1} \cdot \bar{w}^*_0}.
      \]
      Therefore, $\dev_{v_i} \le 2 \normone{\bw_i - \mathbf{1} \cdot \bar{w}^*_0}$.
      Summing over all the nodes, and noting that $\bw_0 = \bw_1\circ \ldots \circ \bw_n$, we get:
      \[
        \sum_{i\in [n]} \dev_{v_i} \le \sum_{i\in [n]} 2 \normone{\bw_i - \bar{w}^*_0 \cdot \bone} = 2 \normone{\bw_0 -  \bar{w}^*_0 \cdot \bone} = 2 \dev_{v_0},
      \]
      which proves the first part of the lemma.

          For the second part of the lemma, let $n$ be an integer sufficiently large such that $\frac{2}{1+2/n} \geq 2-\epsilon$. We consider a tree (see illustration in \figref{splitproof}) with $n+2$ leaves. Denote the leaves by $x_1,\ldots,x_{n+2}$. Denote $w := 1 /(n+2)$, and define $w^*(x_1) = 0$, $w^*(x_2) = 2w$, and $w^*(x_i) = w$ for $i \geq 2$. 
  Denote by $v_0$ the root node of the tree, and let its two child nodes be $v_1$ and $v_2$. The tree is organized so that $x_1$ and $n/2$ of the examples with weight $w$ are descendants of $v_1$, and the other examples are descendants of $v_2$. $v_1$ has two child nodes, one is the leaf $x_1$ and the other is some binary tree whose leaves are all the other $n/2$ examples. Similarly, $v_2$ has a child node which is the leaf $x_2$, and the other examples are organized in some  binary tree rooted at the other child node.

  It is easy to see that $\dev_{v_0} = 2w$. To calculate $\dev_{v_1}$, note that the average weight of node $v_1$ is $\frac{n w/2}{n/2+1} = nw^2$. Thus,
	\[
	\dev_{v_1} \equiv \sum_{x \in \lfs_{v_1}} |nw^2 -w^*(x)|  = \frac{n}{2}(w - nw^2) + nw^2 = \frac{nw}{2}(1 - \frac{n}{n+ 2}) + nw^2 =	2 n w^2.
      \]
      A similar calculation shows that $\dev_{v_2} = 2 n w^2$. Define the pruning $P = \{v_1,v_2\}$ of the tree rooted at $v_0$. Then
      \[
        \dev_P = 4nw^2 = 2nw \cdot \dev_{v_0} = \frac{2}{1+2/n}\dev_{v_0} \geq (2-\epsilon)\dev_{v_0},
      \]
      as required. 

      To show that this tree has a split quality of less than $1$, note that $\dev_{v_1} = \dev_{v_2} = \frac{1}{1+2/n}\dev_{v_0}$, and that the discrepancy of each of the child nodes of $v_1$ and $v_2$ is zero, since all their leaves have the same weight. Therefore, this tree has a split quality $\frac{1}{1+2/n} < 1$.
      \end{proof}

      \begin{figure}
        \begin{center}
      \begin{tikzpicture}
        \Tree [.{$v_0$} [.{$v_1$} {$0$} [.{.} [.{.} {$w$} {$w$} ] [.{.} {$w$} {$w$} ] ] ] [.{$v_2$} {$2w$} [.{.} [.{.} {$w$} {$w$} ] [.{.} {$w$} {$w$} ] ] ] ]
      \end{tikzpicture}
    \end{center}
    \caption{Illustrating the tree constructed in the proof of the second part of \lemref{sons}, for $n = 8$.}
    \label{fig:splitproof}
  \end{figure}
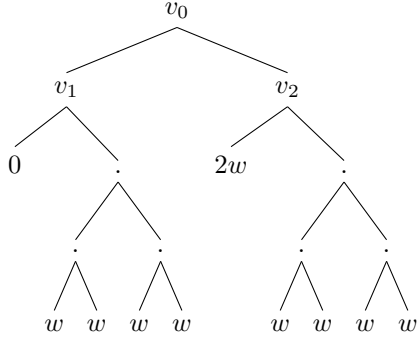

  \section{Proof of \lemref{devbound}}\label{app:prooflem44}
  
  \begin{proof}[Proof of \lemref{devbound}]
	
  Let $Q$ be some pruning such that $|Q| = K$.  Partition $\pout$ into $R,P_a$ and $P_d$, where $R := \pout \cap Q$, $P_a \subseteq \pout$ is the set of strict ancestors of nodes in $Q$, and $P_d \subseteq \pout$ is the set of strict descendants of nodes in $Q$.  Let $Q_a \subseteq Q$  be the ancestors of the nodes in $P_d$ and let $Q_d\subseteq Q$ be the descendants of the nodes in $P_a$, so that $R,Q_d$ and $Q_a$ form a partition of $Q$. First, we prove that we may assume without loss of generality that $P_a,P_d,Q_a,Q_d$ sets are non-empty. 

  \textbf{Claim 1}: 
  If any of the sets $P_a,P_d,Q_a,Q_d$ is empty then the statement of the lemma holds.

  \textbf{Proof of Claim 1:} Observe that if any of $P_a,P_d,Q_a,Q_d$ is empty then all of these sets are empty: By definition, $P_a = \emptyset \Leftrightarrow Q_d = \emptyset$ and $P_d = \emptyset \Leftrightarrow Q_a = \emptyset$. Now, suppose that $P_a = Q_d = \emptyset$. Since $|R| + |P_d| + |P_a| = |\pout| = |Q| = |R| + |Q_d| + |Q_a|$, we deduce that $|P_d| = |Q_a|$. But for each node in $Q_a$, there are at least two descendants in $P_d$, thus $|P_d|\geq 2|Q_a|$. Combined with the equality, it follows that $P_d = Q_a  = \emptyset$. The other direction is proved in an analogous way. Now, if $P_a=P_d=Q_a=Q_d = \emptyset$ then $\pout = Q$, thus in this case $\dev_P = \dev_Q$, which means that the statement of the lemma holds. This concludes the proof of Claim 1. 

Assume henceforth that $P_a,P_d,Q_a,Q_d$ are non-empty.  Let $r$ be the node with the smallest discrepancy out of the nodes that were split by \algname\ during the entire run. Define $\theta := |P_a|\cdot \dev_r/\dev_{Q_a}$ if $\dev_{Q_a}>0$  and $\theta := 0$ otherwise.

  \textbf{Claim 2}: $\dev_{\pout} \leq \max(2,\beta \theta) \dev_Q.$

  \textbf{Proof of Claim 2}: 
	 We bound the discrepancies of $P_d$ and of $P_a$ separately.
	 For each node $u\in Q_a$, denote by $P(u)$ the descendants of $u$ in $P_d$. These form a pruning of the sub-tree rooted at $u$. In addition, the sets $\{P(u)\}_{u \in Q_a}$ form a partition of $P_d$. Thus, by the definition of discrepancy and \lemref{sons}, 
	 \begin{equation}\label{eq:pd}
		\dev_{P_d} = \sum_{u \in Q_a} \dev_{P(u)} \le \sum_{u \in Q_a} 2 \dev_u = 2 \dev_{Q_a}.
              \end{equation}
              Let $P$  be the pruning when \algname\ decided to split node $r$. By the definition of the splitting criterion $\spc$ (\eqref{split}), for all $v \in P\setminus \{r\}$, at that time it held that $\beta(\hat{\dev}_r - \Delta_r) \ge \hat{\dev}_{v} + \Delta_{v}$. Since $E_0$ holds, we have $\dev_r \geq \hat{\dev}_r - \Delta_r$ and $\hat{\dev}_{v} + \Delta_{v} \geq \dev_v$. Therefore,  $\forall v \in P\setminus \{r\}, \beta \dev_r \geq \dev_v$.

              Now, any node $v'\in \pout \setminus P$ is a descendant of some node $v \in P$. Since $T$ has split quality $q$ for $q<1$, we have $\dev_{v'} \leq \dev_v$. Therefore, for all $v' \in \pout$, $\dev_{v'} \leq \beta \dev_r$. In particular, $\dev_{P_a} \equiv \sum_{v\in P_a} \dev_v \leq \beta |P_a|\dev_r$. Since all nodes in $Q_a$ were split by \algname\ $\dev_{Q_a} = 0$ implies $\dev_r = 0$, therefore in all cases $\dev_{P_a} \leq \beta \theta \dev_{Q_a}$. Combining this with \eqref{pd}, we get that
    \begin{equation*}
    \begin{split}
    \dev_{\pout} & = \dev_{R} +\dev_{P_d}  + \dev_{P_a} \le \dev_R +  2 \dev_{Q_a}+ \beta \theta \dev_{Q_a} \\ & \leq \max(2,\beta \theta) \dev_Q,
    \end{split}
\end{equation*}
which completes the proof of Claim 2.

It follows from Claim 2 that to bound the approximation factor, it suffices to bound $\theta$. Let $P_d'$ be the set of nodes both of whose child nodes are in $P_d$ and denote
$n:=|P_d'|$. In addition, define
\[
  \alpha := \frac{\log(1/q)}{\log(|P_a|)+\log(1/q)} \leq 1.
\]
We now prove that $\theta \leq 2/\alpha$ by considering two complementary cases, $n \geq \alpha|P_a|$ and $n < \alpha|P_a|$. The following claim handles the first case. 

\textbf{Claim 3:} if $n \geq \alpha|P_a|$, then $\theta \leq 2/\alpha$.

\textbf{Proof of Claim 3}: 
Each node in $P_d'$
has an ancestor in $Q_a$, and no ancestor in $P_d'$. Therefore, $P_d'$ can be partitioned to subsets
according to their ancestor in $Q_a$, and each such subset is a part of some
pruning of that ancestor. Thus, by \lemref{sons}, 
$\dev_{P_d'} \leq 2 \dev_{Q_a}$. Hence, for some node $v \in P'_d$, $\dev_v \leq 2 \dev_{Q_a}/n$. It follows from the definition of $r$ that
$\dev_r \leq 2 \dev_{Q_a}/n$. Hence, $\theta \leq 2|P_a|/n$. Since $n \geq \alpha|P_a|$, we have $\theta \leq 2/\alpha$ as claimed.

We now prove this bound hold for the case $n < \alpha |P_a|$. For a node $v$ with an ancestor in $Q_a$, let $l_v$ be the path length from this ancestor to $v$, and define $L := \sum_{v \in P'_d} l_v$. We start with an auxiliary Claim 4, and then prove the required upper bound on $\theta$ in Claim 5.

\textbf{Claim 4}: $L \geq |P_a| - n$.

        \textbf{Proof of Claim 4}: Fix some $u \in Q_a$, and let $P_u(t)$ be the set of nodes in the pruning $P$ in iteration $t$ which have $u$ as an ancestor. Let $P'_u(t)$ be the set of nodes both of whose child nodes are in $P_u(t)$, and denote $L_u(t) := \sum_{v \in P'_u(t)} l_v$. We prove that for all iterations $t$, $L_u(t) \geq |P_u(t)|-2$.
        First, immediately after $u$ is split, we have $P'_u(t) = \{u\}$, $|P_u(t)| = 2$, $L_u(t) = 0$. Hence, $L_u(t) \geq |P_u(t)|-2$.
        Next, let $t$ such that $P_u(t)$ grows by 1, that is some node $u_t$ in $P_u(t)$ is split. If $u_t$ is the child of a node $v_t \in P'_u(t)$, then $P'_u(t+1) = P'_u(t) \setminus \{v_t\} \cup \{u_t\}$.  In this case, $L_u(t+1) = L_u(t) + 1$, since $l_{u_t} = l_{v_t} +1$. Otherwise, $u_t$ is not a child of a node in $P'_u(t)$, so $P'_u(t+1) = P'_u(t) \cup \{u_t\}$, and so $L_u(t+1) = L_u(t) + l_{u_t} \geq L_u(t) + 1$. Thus, $L_u(t)$ grows by at least $1$ when the size of $P_u(t)$ grows by $1$. It follows that in all iterations, $L_u(t) \geq |P_u(t)|-2$. Summing over $u \in Q_a$ and considering the final pruning, we get $L \geq |P_d| - 2|Q_a|$. 
        Now, since $|P| = |Q|$, we have $|P_d| - |Q_a| = |Q_d| - |P_a|$. From the definition of $Q_d$, $|Q_d| \geq 2|P_a|$. Therefore, $|P_d| - |Q_a| \geq |P_a|$. It follows that $L \geq |P_a| - |Q_a|$. Lastly, every node $u \in Q_a$ was split by \algname, and has at least one descendant in $P_d'$. Therefore, $|Q_a| \leq |P'_d| \equiv n$. Hence, $L \geq |P_a| - n$, which concludes the proof of Claim 4.

        \textbf{Claim 5}: if $n < \alpha|P_a|$, then $\theta \leq 2/\alpha$.

\textbf{Proof of Claim 5}: 
It follows from Claim 4 that for some node $v \in P_d'$, $l_v \geq (|P_a| - n)/n = |P_a|/n-1 > 0$, where the last inequality follows since $n < \alpha |P_a| < |P_a|$. Letting $u \in Q_a$ be the ancestor of $v$ in $Q_a$, we have by the split quality  $q$ of $T$ that $\dev_v \leq \dev_u\cdot q^{\frac{|P_a|}{n}-1}$. Since $u \in Q_a$, we have $\dev_u \leq \dev_{Q_a}$. In addition, $\dev_r \leq \dev_v$ by the definition of $r$. Therefore, $\dev_r \leq \dev_{Q_a}\cdot q^{\frac{|P_a|}{n}-1}$. 
Since $n < |P_a|\alpha$ and $q<1$, from the definition of $\alpha,\theta$ we have
\[
  \theta \le |P_a|q^{\frac{|P_a|}{n}-1}\leq 1 \leq 2/\alpha.
\]
This proves Claim 5.

Claims 3 and 5 imply that in all cases, $\theta \leq 2/\alpha$. By Substituting $\alpha$, we have that
\[
  \theta \leq 2 (\frac{\log(|P_a|)}{\log(1/q)}+1) \leq 2 (\frac{\log(K)}{\log(1/q)}+1).
  \]
Placing this upper bound in the statement of Claim 2 concludes of the lemma. 
\end{proof}

\section{An auxiliary lemma}\label{app:mathineq}

\begin{lemma}\label{lem:mathineq}
  Let $\mu > 0, \phi \ge 0, p \geq 0$. 
  If $p < \phi \ln(\mu p)$ then $p < e \phi \ln(e \mu \phi)$.
\end{lemma}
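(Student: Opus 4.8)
The plan is to work directly from the hypothesis $p < \phi\ln(\mu p)$ and squeeze out an explicit upper bound on $p$ by feeding the inequality into itself once, the only external ingredient being the elementary bound $\ln t \le t/e$, valid for all $t>0$.

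First I would clear away the degenerate cases. If $\phi = 0$ the hypothesis reads $p < 0$, impossible since $p \ge 0$; and $p = 0$ is impossible as well. Hence $p>0$ and $\phi>0$, and then $p < \phi\ln(\mu p)$ with $p,\phi>0$ forces $\ln(\mu p) > 0$, i.e. $\mu p > 1$, so all logarithms below are of positive quantities. As a preliminary observation I would apply $\ln t \le t/e$ with $t = \mu p$ to get $p < \phi\ln(\mu p) \le \phi\mu p/e$, and dividing by $p>0$ this yields $\mu\phi > e$; in particular $\ln(\mu\phi) > 1 > 0$, a fact I will use for the final monotonicity comparisons.

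The core of the argument is a single iteration. Set $L := \ln(\mu p) > 0$. The hypothesis gives $p < \phi L$, hence $\mu p < \mu\phi L$, and taking logarithms, $L = \ln(\mu p) < \ln(\mu\phi) + \ln L$. Applying $\ln L \le L/e$ gives $L < \ln(\mu\phi) + L/e$, so $L(1 - 1/e) < \ln(\mu\phi)$, i.e. $L < \tfrac{e}{e-1}\ln(\mu\phi)$. Multiplying by $\phi > 0$ gives $p < \phi L < \tfrac{e}{e-1}\phi\ln(\mu\phi)$. Finally, since $\tfrac{e}{e-1} \le e$ and $0 < \ln(\mu\phi) < \ln(e\mu\phi)$, I conclude $p < e\phi\ln(e\mu\phi)$, as claimed.

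I do not expect a genuine obstacle here; the proof is short. The only thing requiring care is the bookkeeping of positivity — ensuring $p>0$, $\mu\phi>e$, and $L>0$ before dividing, multiplying inequalities, or taking logarithms — together with remembering the clean inequality $\ln t \le t/e$ (equivalently, that $\mu\phi>e$ gives $\ln(\mu\phi)>1$), which is exactly what makes the constants collapse to the stated form.
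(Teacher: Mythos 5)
Your proof is correct, and it takes a genuinely different route from the paper. The paper argues by contrapositive: assuming $p \ge e\phi\ln(e\mu\phi)$, it splits into the cases $\mu\phi < 1$ and $\mu\phi \ge 1$, using in the second case the monotonicity of $x \mapsto x/\ln(\mu x)$ on $[e/\mu,\infty)$ together with the bound $\ln(x\ln x) \le 2\ln x$. You instead argue directly from the hypothesis, showing first that the degenerate cases vanish and that $\mu\phi > e$, then making a single bootstrap pass: setting $L = \ln(\mu p)$, the hypothesis gives $L < \ln(\mu\phi) + \ln L$, and the tangent-line bound $\ln L \le L/e$ lets you absorb the $\ln L$ term and solve for $L$, yielding $p < \tfrac{e}{e-1}\phi\ln(\mu\phi)$. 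This is in fact a slightly sharper conclusion than the lemma states (since $\tfrac{e}{e-1} < e$ and $\ln(\mu\phi) < \ln(e\mu\phi)$), and the argument avoids both the case split and the monotonicity computation. The paper's contrapositive approach has the minor advantage of not needing to rule out the degenerate $p=0$ and $\phi=0$ cases explicitly, but your version is shorter, forward-directed, and a bit more informative about constants.
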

\begin{proof}
  We assume that $p \geq e \phi \ln(e \mu \phi)$ and prove that
  $p \geq \phi \ln(\mu p)$.
  First, consider the case $\mu \phi< 1$. In this case, $p/\phi> \mu p \geq \ln(\mu p)$. Therefore, $p \geq \phi\ln(\mu p)$.   
  Next, suppose $\mu \phi\geq 1$. Define the function $f(x) := x/\log(\mu x)$, and note that it is monotone increasing for $x \ge e / \mu$. By the assumption, we have $p \geq e \phi \ln(e \mu \phi)$. In addition, $\phi \geq 1/\mu$, hence $e \phi \ln(e \mu \phi) \geq e\phi \geq  e/\mu$. 
Therefore, $f(p) \ge f(e \phi \ln(e \mu  \phi))$, and we can conclude that 
\[
\frac{p}{\ln(\mu p)} \equiv f(p) \geq f(e \phi \ln(e \mu  \phi)) \equiv \frac{e\phi \ln(e \mu \phi)}{\ln(e \mu \phi \ln(e \mu \phi))}\geq \frac{e\phi \ln(e \mu \phi)}{2\ln(e \mu \phi)} \ge e\phi / 2 \ge \phi.
\]
Note that we used the fact $\ln(x\ln(x)) \leq 2\ln(x)$, which follows since for any $x$, $\ln(x) \leq x$. This proves the claim.
\end{proof}

\section{Tightening $\Delta_v$ using empirical Bernstein bounds}\label{app:bernstein}
We give a tighter definition of $\Delta_v$, using the empirical Bernstein bound of \citet{MaurerPo09}. This tighter definition does not change the analysis, but can improve the empirical behavior of the algorithm, by allowing it to require  weight queries of fewer examples in some cases. 
The empirical Bernstein bound states that for i.i.d.~random variables $Z_1,\ldots,Z_m$ such that $\P[Z_i \in [0,1]] = 1$, with a probability $1-\delta$,
\begin{equation}\label{eq:bernstein}
  |\E[Z_1] - \frac{1}{m}\sum_{i \in [m]}Z_i| \leq \sqrt{8V_m \ln(2/\delta)/m} + 14\ln(2/\delta)/(3(m-1)),
\end{equation}
where $V_m := \frac{1}{m(m-1)}\sum_{1 \leq i < j \leq m}(Z_i - Z_j)^2$. 
The following lemma derives the resulting bound. The proof is similar to the proof of \lemref{bounds}, except that it uses the bound above instead of Hoeffding's inequality.
\begin{lemma}\label{lem:boundsB}
  Consider the same definitions and notations as in \lemref{bounds}.
        Let
      \[
        V := \frac{1}{m(m-1)}\sum_{1 \leq i < j \leq m} (|Z_i - W|-Z_i - |Z_j - W|+Z_j)^2.
        \]
	Then, with a probability at least $1-\delta$, 
	\[
          |\dev(\bw) - \hat{\dev}(\bw)| \le 	n\sqrt{8 V \ln(2/\delta)/m}   + 28\norm{\bw}_1\ln(2/\delta)/(3(m-1)).
          \]
	
\end{lemma}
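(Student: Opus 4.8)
The plan is to mimic the proof of \lemref{bounds} exactly, only replacing the use of Hoeffding's inequality with the empirical Bernstein bound \eqref{bernstein}. First I would define, for each $i \in [m]$, the random variable $Z_i' := |Z_i - W| - Z_i$, exactly as in the proof of \lemref{bounds}. The same case analysis shows $\P[Z_i' \in [-W, W]] = 1$: if $Z_i \geq W$ then $Z_i' = -W$, and if $0 \leq Z_i < W$ then $Z_i' = W - 2Z_i \in [-W, W]$. To apply \eqref{bernstein}, which requires variables in $[0,1]$, I would rescale: set $Y_i := (Z_i' + W)/(2W) \in [0,1]$ (assuming $W > 0$; the case $W = 0$ forces $\bw = \mathbf{0}$ and the claim is trivial). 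The $Y_i$ are i.i.d., so \eqref{bernstein} applies to them.

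Next I would translate the resulting bound back to $Z_i'$ and then to $\dev$. Since $Y_i - Y_j = (Z_i' - Z_j')/(2W)$, the empirical variance proxy for the $Y_i$ is $V_m^Y = V_m^{Z'}/(4W^2)$ where $V_m^{Z'} := \frac{1}{m(m-1)}\sum_{i<j}(Z_i' - Z_j')^2$; and the deviation $|\E[Y_1] - \frac1m\sum Y_i| = \frac{1}{2W}|\E[Z_1'] - \frac1m\sum Z_i'|$. Plugging into \eqref{bernstein} and multiplying through by $2W$, the factor of $W$ on the Bernstein-variance term cancels against the $1/(4W^2)$ inside the square root (leaving one factor of $W$ outside, matching $V$ below), while the lower-order term picks up a factor $2W$. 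Then, just as in \lemref{bounds}, I would use the two identities $\E[Z_1'] = \frac1n(\dev(\bw) - \normone{\bw})$ and $\frac1m\sum_i Z_i' = \frac1n(\hat\dev(\bw) - \normone{\bw})$, so that $|\E[Z_1'] - \frac1m\sum Z_i'| = \frac1n|\dev(\bw) - \hat\dev(\bw)|$. Multiplying by $n$ gives the bound on $|\dev(\bw) - \hat\dev(\bw)|$.

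Finally I would match the stated constants. Note $Z_i' - Z_j' = (|Z_i - W| - Z_i) - (|Z_j - W| - Z_j)$, which is exactly the quantity squared in the definition of $V$ in the lemma statement, so $V = V_m^{Z'}$. The Bernstein term becomes $2W \cdot \sqrt{8 (V/(4W^2))\ln(2/\delta)/m} \cdot n = n\sqrt{8 V \ln(2/\delta)/m}$, since $2W\sqrt{1/(4W^2)} = 1$; this gives the first term. The lower-order term becomes $n \cdot 2W \cdot 14\ln(2/\delta)/(3(m-1)) = 28 nW \ln(2/\delta)/(3(m-1)) = 28\normone{\bw}\ln(2/\delta)/(3(m-1))$, using $nW = \normone{\bw}$; this gives the second term. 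Combining, with probability at least $1-\delta$, $|\dev(\bw) - \hat\dev(\bw)| \leq n\sqrt{8V\ln(2/\delta)/m} + 28\normone{\bw}\ln(2/\delta)/(3(m-1))$, as claimed.

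I do not anticipate a serious obstacle: the argument is essentially identical in structure to \lemref{bounds}, and the only delicate points are the rescaling to $[0,1]$ (and the resulting bookkeeping of the $W$ factors) and the observation that the empirical variance proxy $V$ in the statement is precisely $V_m$ for the variables $Z_i'$. The one edge case to mention explicitly is $W = 0$, handled trivially. If the intent is to avoid rescaling, one can instead invoke a version of the empirical Bernstein inequality for variables bounded in an interval $[a,b]$ of width $2W$ directly; either route yields the same constants.
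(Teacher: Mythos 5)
Your proposal is correct and follows essentially the same route as the paper's proof: both reuse the $Z_i' := |Z_i - W| - Z_i$ decomposition and identities from Lemma~\ref{lem:bounds}, and replace Hoeffding with the empirical Bernstein bound after rescaling to $[0,1]$. The only difference is that you spell out the affine rescaling $Y_i = (Z_i'+W)/(2W)$ and the resulting $W$-bookkeeping explicitly (plus the trivial $W=0$ edge case), where the paper's proof compresses this into the phrase ``applying \eqref{bernstein} and normalizing by $2W$''.
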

\begin{proof}
		Let $Z_i' = |Z_i - W| - Z_i$. If $Z_i \geq W$, then $Z_i' = W$. Otherwise, we have $Z_i \leq W$, in which case $Z_i' = W - 2Z_i \geq -W$. Therefore, $\P[Z_i' \in [-W,W]] = 1$. Thus, applying \eqref{bernstein} and normalizing by $2W$, we get that with a probability $1-\delta$, 
	\[
	|\E[Z'_1] - \frac{1}{m}\sum_{i\in [m]}Z'_i| \leq \sqrt{8 V \ln(2/\delta)/m}   + 28W\ln(2/\delta)/(3(m-1)).
	\]
	
	Now, $\E[Z'_1] = \frac{1}{n}(\normone{\bw - W\cdot \mathbf{1}} - \normone{\bw}) = \frac{1}{n}(\dev(\bw)-\normone{\bw})$. In addition,
	\[
	\frac{1}{m}\sum_{i\in [m]}Z'_i = \frac{1}{m}(\normone{\bZ - W\cdot \bone}-\normone{\bZ}) = \frac{1}{n}(\hat{\dev}(\bw)-\normone{\bw}).
	\]
	Therefore, with a probability at least $1-\delta$,
	\[
	\big|\dev(\bw) - \normone{\bw} - (\hat{\dev}(\bw)-\normone{\bw})\big|  \leq n\sqrt{8 V \ln(2/\delta)/m}   + 28nW\ln(2/\delta)/(3(m-1)).
	\]
	By noting that $nW = \normone{\bw}$, this completes the proof.
\end{proof}
The tighter definition of $\Delta_v$ is obtained by taking the minimum between this bound and the one in \lemref{bounds}. Thus, we set
\[
  \Delta_v := \min(w^*_v \cdot \sqrt{2 \ln(2K \pi^2 M_v^2/ (3\delta))/M_v}, N_v\sqrt{8 V \ln(2/\delta)/M_v}   + 28w_v^*\ln(2/\delta)/(3(M_v-1))).
\]
The entire analysis is satisfied also by this new definition of $\Delta_v$. Its main advantage is obtaining a smaller value when $V$ is small. This may reduce the number of weight queries required by the algorithm in some cases.

\clearpage
\section{Full experiment results}\label{app:exps}

In this section, we provide the full results and details of all the experiments described in \secref{exps}. A python implementation of the proposed
algorithm and of all the experiments can be found at \url{https://github.com/Nadav-Barak/AWP}. For each experiment, we report the average normalized output
distance over $10$ runs, as a function of the pruning size. Error bars, represented by shaded regions, represent the maximal and minimal normalized disrepancies obtained in these runs. Note that the error bars are sometimes too small to observe, in cases where the algorithms behave deterministically or very similarly in different runs of the same experiment. 

\figref{adult} provides the full results for the experiments on the Adult data set. We give here more details on the procedure which we used to create the hierarchical tree: we started with a tree that includes only the root node, and then iteratively selected a random node to split and a random attribute to use for the split. 
For numerical attributes, the split was based on a threshold corresponding to the median value of the attribute. For discrete attributes, the attribute values were divided so that the split is fairly balanced. 
We generated several target distributions by partitioning the data set into ordered bins, where in each experiment the partition was based on the value of a different attribute. The tested attributes were all discrete attributes with a small number of possible values: ``occupation'', ``relationship'', ``marital status'' and ``education-num''. For the last attribute, all values up to $8$ were mapped to a single bin and similarly for all values from $14$ and above, to avoid very small bins. We then allocated the target weight to each bin so that each example in a given bin is $N$ times more heavier than each example in the next bin. We tested $N=2,4$, which appear in the left and right columns of \figref{adult}, respectively. 
It can be seen in \figref{adult}, that except for a single configuration ($N=2$ and the ``relationship'' attribute), \algname\ always performs better than the baselines.

We now turn to the visual data sets. In all these data sets except for MNIST, images were resized to a standard $224\times 224$ size and transformed to grayscale.
\figref{bright} provides the full results for first experiment on the MNIST and Caltech256 data sets. In this experiment,  the examples were divided into $10$ bins by image brightness, and weights were allocated such that the weight of an example is $N$ times heavier than an
          example in the next bin. The plots show results for $N=2$ (left) and for $N=4$ (right). The top row gives the results for MNIST and the bottom row gives the results for Caltech256. Here too, it can be seen that \algname\ obtains significantly better approximations of the target.

          \figref{mnistrand} provides the results for the MNIST data set for bins allocated by class, using the same scheme of weight allocation for each bin as in the previous experiment.
Results for $N=2$ (left column) and $N=4$ (right column) are reported for three random bin orders. 
\figref{caltechrand} provides the results of an analogous this experiment for the Caltech256 data set. For this data set, the 10 bins were generated by randomly partitioning the $256$ classes into 10 bins with (almost) the same number of classes in each. The allocation of classes to bins and their ordering, for both data sets, are provided as part of the submitted code. It can be seen that \algname\ obtains an  improvement over the baselines in the MNIST experiments, while the Caltech256 experiment obtains about the same results for all algorithms, with a slight advantage for \algname. 

\figref{nn} and \figref{BingSupercat} provides the results of the experiments with the data set pairs. In all experiments, the input data set was Caltech256. In each experiment, a different target data set was fixed. The weight of each Caltech256 example was set to the fraction of images from the target data set which have this image as their nearest neighbor. The target data sets were the Office dataset \citep{saenko2010adapting}, out of which the 10 classes that also exist in Caltech256 (1410 images) were used, and The Bing dataset \citep{BergamoTorresani10,Bing}, which includes $300$ images in each Caltech256 class. For Bing, we also ran three experiments where images from a single super-class from the taxonomy in \citet{griffin2007caltech} were used as the target data set. The super-classes that were tested were ``plants'' ``insects'' and ``animals''. The classes in each such super-class, as well as those in the Office data set, are given in \tabref{cat}. In these experiments as well, the advantage of \algname\ is easily observed. 

\begin{figure}[h]
	\includegraphics[width=0.5\textwidth]{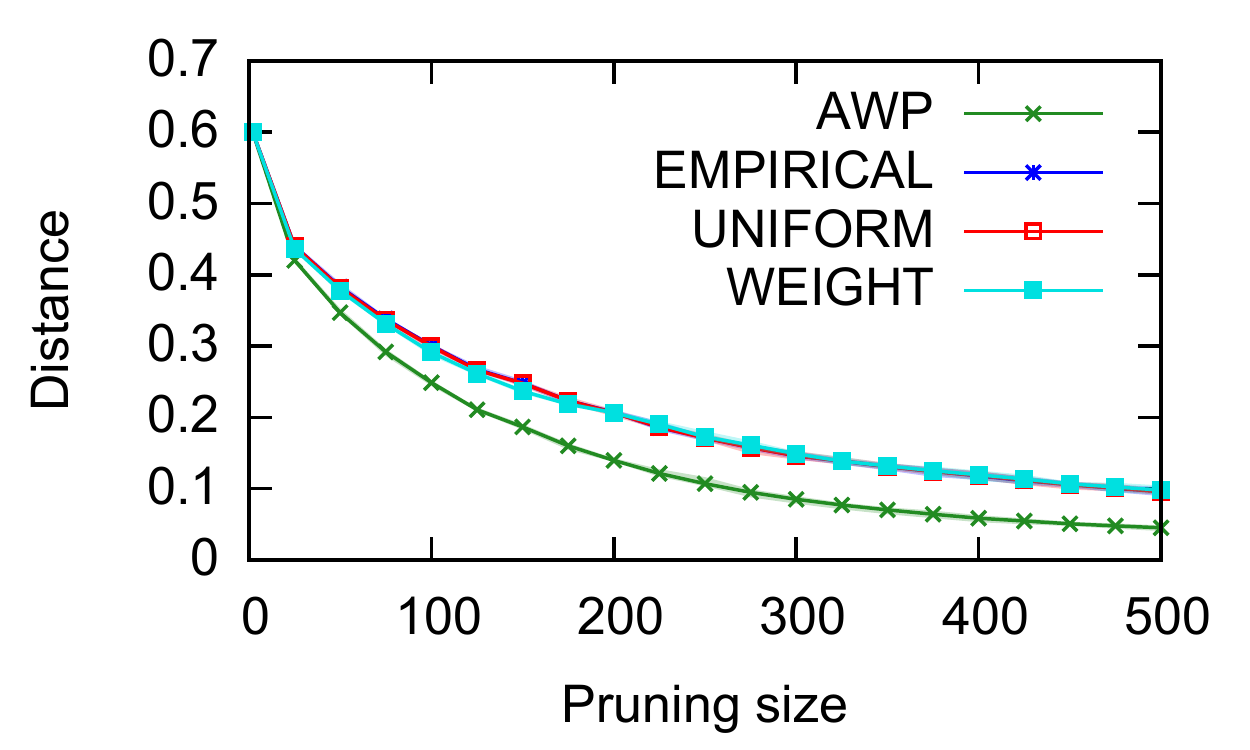}
	\includegraphics[width=0.5\textwidth]{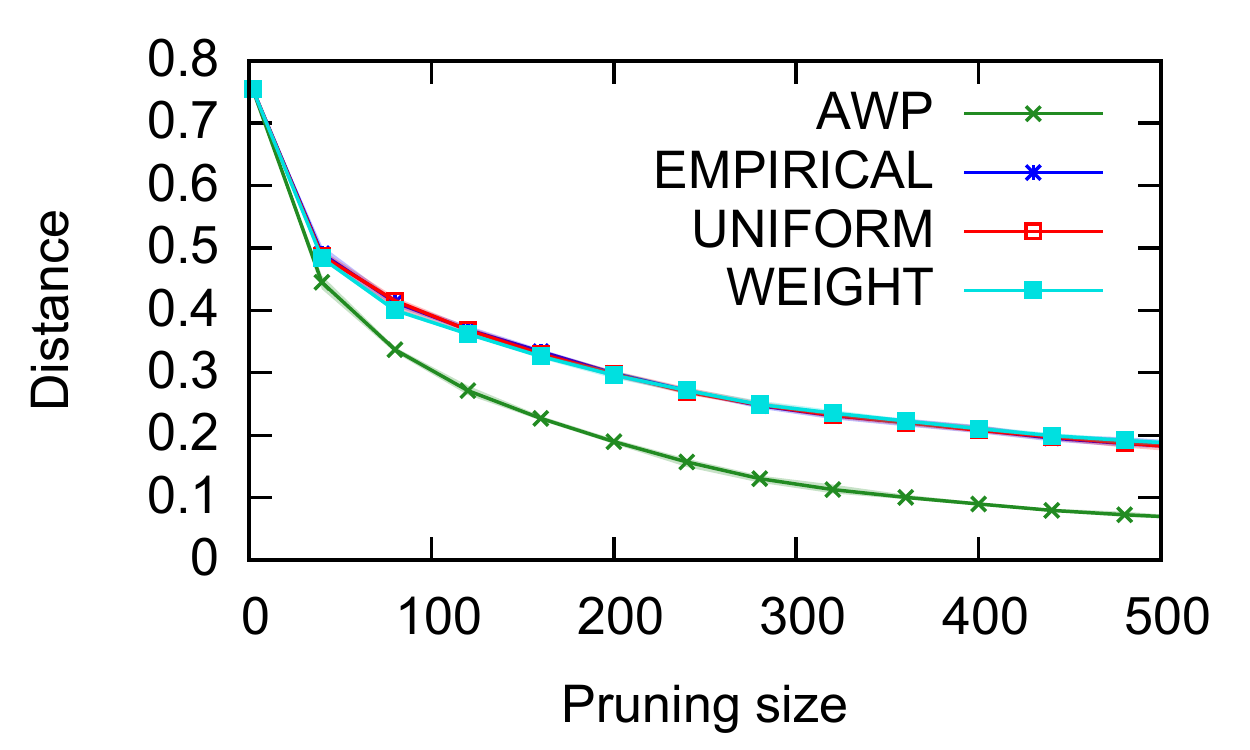} \\
	\includegraphics[width=0.5\textwidth]{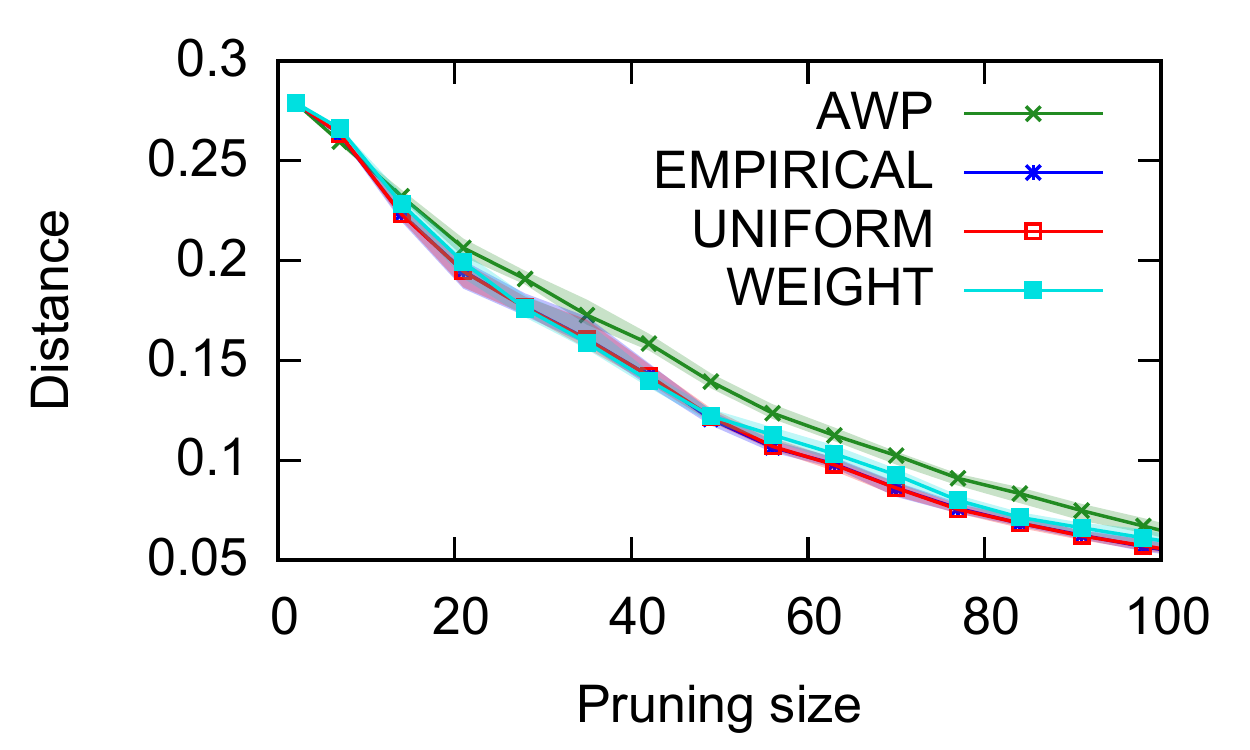}
	\includegraphics[width=0.5\textwidth]{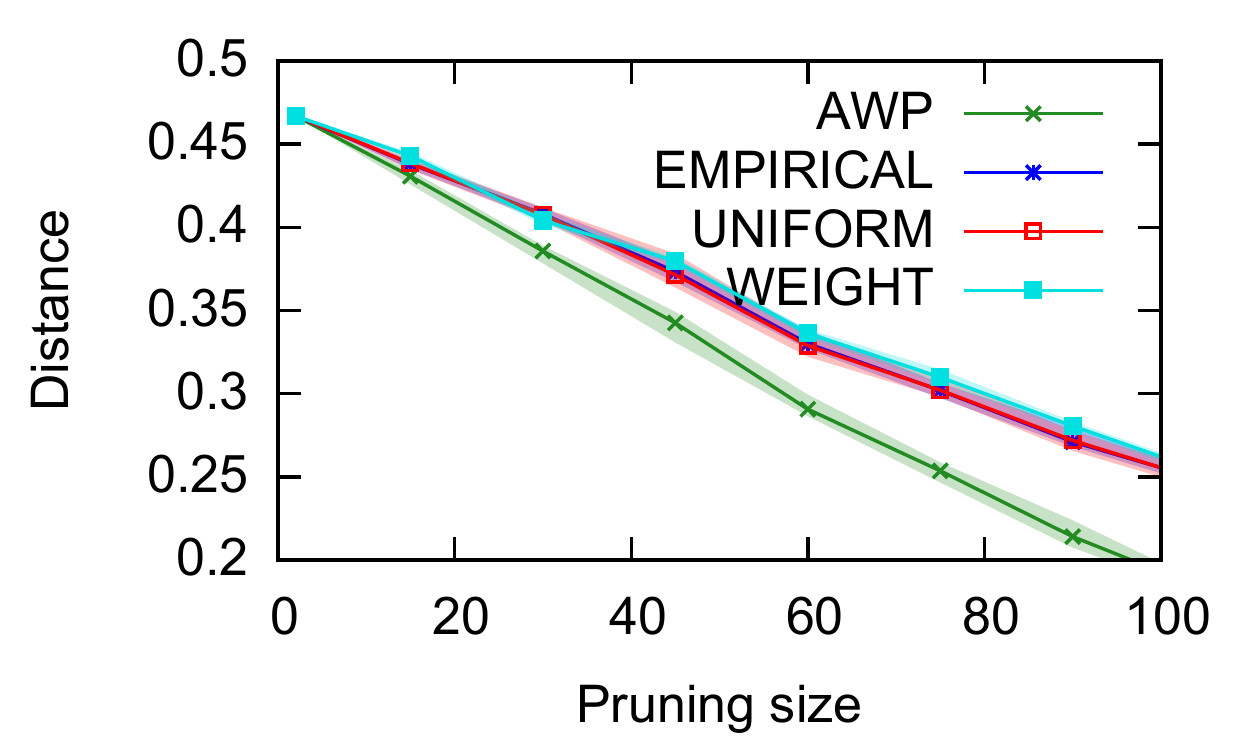} \\
	\includegraphics[width=0.5\textwidth]{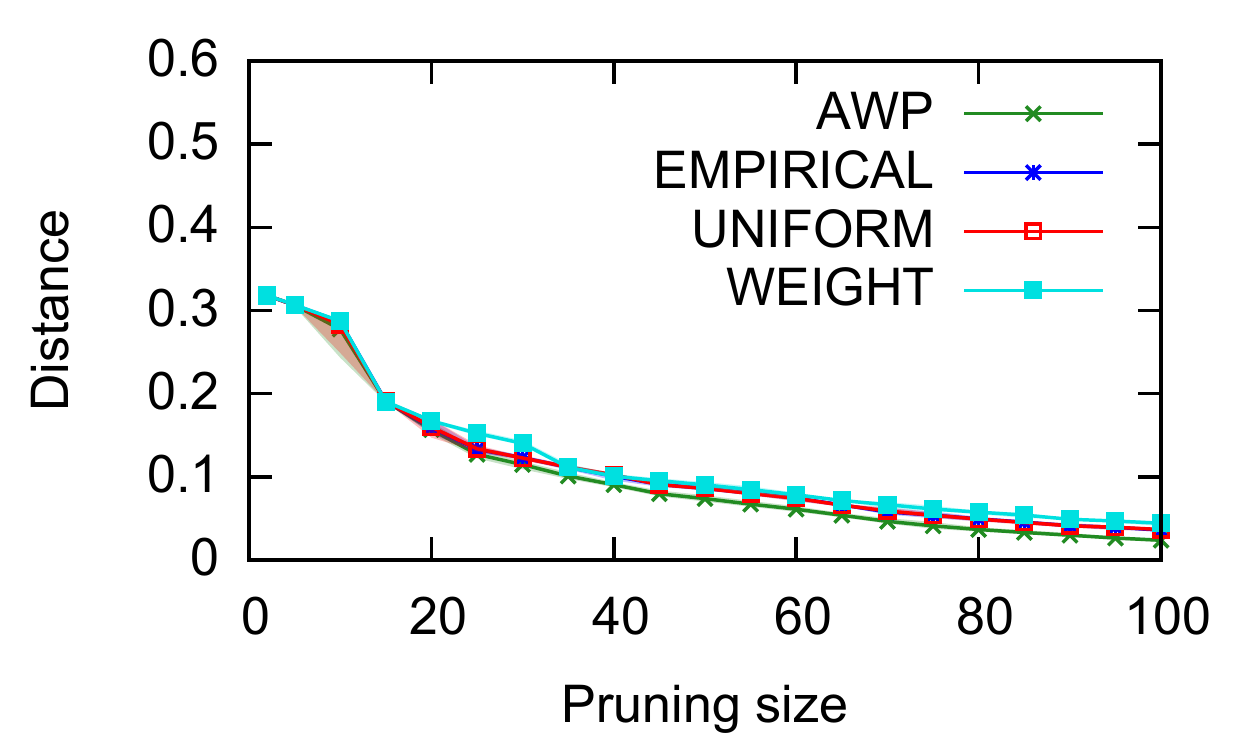}
	\includegraphics[width=0.5\textwidth]{Plots_new/AdultCensus_marital-statusN4.pdf}\\
        	\includegraphics[width=0.5\textwidth]{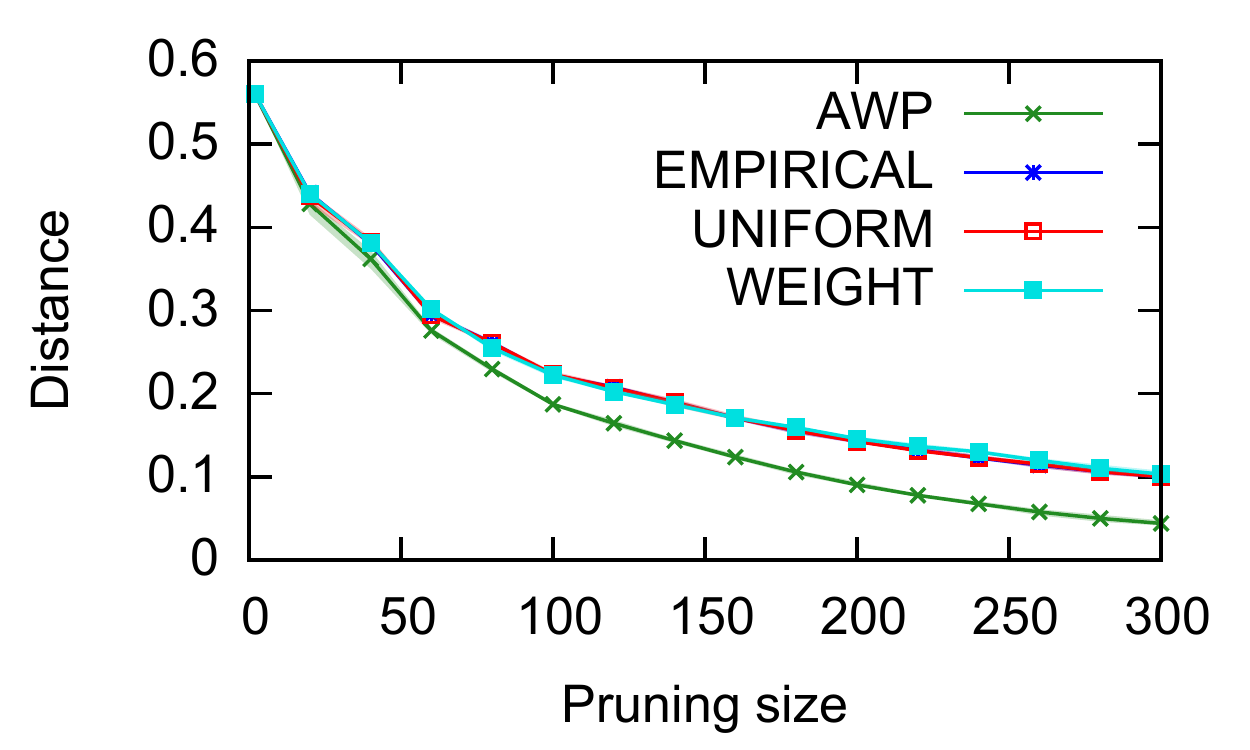}
	\includegraphics[width=0.5\textwidth]{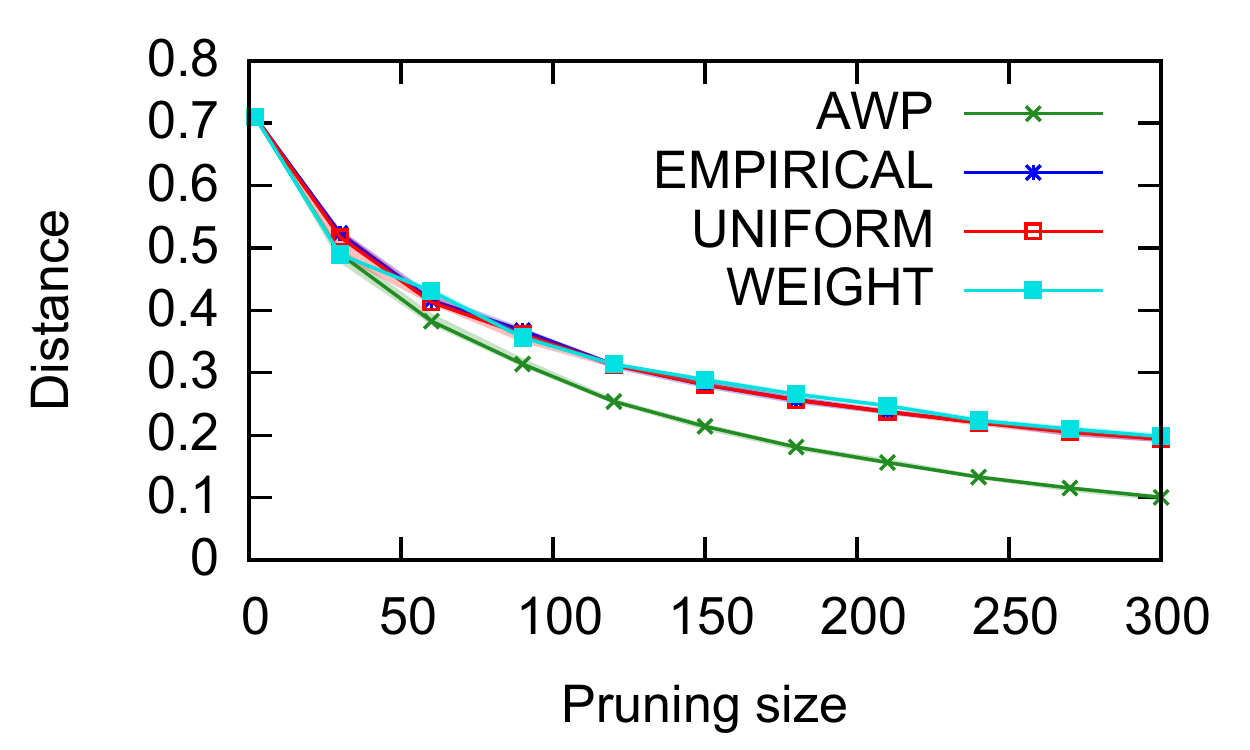}. 
	\caption{The Adult dataset experiments. Each row report the results for experiments on a different parameter in the following order: ``occupation'', ``relationship'', ``marital-status'', ``education-num''. Left: $N=2$. Right: $N=4$.}
	\label{fig:adult}
\end{figure}

\begin{figure}[h]
  \includegraphics[width=0.5\textwidth]{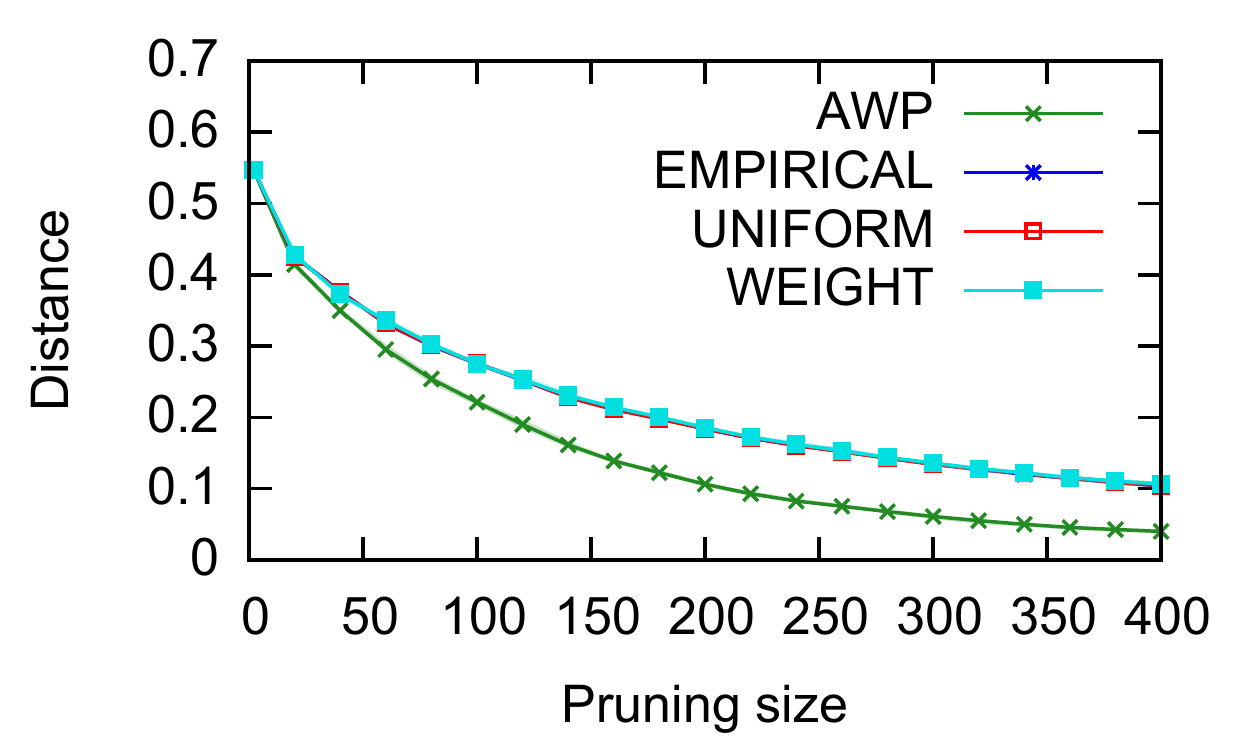}
  \includegraphics[width=0.5\textwidth]{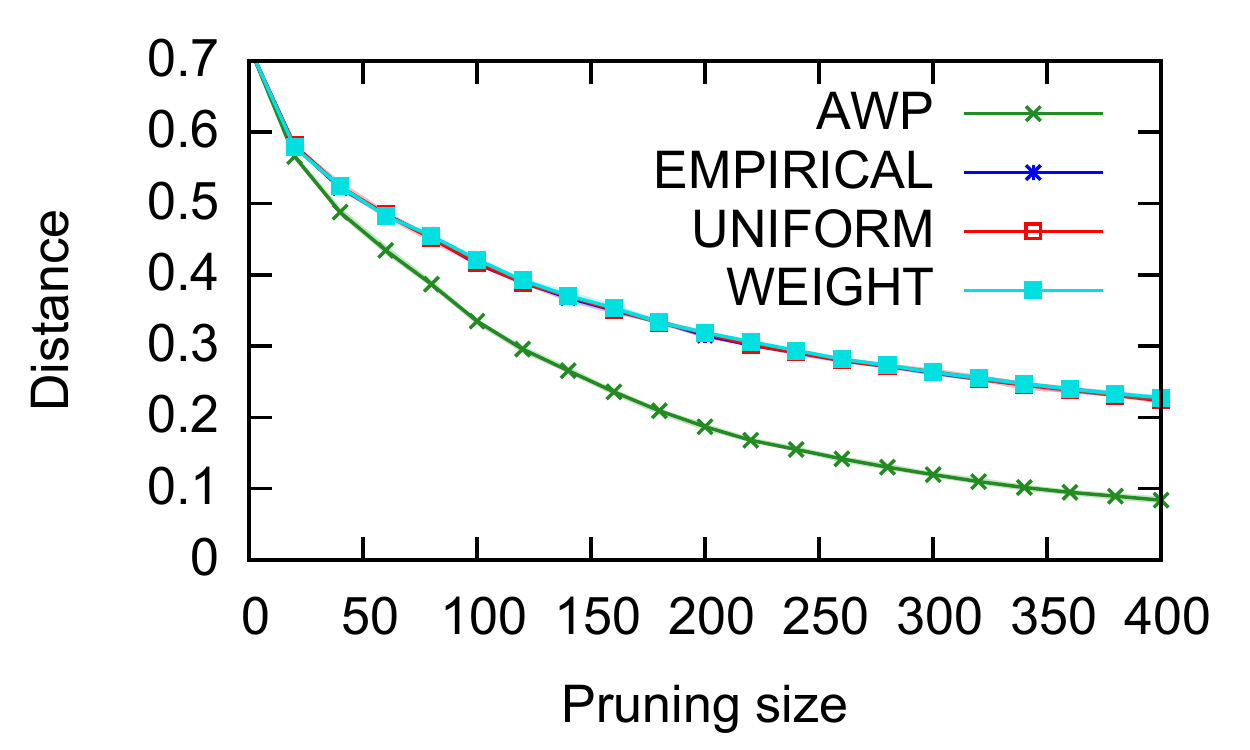}\\
  \includegraphics[width=0.5\textwidth]{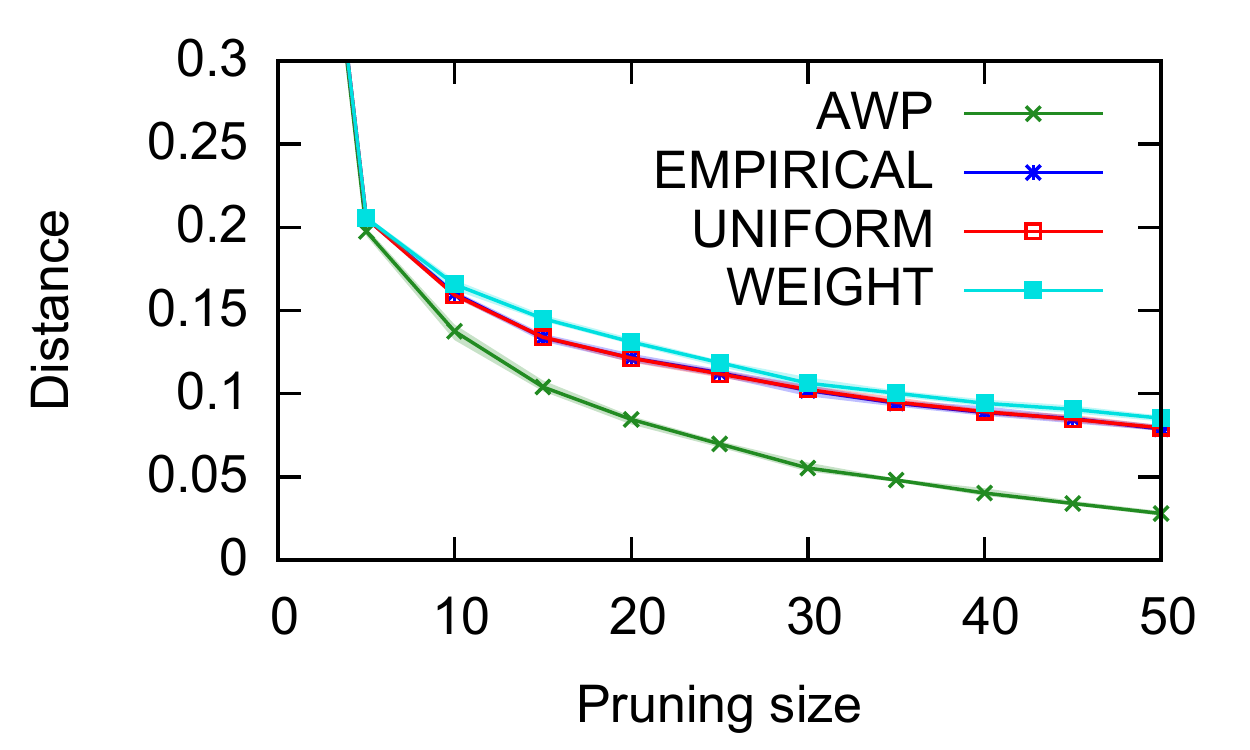}
  \includegraphics[width=0.5\textwidth]{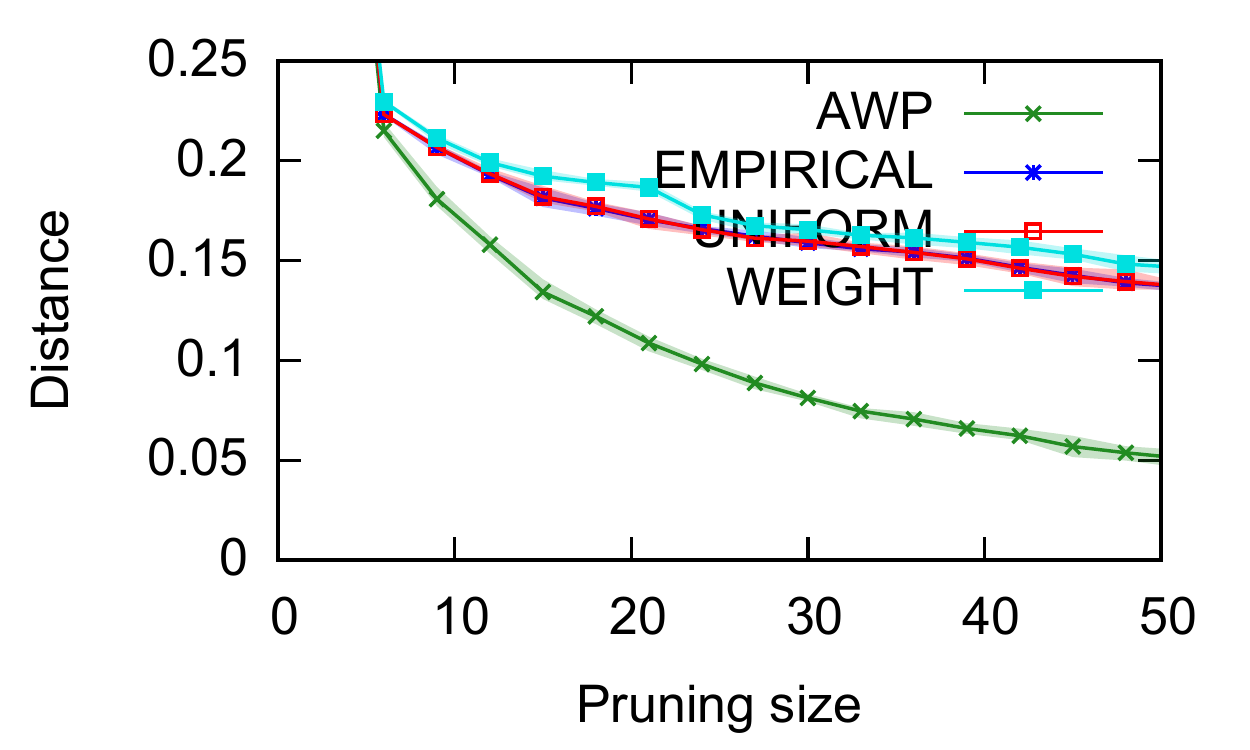}

	\caption{Experiments  with bins allocated by
          brightness. The weight of an example is $N$ times heavier than an
          example in the next bin. Left: $N=2$, right: $N=4$. Top: MNIST, Bottom: Caltech.}
        \label{fig:bright}
      \end{figure}

\begin{figure}[h]
  \includegraphics[width=0.5\textwidth]{Plots_new/Mnist_randomcat_exp2_v1.pdf}
  \includegraphics[width=0.5\textwidth]{Plots_new/Mnist_randomcat_exp4_v1.pdf}\\
  
  \includegraphics[width=0.5\textwidth]{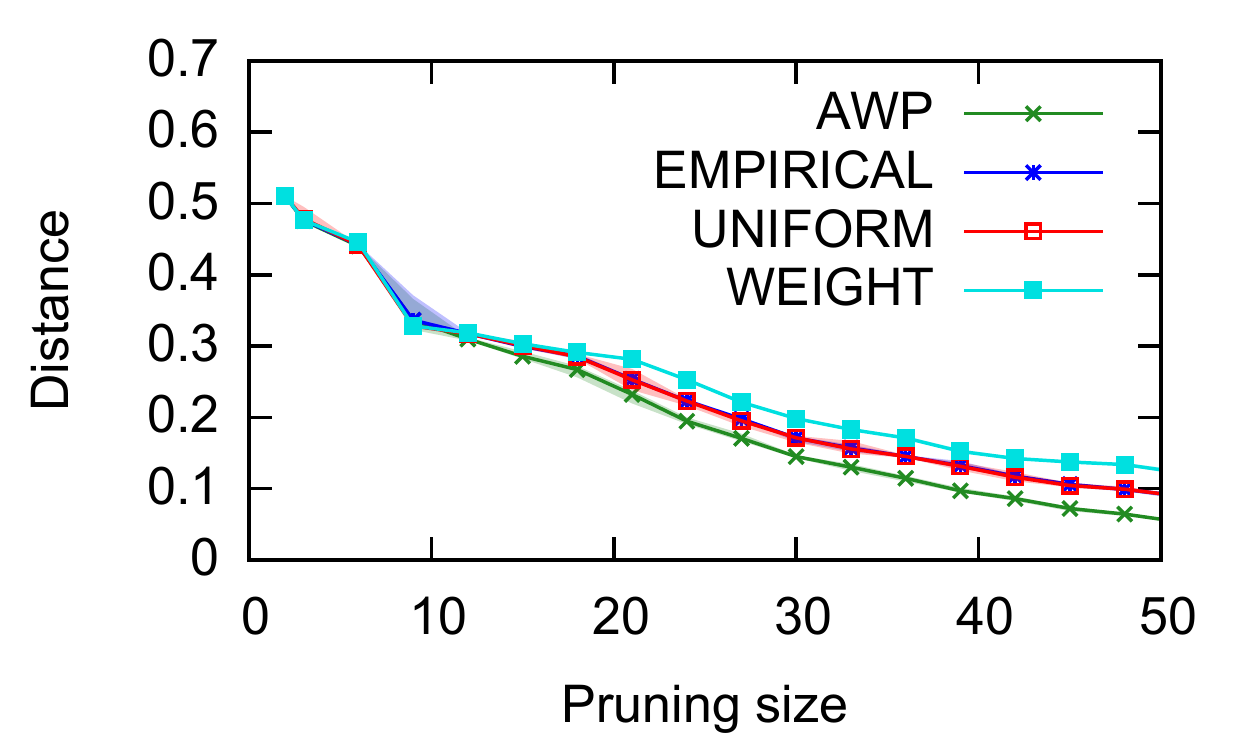}
  \includegraphics[width=0.5\textwidth]{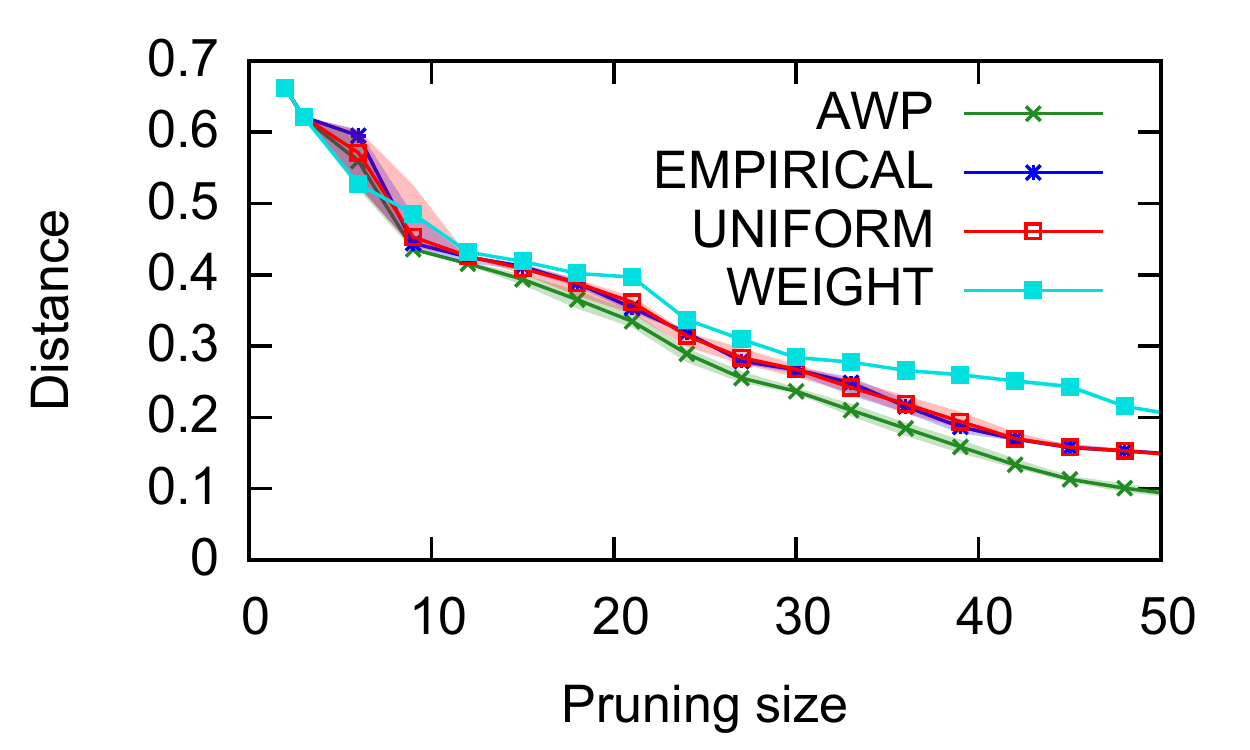}\\

  \includegraphics[width=0.5\textwidth]{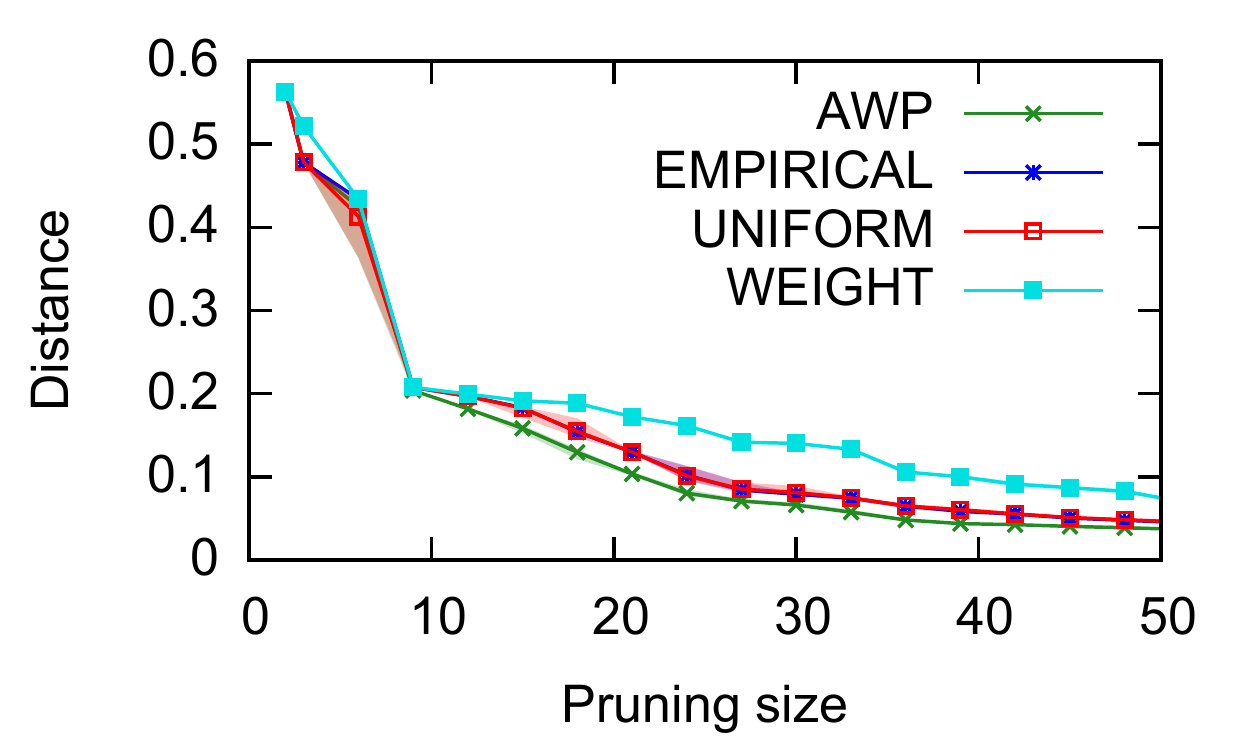}  \includegraphics[width=0.5\textwidth]{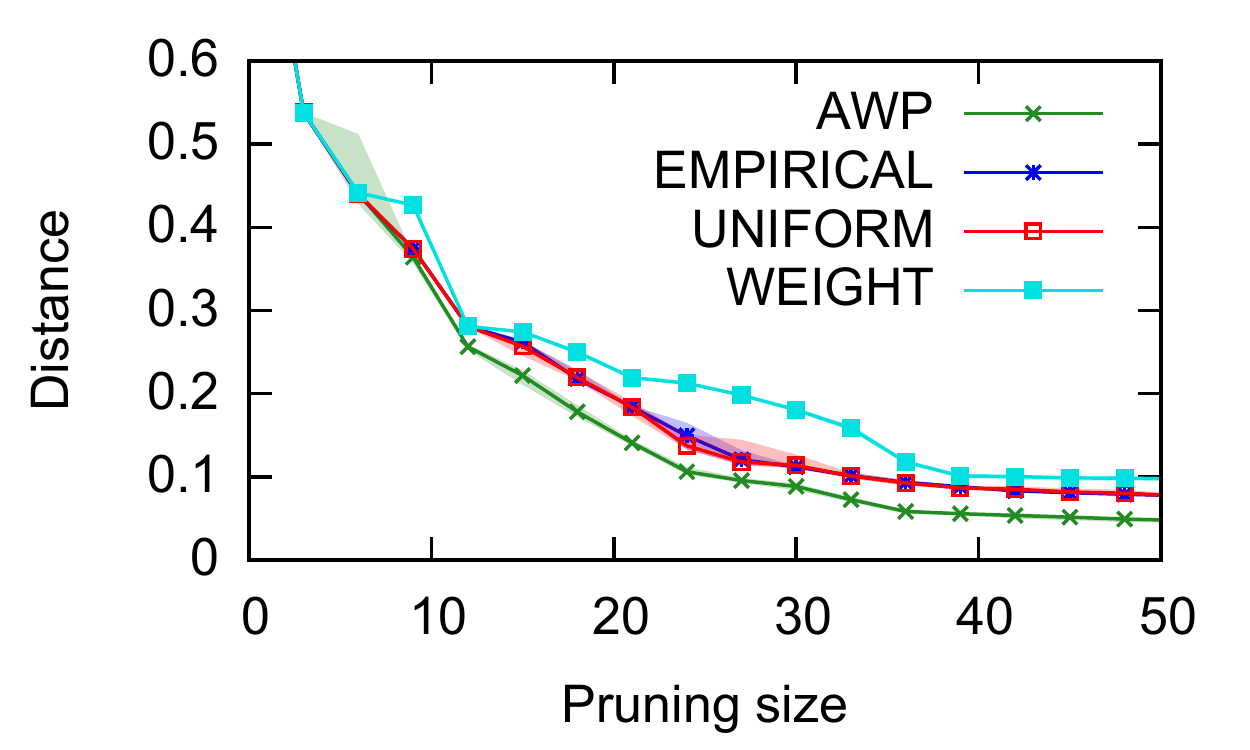}

	\caption{Experiments on the MNIST data set, with bins allocated by
          class. The weight of an example is $N$ times heavier than an
          example in the next bin. The three plots in each column show results for three
          random orders of classes. The left column shows $N=2$, and the right column shows $N=4$, for the same class orders. }
        \label{fig:mnistrand}
      \end{figure}

\begin{figure}[h]
  \includegraphics[width=0.5\textwidth]{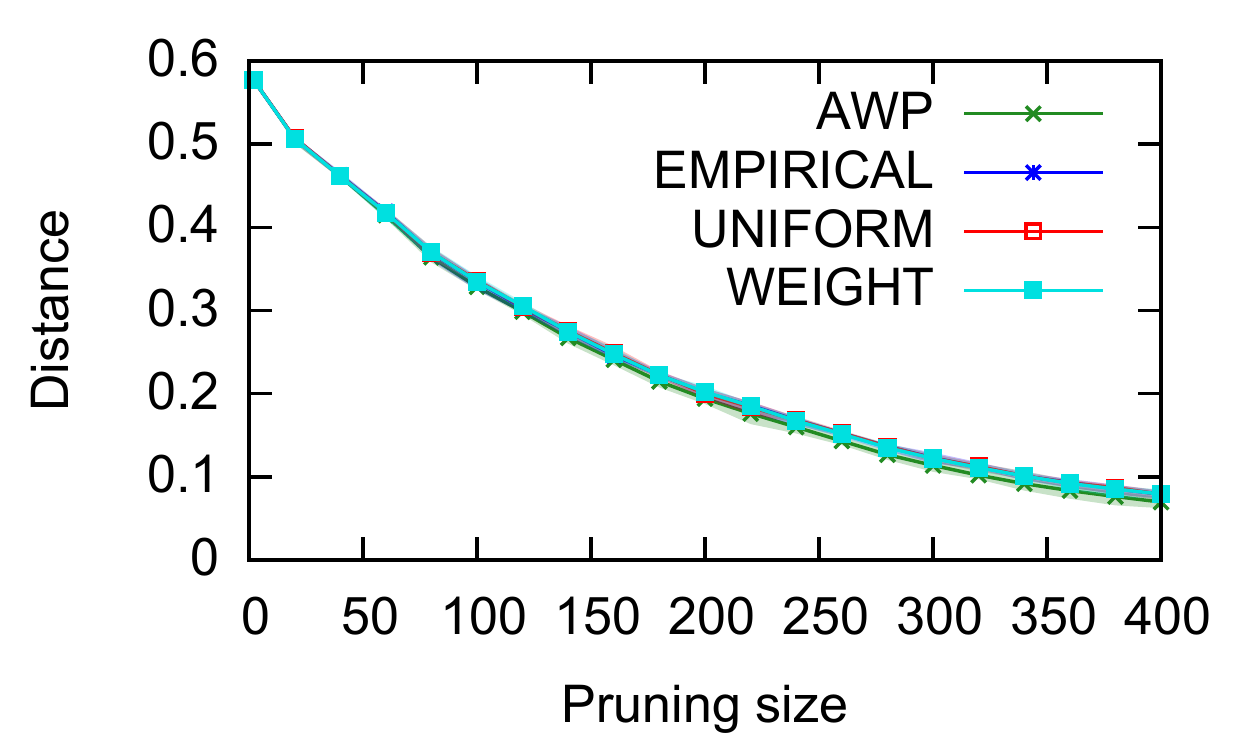}
  \includegraphics[width=0.5\textwidth]{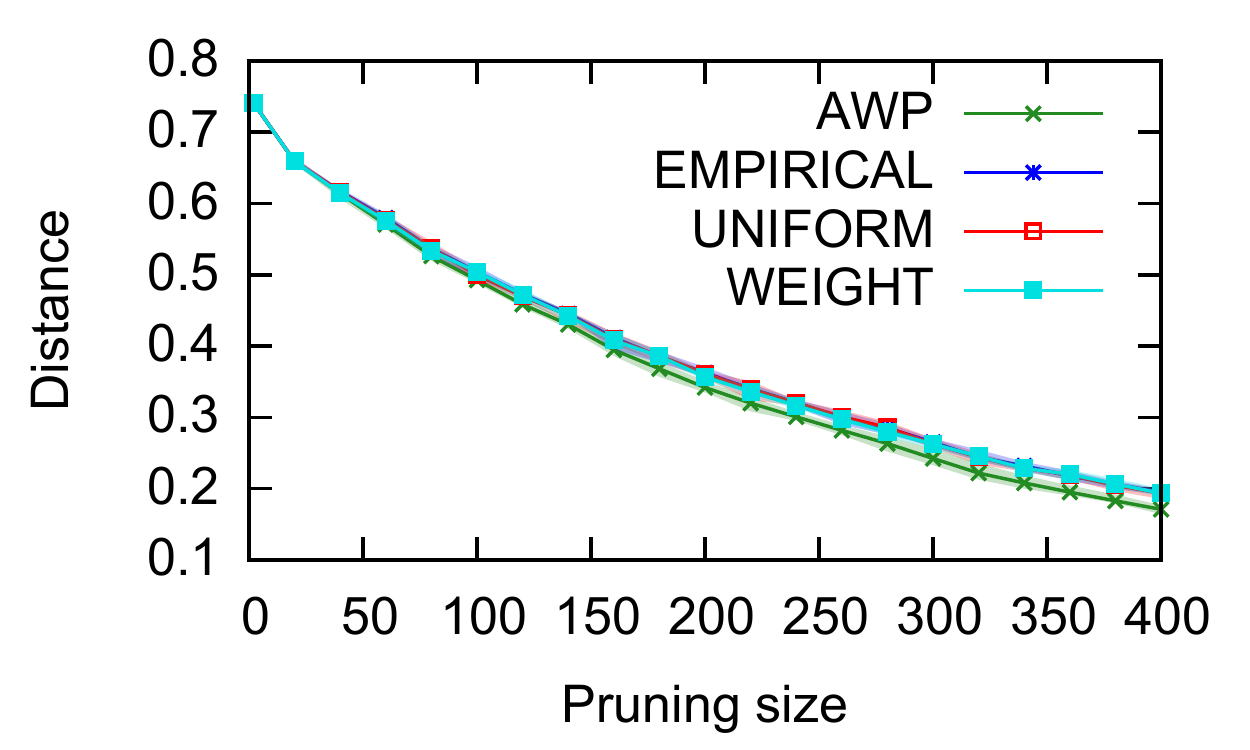}\\
  \includegraphics[width=0.5\textwidth]{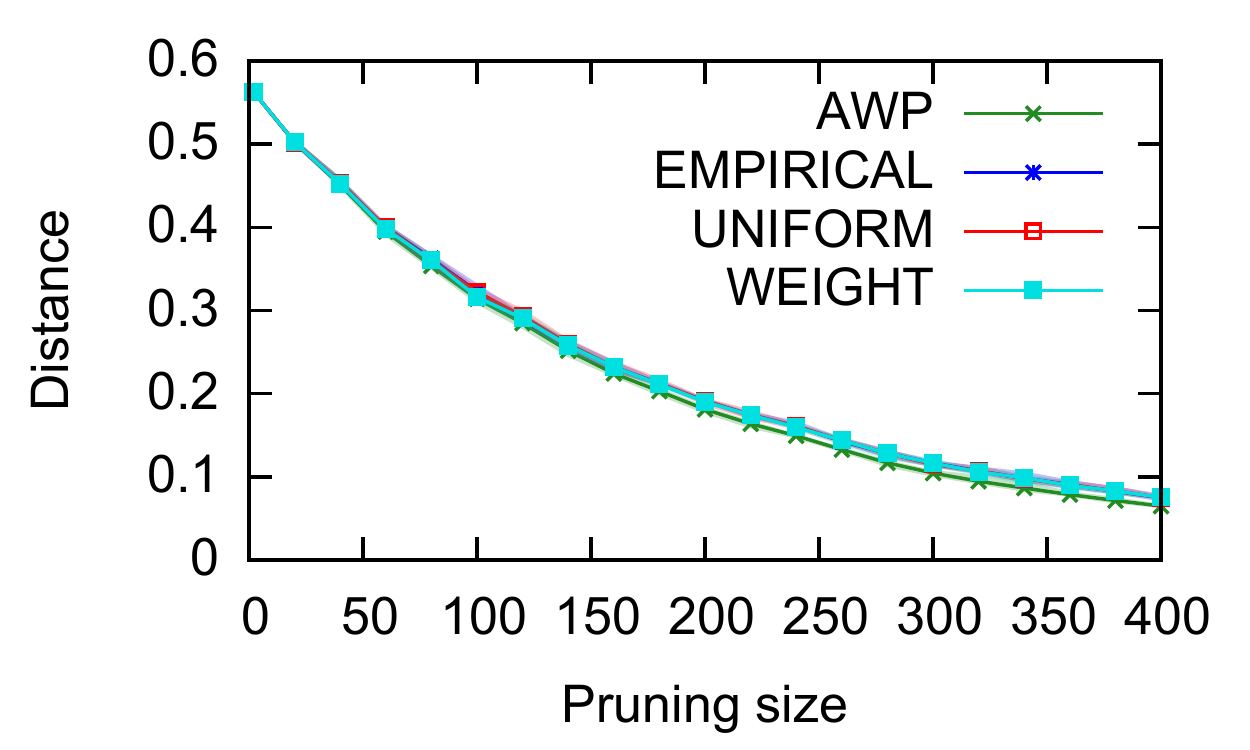}
  \includegraphics[width=0.5\textwidth]{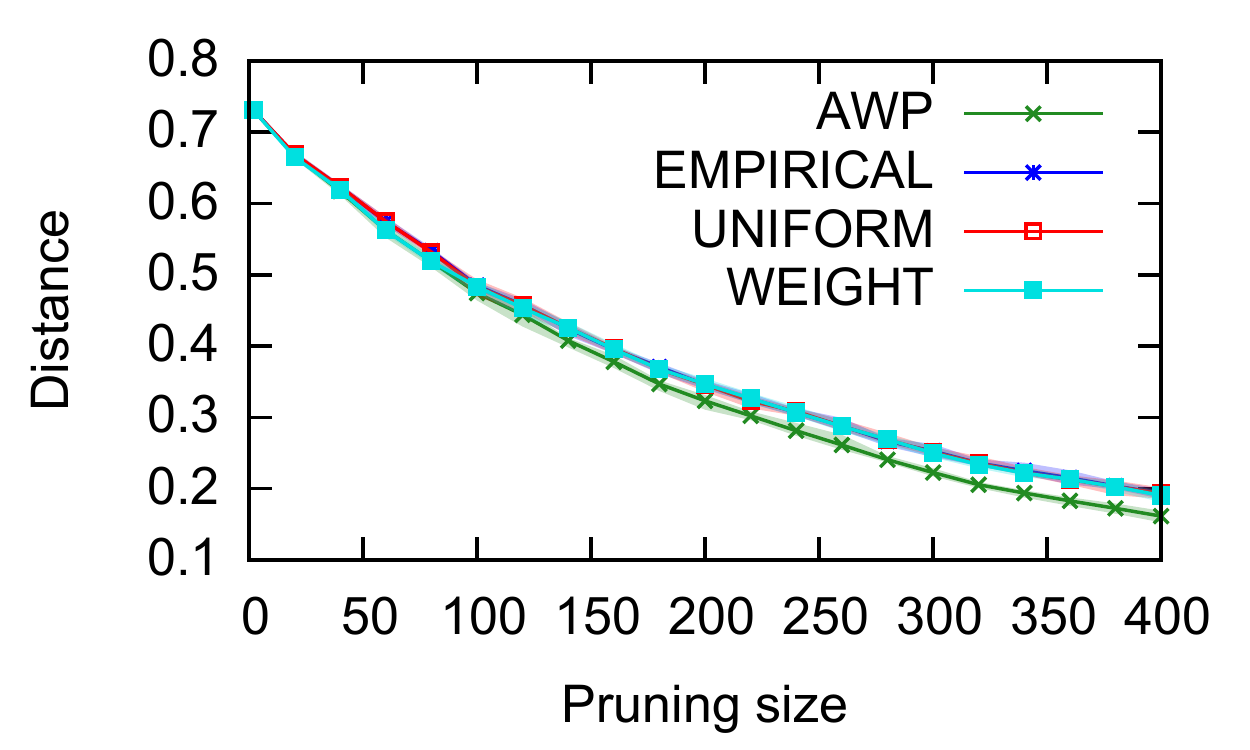}\\
  \includegraphics[width=0.5\textwidth]{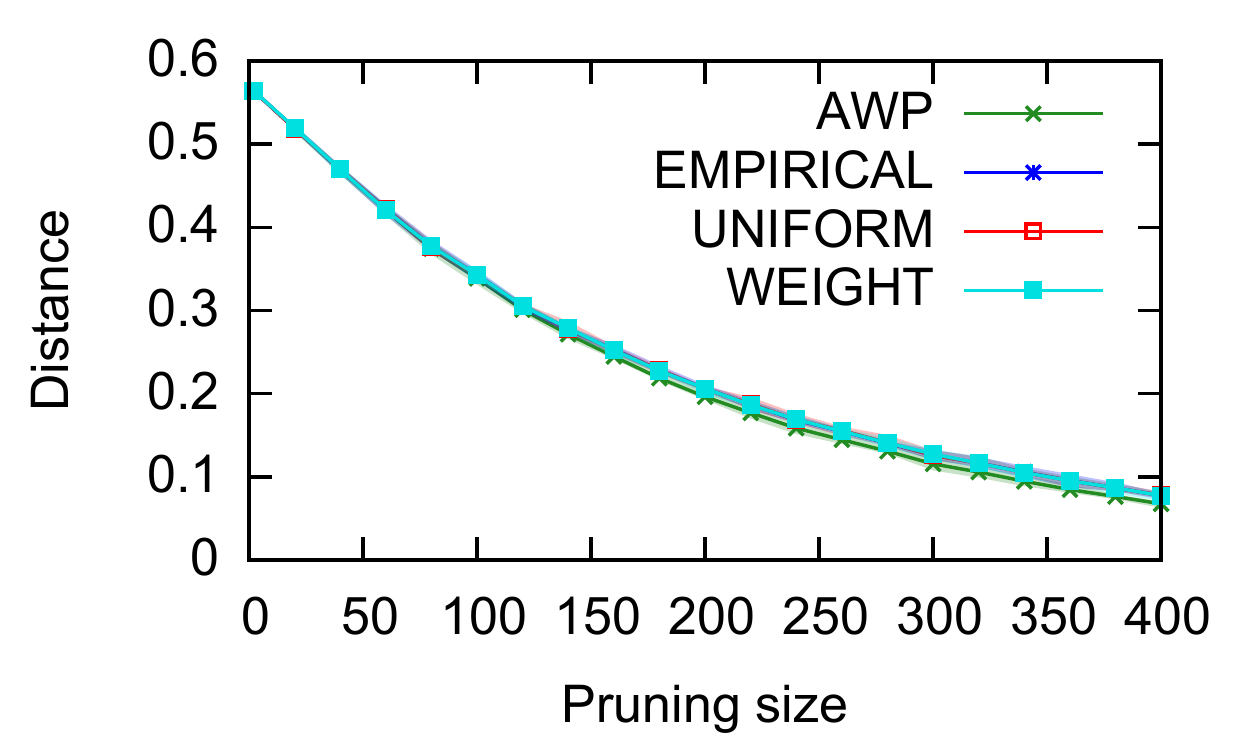}	
	\includegraphics[width=0.5\textwidth]{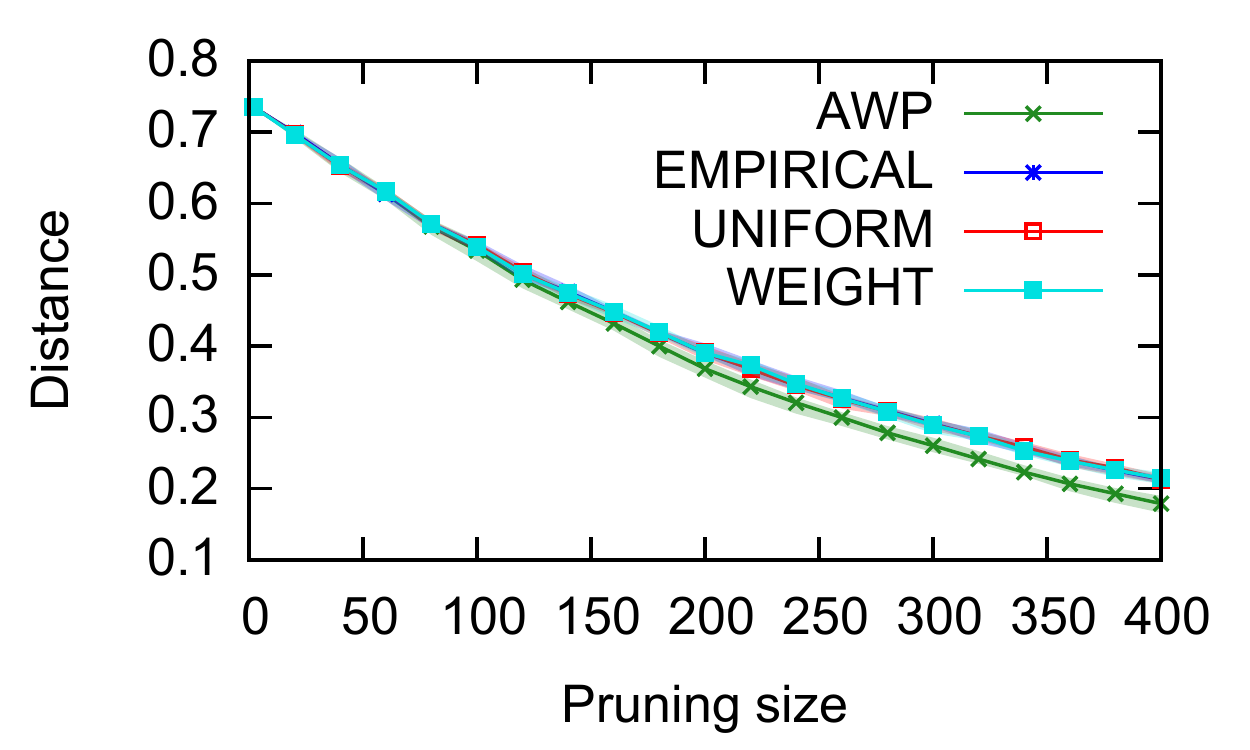}
	\caption{Experiments on the Caltech256 data set, with bins allocated by
          class. The weight of an example is $N$ times heavier than an
          example in the next bin. The three plots in each column show results for three
          random orders of classes. The left column shows $N=2$, and the right column shows $N=4$. }
        \label{fig:caltechrand}
      \end{figure}

\begin{figure}[t]
	\includegraphics[width=0.5\textwidth]{Plots_new/Caltech_1NN.pdf}
	\includegraphics[width=0.5\textwidth]{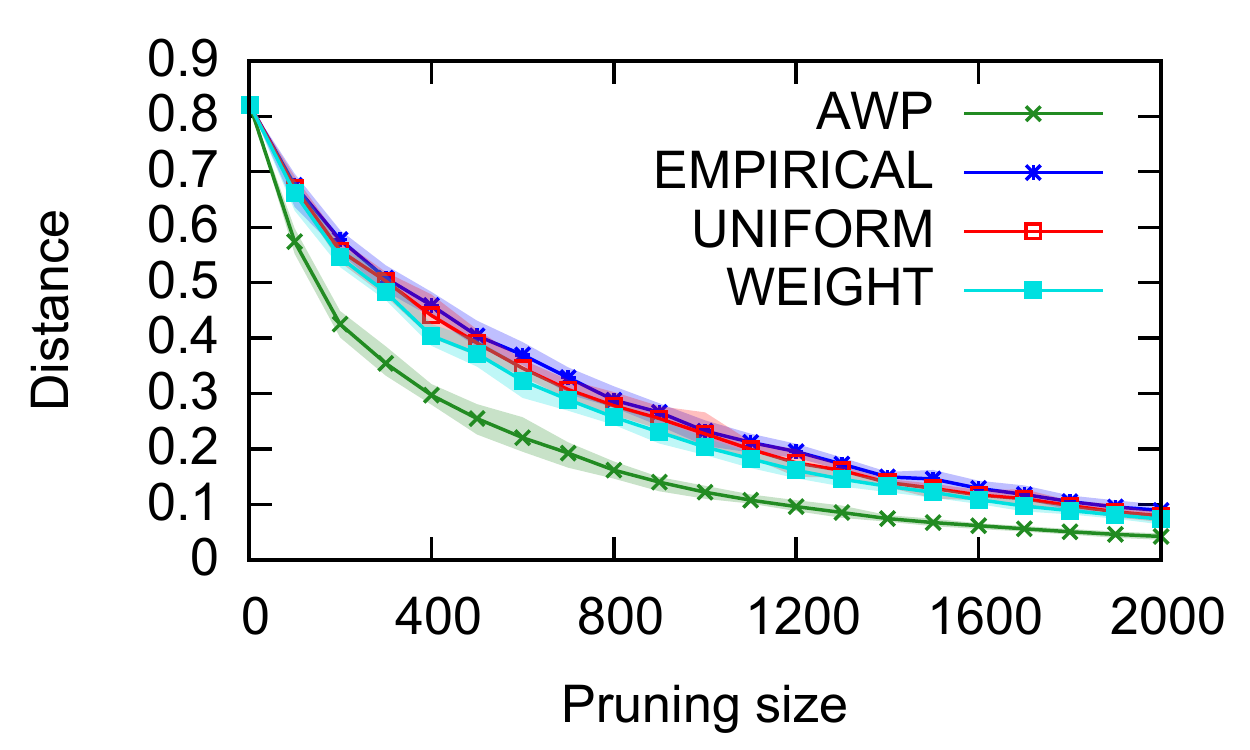}
	\caption{Experiments in which the input data set was Caltech256 and the target weights were calculated using other data sets. Left: the Office data set. Right: the Bing dataset.}
	\label{fig:nn}
\end{figure}

\begin{figure}[t]
  \begin{center}
        	\includegraphics[width=0.49\textwidth]{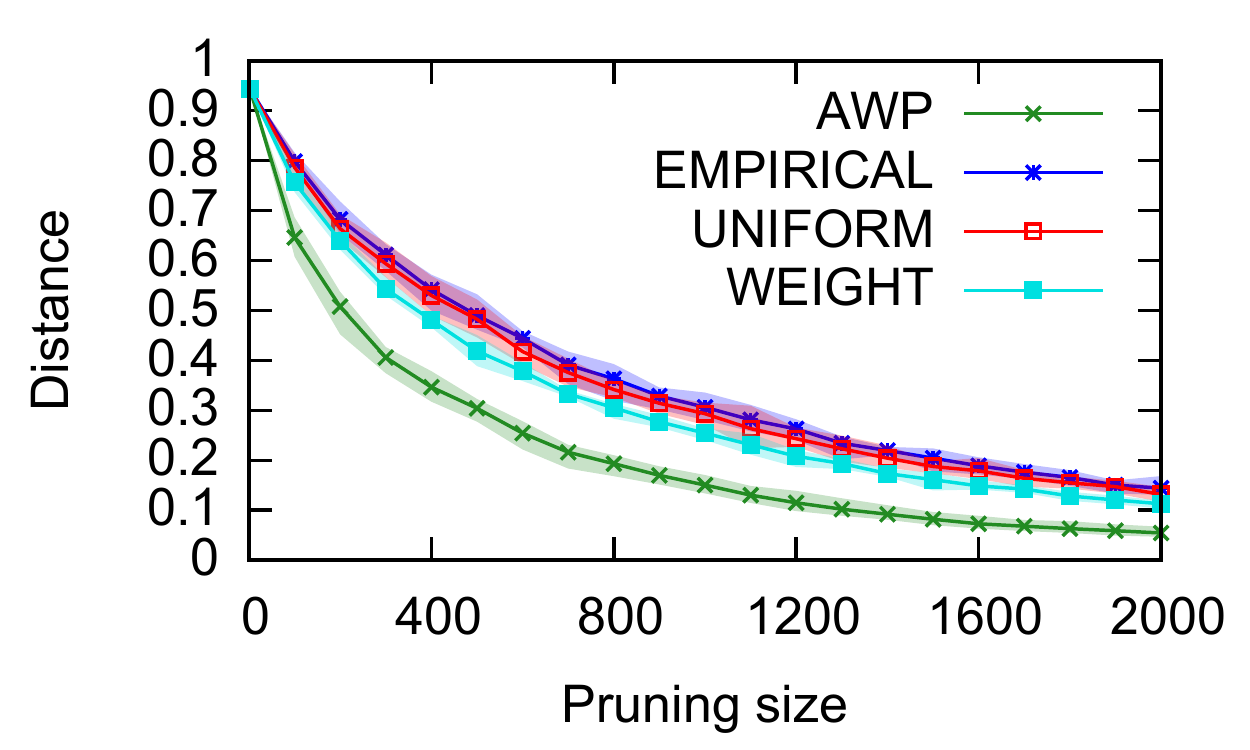}
                \includegraphics[width=0.49\textwidth]{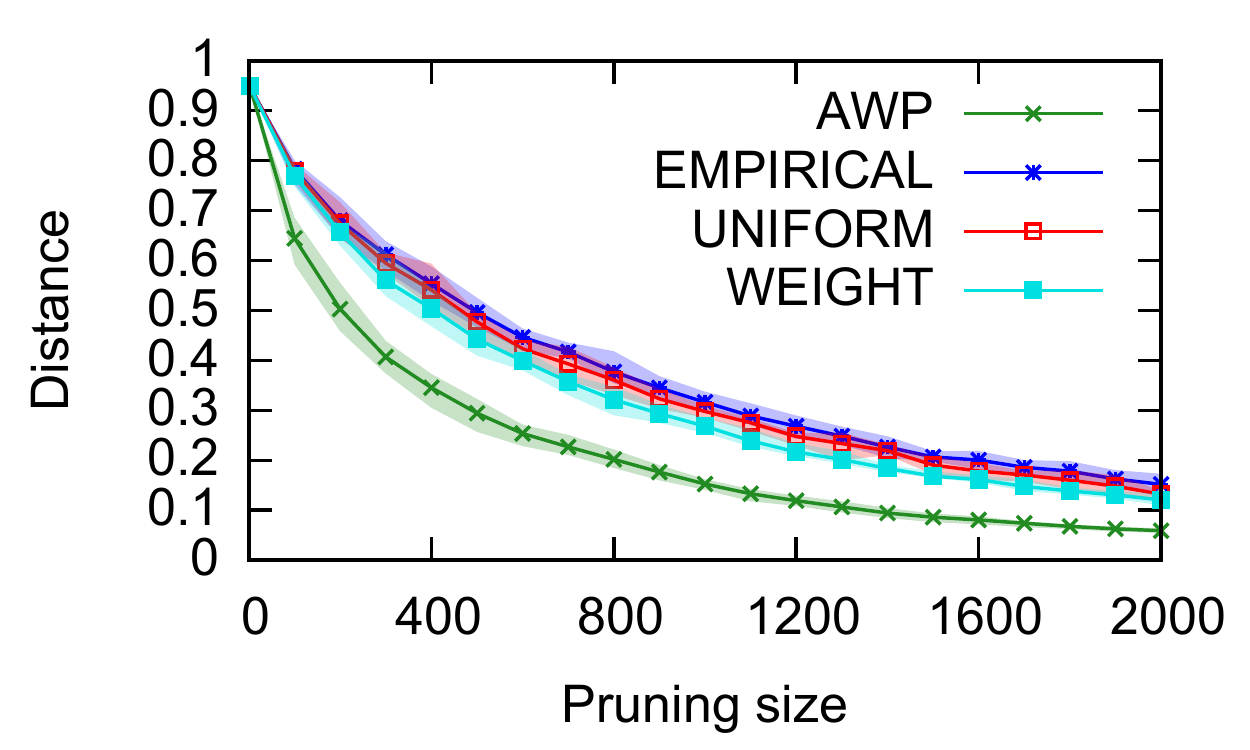}
                \includegraphics[width=0.49\textwidth]{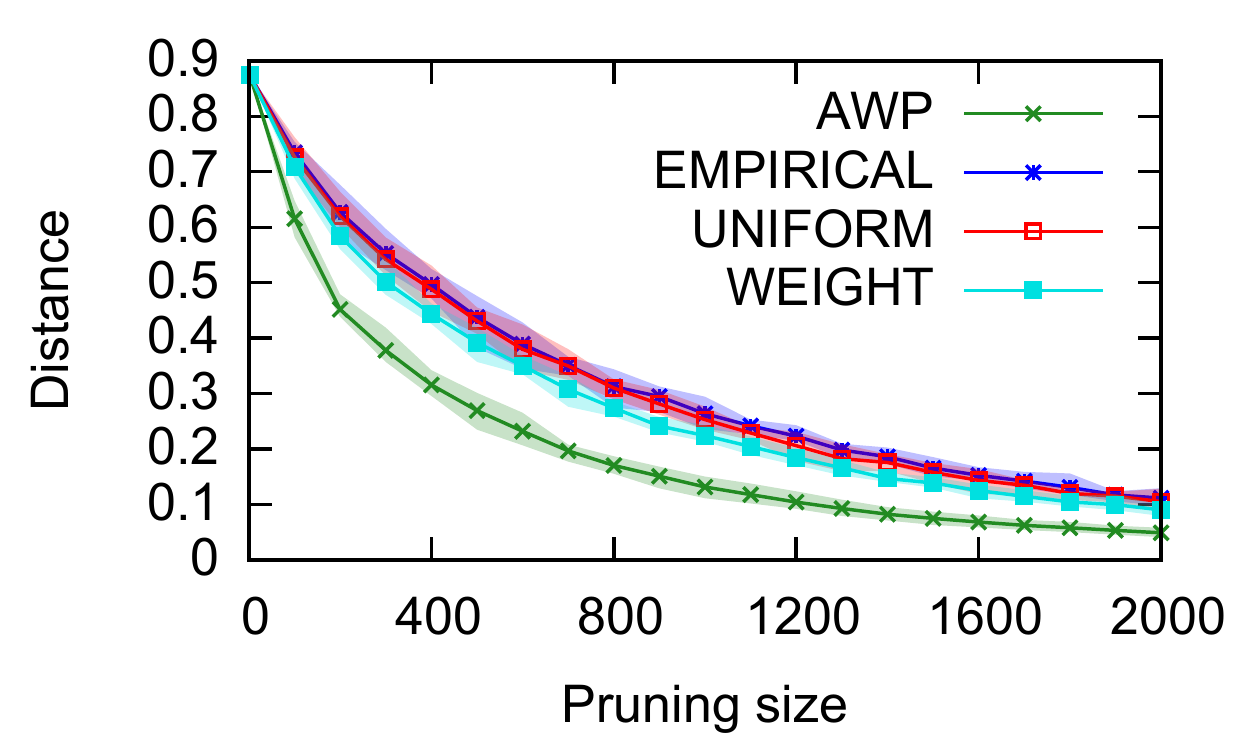}
              \end{center}
              \caption{Experiments in which the input data set was Caltech256 and the target weights were calculated using super classes from the Bing dataset. Top Left: The ``plants'' super class. Top Right: The ``insects'' super class. Bottom: The ``animals'' super classes. See \tabref{cat} for details on each super class.}
                \label{fig:BingSupercat}
\end{figure}

\begin{table}[b]
  \caption{The classes in each of the super-classes used in the reported experiments. The numbers in parentheses refer to the number of classes in the super-class.}
  \label{tab:cat}
  \begin{center}
    \resizebox{\textwidth}{!}{
  \begin{tabular}{ccc|cccc}
    Office (10) & Plants (10) & Insects (8) & \multicolumn{4}{c}{Animals (44)} \\
    \hline \\
    backpack & palm tree & butterfly & bat  & hummingbird & horse & horseshoe-crab  \\
    touring-bike & bonsai & centipede & bear  & owl & iguana & crab \\
    calculator & cactus & cockroach & camel & hawksbill & kangaroo & conch \\
    headphones & fern & grasshopper & chimp & ibis & llama & dolphin \\
    computer keyboard & hibiscus & house fly &  dog & cormorant & leopards & goldfish \\
    laptop & sun flower & praying-mantis & elephant  & duck & porcupine & killer-whale \\
    computer monitor & grapes & scorpion &  elk & goose & raccoon & mussels \\
    computer mouse & mushroom & spider &  frog & iris & skunk & octopus \\
    coffee mug & tomato &  & giraffe  & ostrich & snail & starfish \\
    video projector & water melon &  & gorilla & penguin & toad & snake  \\
     &  &  & greyhound & swan & zebra &  goat
  \end{tabular}
  }
\end{center}
\end{table}

\clearpage
\section{Average split quality in experiments}\label{app:qval}
As mentioned in \secref{exps}, in our experiments, the split quality $q$ was usually $1$ or very close to one. This shows that \algname\ can be successful even in cases not strictly covered by \thmref{main}. 
To gain additional insight on the empirical properties of the trees used in our experiments, we calculated also the average split quality of each tree, defined as the average over the set of values $\{\max(\dev_{v_R},\dev_{v_L}) / \dev_v \mid v \in T , \, \dev_v >0 \}$. The resulting values for all our experiments are reported in \tabref{averageqvalues}.

\begin{table}[h]
	\caption{The average split quality in each of the experiments. Left: experiments with target weight set by bins. In the ``classes'' experiments, the reported value is an average of the three tested random configurations. Right:  domain adaptation experiments with Caltech as the input data set and another data set as the target distribution.}
	\label{tab:averageqvalues}
	\begin{center}
          \begin{tabular}{cccc}
            Data set & Binning criterion & \multicolumn{2}{c}{Average split quality}\\

                     & &$N=2$ & $N=4$\\
            \toprule
			Adult & ``occupation'' & 0.574 & 0.544  \\
			Adult & ``relationship'' & 0.618 & 0.541  \\
			Adult & ``marital-status'' & 0.608 & 0.569  \\
			Adult & ``education-num'' & 0.587 & 0.547  \\
			MNIST & classes & 0.699 & 0.589  \\
			MNIST & brightness & 0.677 & 0.592  \\
			Caltech & classes & 0.645 & 0.652  \\
			Caltech & brightness & 0.677 & 0.641  \\
          \end{tabular}
          \hspace{2em}
		\begin{tabular}{cc}
			Target Data set & Average split quality \\
			\toprule 
			Office & 0.763  \\
			Bing & 0.756  \\
			Bing (plants) & 0.751  \\
			Bing (insects) & 0.755  \\
			Bing (animals) & 0.762  \\
		\end{tabular}
		
	\end{center}
\end{table}

\vfill
\end{document}